\algnewcommand{\algorithmicinput}{\textbf{Input:}}
\algnewcommand{\algorithmicoutput}{\textbf{Output:}}
\algnewcommand{\Input}[1]{\Statex \algorithmicinput\ #1}
\algnewcommand{\Output}[1]{\Statex \algorithmicoutput\ #1}
\newcommand{\MyParagraph}[1]{\paragraph{#1}}
\renewcommand{\geq}{\geqslant}
\renewcommand{\leq}{\leqslant}
\newtheorem{thm}{Theorem}[section]
\newtheorem{mydef}[thm]{Definition}
\newtheorem{myprop}[thm]{Proposition}
\newtheorem{mylemma}[thm]{Lemma}
\newtheorem{mythm}[thm]{Theorem}
\DeclareMathOperator*{\rank}{rank}
\DeclareMathOperator*{\diag}{diag}
\DeclareMathOperator*{\Tr}{\mathrm{tr}}
\newcommand{\calU}{\mathcal{U}}
\newcommand{\R}{\ensuremath{\mathbb{R}}}
\newcommand{\N}{\ensuremath{\mathbb{N}}}
\newcommand{\norm}[1]{\lVert #1 \rVert}
\newcommand{\bignorm}[1]{\left\lVert #1 \right\rVert}
\newcommand{\ip}[2]{\ensuremath{\langle #1, #2 \rangle}}
\newcommand{\E}{\mathbb{E}}
\newcommand{\abs}[1]{\ensuremath{| #1 |}}
\newcommand{\bigabs}[1]{\ensuremath{\left| #1 \right|}}
\newcommand{\floor}[1]{\lfloor #1 \rfloor}
\newcommand{\ind}{\mathds{1}}
\newcommand{\T}{\mathsf{T}}
\newcommand{\calD}{\mathcal{D}}
\newcommand{\calI}{\mathcal{I}}
\newcommand{\calF}{\mathcal{F}}
\newcommand{\cvectwo}[2]{\begin{bmatrix} #1 \\ #2 \end{bmatrix}}
\numberwithin{equation}{section}
\newcommand{\opnorm}[1]{\norm{#1}_{\mathrm{op}}}
\newcommand{\rmd}{\mathrm{d}}
\newcommand{\sfN}{\mathsf{N}}
\DeclarePairedDelimiterX{\infdivx}[2]{(}{)}{%
  #1\;\delimsize\|\;#2%
}
\newcommand{\SBSDE}{\mathrm{S}\textrm{-}\mathrm{BSDE}}
\newcommand{\FPINNs}{\mathrm{FS\textrm{-}PINNs}}
\newcommand{\SFPINNs}{\mathrm{S\textrm{-}FS\textrm{-}PINNs}}
\newcommand{\sbullet}{\scalebox{0.55}{$\bullet$}}
\newcommand{\Xs}{X^{\sbullet}}
\newcommand{\Ys}{Y^{\sbullet,\theta}}
\newcommand{\Zs}{Z^{\sbullet,\theta}}
\newcommand{\hs}{h^{\sbullet}}
\newcommand{\Xhats}{\hat{X}^{\sbullet}}
\newcommand{\Yhats}{\hat{Y}^{\sbullet,\theta}}
\newcommand{\Zhats}{\hat{Z}^{\sbullet,\theta}}
\newcommand{\Zbars}{\bar{Z}^{\sbullet,\theta}}
\newcommand{\mkmat}[1]{\ensuremath{\begin{bmatrix}#1\end{bmatrix}}}
\title{Integration Matters for Learning PDEs with Backward SDEs}
\author[1]{Sungje Park}
\author[1]{Stephen Tu}
\affil[1]{Department of Electrical and Computer Engineering, University of Southern California}
\date{May 5, 2025, Revised: \today}
\begin{document}

\maketitle

\begin{abstract}

Backward stochastic differential equation (BSDE)-based deep learning methods provide an alternative to Physics-Informed Neural Networks (PINNs) for solving high-dimensional partial differential equations (PDEs), offering potential algorithmic advantages in settings such as stochastic optimal control, where the PDEs of interest are tied to an underlying dynamical system.
However, standard BSDE-based solvers have empirically been shown to underperform relative to PINNs in the literature.
In this paper, we identify the root cause of this performance gap as a discretization bias introduced by the standard Euler-Maruyama (EM) integration scheme applied to one-step self-consistency BSDE losses, which shifts the optimization landscape off target. We find that this bias cannot be satisfactorily addressed through finer step-sizes or multi-step self-consistency losses. 
To properly handle this issue, we propose a Stratonovich-based BSDE formulation, which we implement with stochastic Heun integration.
We show that our proposed approach completely eliminates the bias issues faced by EM integration.
Furthermore, our empirical results show that our Heun-based BSDE method consistently outperforms EM-based variants and achieves competitive results with PINNs across multiple high-dimensional benchmarks. Our findings highlight the critical role of integration schemes in BSDE-based PDE solvers, an algorithmic detail that has received little attention thus far in the literature.

\end{abstract}

\section{Introduction}
\label{sec:intro}

Numerical solutions to 
partial differential equations (PDEs) are foundational to modeling problems across a diverse set of fields in science and engineering.
However, due to the {curse of dimensionality}
of traditional numerical methods,
application of classic solvers to high dimensional PDEs is computationally intractable.
In recent years, motivated by the success of deep learning methods, both Physics-Informed Neural Networks (PINNs)~\cite{raissi2017physicsI,raissi2017physicsII} and backward stochastic differential equation (BSDE)  methods~\cite{han2017deep,raissi2024forward,nusken2023interpolating} have emerged as %
promising alternatives to classic techniques.

Despite the widespread popularity of PINNs methods, in this paper we focus on the use of BSDE-based methods for
solving high-dimensional PDEs.
The key difference between PINNs and BSDE methods is that while PINNs 
minimize the PDE residual directly on randomly sampled collocation points, 
BSDE methods reformulate PDEs as forward-backward SDEs (FBSDEs) and simulate the resulting stochastic processes to minimize the discrepancy between predicted and terminal conditions at the end of the forward SDE trajectory~\cite{han2017deep},
or across an intermediate time-horizon via \emph{self-consistency} losses~\cite{raissi2024forward,nusken2023interpolating}.
BSDE methods are especially well-suited for high-dimensional problems where there is underlying dynamics---such as in stochastic optimal control or quantitative finance---as the crux of these methods involving sampling over stochastic trajectories rather than over bounded spatial domains.
Furthermore, BSDE methods offers a significant advantage in problems where the governing equations of the PDE are unknown and can only be accessed through simulation~\cite{wang2022modelfree},
as in model-free optimal control (cf.~\Cref{sec:appendix:model_free} for more details). 
In contrast, PINNs methods require explicit knowledge of the PDE equations, which may be either be impractical to obtain for various tasks or require a separate model learning step within the training pipeline.

Surprisingly, despite the aforementioned benefits of BSDE methods compared with PINNs, 
a thorough comparison between the two techniques remains largely absent from the literature. One notable exception is recent work by~\cite{nusken2023interpolating} which finds
that on several benchmark problems, PDE solutions found by BSDE-based approaches significantly underperform the corresponding PINNs solutions.
To address this gap, they propose a hybrid \emph{interpolating} loss between the PINNs and BSDE losses.
While promising, their result has two key disadvantages. First, the underlying cause of the performance gap between BSDE and PINNs methods is not elucidated.
Second, their method introduces a new hyper-parameter (the horizon-length controlling the level of interpolation) 
which must be tuned for optimal performance,
adding complexity to the already delicate training process~\cite{wang2023expert}.

In this work, we identify the key source of the performance gap between BSDE and PINNs methods
as the standard Euler-Maruyama (EM) scheme used in BSDE methods for stochastic integration. 
Although simple to implement, we show that the EM scheme introduces a significant discretization bias in \emph{one-step} BSDE losses, resulting in a discrepancy between the optimization objective and the true solution. 
We furthermore show that the EM discretization bias can only be made arbitrarily small by using
\emph{multi-step} BSDE losses, 
which we show both theoretically and empirically comes at a significant cost in performance.
Our analysis reveals the interpolating loss of \cite{nusken2023interpolating} as a method to find (via hyper-parameter tuning) the best suitable horizon length
(i.e., number of self-consistency steps).

As an alternative, we propose interpreting both the forward and backward SDEs as Stratonovich SDEs---as opposed to It{\^{o}} SDEs---and utilizing the stochastic Heun integration scheme for numerical integration.
We prove that the use of the stochastic Heun method completely eliminates the non-vanishing bias issues which occur in the EM formulation 
for one-step BSDE losses. 
This removes all performance tradeoffs in the horizon-length, allowing us to utilize single-step self-consistency losses. The result is a practical BSDE-based algorithm that is competitive with PINNs methods without the need for interpolating losses.
Surprisingly, prior to our work
the role of stochastic integration has received little attention in the BSDE literature; we hope that our results inspire further algorithmic and implementation level improvements for BSDE solvers.

\section{Related Work}
\label{sec:related_work}

In recent years, PINNs~\cite{raissi2017physicsI,raissi2017physicsII,raissi2019pinns,yu2018deep,sirignano2018DGM} has emerged as a popular method for solving high dimensional PDEs.
PINNs methods parameterize the PDE solution as a neural network and directly minimize the PDE residual as a loss function, provides a mesh-free method
that can easily incorporate complex boundary conditions and empirical data.
However, the PINNs approach remains an incomplete solution and still 
suffers from notable issues including
optimization challenges~\cite{krishnapriyan2021pinnfailures,rathore2024challenges,wang2022pinnsNTK,wang2024pinnsdifficulty,chuang2023predictive},
despite a concerted effort to remedy these difficulties~\cite{krishnapriyan2021pinnfailures,wang2023expert,wu2023adaptive,wu2024ropinn,rathore2024challenges,wang2022l2loss,gao2023failureinformed,mojgani2022lagrangian}.
Hence, the application of PINNs as a general purpose solver for complex high-dimensional PDEs remains an active area of research.

On the other hand, a complementary line of work proposes methods based on BSDEs to solve high-dimensional PDEs~\cite{han2017deep,han2018bsde,hure2020deep,beck2019machine,raissi2024forward,nusken2023interpolating,takanashi2022controlvariate,andersson2023deepfbsde}.
These approaches reformulate PDEs as forward-backward SDEs to derive a trajectory-based loss.
While the original deep BSDE methods~\cite{han2017deep,han2018bsde,hure2020deep}
learn separate neural networks to predict both the value and gradient at each discrete time-step, follow up work~\cite{raissi2024forward,nusken2023interpolating} uses \emph{self-consistency}, i.e., the residual of stochastic integration along BSDE trajectories, to form a loss. In this work, we exclusively focus on self-consistency BSDE losses, as they generalize the original method while allowing for a single network to parameterize the PDE solution for all space and time, similarly to PINNs.
Similar self-consistency losses have also been recently utilized to learn solutions to
Fokker-Planck PDEs~\cite{shen2022selfconsistency,boffi2023probability,li2023probabilityflow}.
Further discussions on related works and the paper's relationship to recent BSDE-based methods can be found in~\Cref{sec:appendix:related_works}.

The main purpose of our work is to understand the performance differences between PINNs and BSDE methods on high-dimensional PDEs. The most relevant work to ours is \cite{nusken2023interpolating}, which to the best of our knowledge is the only work in the literature that directly compares PINNs and BSDE methods in a head-to-head evaluation. As discussed in \Cref{sec:intro}, one of our main contributions shows that the gap in performance between PINNs and BSDE methods observed in \cite{nusken2023interpolating} is due to the choice of stochastic integration. Previous work~\cite{chassagneux2022deep} studying 
stochastic Runge-Kutta discretizations for BSDEs methods
considers only the original BSDE 
losses instead of self-consistency methods, and hence does not
uncover the issues identified in our work.

\section{Background and Problem Setup}
\label{sec:background}

We consider learning approximate solutions to the following
non-linear boundary value PDEs
\begin{align}
    R[u](x, t) &:= \partial_t u(x, t) + \frac{1}{2} \Tr(H(x, t) \cdot \nabla^2 u(x, t)) + \ip{f(x, t)}{\nabla u(x, t)} - h[u](x, t) = 0, \label{eq:main_PDE} 
\end{align}
over domain $x \in \Omega \subseteq \R^d, \,\, t \in \calI := [0, T]$
with boundary conditions
(a) $u(x, T) = \phi(x) \,\,\forall x \in \Omega$
and (b)
$u(x, t) = \phi_{b}(x, t) \,\,\forall x \in \partial \Omega, \, t \in \calI$.
Here, $u : \Omega \times \calI \mapsto \R$
is a candidate PDE solution,
$f : \Omega \times \calI \mapsto \R^d$ is a vector-field, 
$h[u] := h_0(x, t, u(x, t), \nabla u(x, t))$ 
for some
$h_0 : \Omega \times \calI \times \R \times \R^d \mapsto \R$ captures the non-linear terms,
$H(x, t) = g(x, t) g(x, t)^\T \in \R^{d \times d}$ for $g(x, t) \in \R^{d \times d}$
is a positive definite matrix-valued function,
$\phi : \Omega \times \calI$ and
$\phi_b : \partial \Omega \times \calI$
are boundary conditions, 
and both $\nabla$ and $\nabla^2$ denote spatial gradients and Hessians, respectively.
For expositional clarity, 
we will assume that $\Omega=\R^d$ and drop the second boundary condition (b), noting that all subsequent arguments can be extended in a straightforward manner for bounded domains $\Omega$.

\MyParagraph{Physics-Informed Neural Networks (PINNs).} Under the assumption of knowledge of the operator $R[u]$ and the boundary condition $\phi$, 
the standard PINNs methodology~\cite{raissi2017physicsI,raissi2017physicsII,raissi2019pinns,yu2018deep,sirignano2018DGM} for solving \eqref{eq:main_PDE} works by parameterizing the solution $u(x, t)$ in a function class $\calU := \{ u_\theta(x, t) \mid \theta \in \Theta \}$ (e.g., $\theta$ represents the weights of a neural network), and minimizing the PINNs loss over
$\calU$:\footnote{We leave consideration of the PINNs loss with non-square losses (e.g., \cite{wang2022l2loss}) to future work.}
\begin{align}
    L_{\mathrm{PINNs}}(\theta; \lambda) := \E_{(x, t) \sim \mu}[ (R[u_\theta](x, t))^2 ] + \lambda \cdot \E_{x \sim \mu'}[ (u_\theta(x, T) - \phi(x))^2 ], \label{eq:PINNs}
\end{align}
where $\mu$ is a measure over $\Omega \times \calI$ and
$\mu'$ is a measure over $\Omega$. The choice of measures $\mu, \mu'$, in addition to the relative weight $\lambda$ are hyper-parameters which must be carefully selected by the user.
To simplify exposition further, we will assume that each $u_\theta(x, t) \in \calU$ satisfies
$u(\cdot, T) = \phi$ (e.g., as in \cite{lagaris1998boundaryconditions,singh2024exact}), 
and hence the PINNs loss simplifies further to
$L_{\mathrm{PINNs}}(\theta) = \E_{(x, t) \sim \mu}[ (R[u_\theta](x, t))^2 ]$.

\MyParagraph{Backward SDEs and self-consistency losses.} While the PINNs loss has received much attention in the literature, a separate line of work has advocated for an
alternative approach to solving PDEs based on backward SDEs~\cite{han2017deep,han2018bsde,hure2020deep,beck2019machine,raissi2024forward}. 
The key idea is that 
given the 
\emph{forward (It{\^{o}}) SDE}:
\begin{align}
    \rmd X_t = f(X_t, t) \rmd t + g(X_t, t) \rmd B_t, \quad X_0 = x_0, \label{eq:forward_SDE}
\end{align}
where $(B_t)_{t \geq 0}$ is standard Brownian motion in $\R^d$, the corresponding \emph{backward (It{\^{o}}) SDE}:
\begin{align}
    \rmd Y_t = h(X_t, t, Y_t, Z_t) \rmd t + Z_t^\T g(X_t, t) \rmd B_t, \quad Y_T = \phi(X_T), \label{eq:backward_SDE}
\end{align}
is solved by setting $Y_t = u(X_t, t)$ and $Z_t = \nabla u(X_t, t)$, where $u$ is a solution to the PDE \eqref{eq:main_PDE};
this equivalence is readily shown with It{\^{o}}'s lemma.
The relationship between the forward and backward SDE has motivated several different types of BSDE loss functions for solving \eqref{eq:main_PDE}.
In this work, we focus on BSDE losses based on 
\emph{self-consistency}~\cite{raissi2024forward,nusken2023interpolating}, which uses the residual of stochastic integration along the BSDE trajectories as supervision.
Self-consistency losses are more practical than other BSDE variants as only one network is required and the weights can be shared across time (unlike e.g., the original BSDE losses~\cite{han2017deep,han2018bsde} which
learn $N$ models to predict both $Y_t$ and $Z_t$ at $N$ discretization points, and require retraining for every new initial condition $x_0$).
Specifically, we consider the following $H$-horizon (for $N = T/H \in \N_+$ and $t_n = nH$) self-consistency BSDE loss:\footnote{
We note that 
the most general form of self-consistency losses are due to \cite{nusken2023interpolating} and take on the form
$L_{\mathrm{BSDE}}(\theta;\rho) = \E_{\substack{x_0 \sim \mu_0,\\ (t_s, t_f) \sim \rho, B_t}} \frac{1}{\Delta_t^2} \left(  u_\theta(X_{t_f}, t_f) - u_\theta(X_{t_s}, t_s) - S_\theta(t_s, t_f) \right)^2$
involving a time-pair distribution $\rho$ over $\calI^2$, where $\Delta_t := t_f - t_s$. We choose to present \eqref{eq:BSDE} as it more closely aligns
with the discrete losses.
}
\begin{align}
    L_{\mathrm{BSDE},H}(\theta) := \E_{x_0, B_t} \frac{1}{NH^2} \sum_{n=0}^{N-1} \left( u_\theta(X_{t_{n+1}}, t_{n+1}) - u_\theta(X_{t_n}, t_n) - S_\theta(t_n, t_{n+1}) \right)^2, \label{eq:BSDE}
\end{align}
where $S_\theta(t_0, t_1) := \int_{t_0}^{t_1} h_\theta(X_t, t) \rmd t - \int_{t_0}^{t_1} \nabla u_\theta(X_t, t)^\T g(X_t, t) \rmd B_t$ 
with $h_\theta(x, t) := h[u_\theta](x, t)$, and 
$x_0\sim\mu_0$ is drawn from a distribution $\mu_0$ over initial conditions
for the forward SDE \eqref{eq:forward_SDE}.

\MyParagraph{Euler-Maruyama integration.}
Unlike the PINNs loss \eqref{eq:PINNs}, the BSDE loss \eqref{eq:BSDE} 
must be discretized with an appropriate stochastic integrator. 
The standard choice is to use the Euler-Maruyama (EM) method, selecting 
$N \in \N_+$ to define a step-size $\tau := T/N$ and time-points $t_n := n \tau$,
and integrating the forward and backward SDEs as follows:
\begin{align}
    \hat{X}_{n+1} &= \hat{X}_n + \tau f(\hat{X}_n, t_n) + \sqrt{\tau} g(\hat{X}_n, t_n) w_n, \quad w_n \sim \sfN(0, I_d), \quad \hat{X}_0 = x_0, \nonumber \\
    \hat{Y}^\theta_{n+1} &= \hat{Y}^\theta_n + \tau h_\theta(\hat{X}_n, t_n) + \sqrt{\tau} \nabla u_\theta(\hat{X}_n, t_n)^\T g(\hat{X}_n, t_n) w_n, \quad \hat{Y}^\theta_0 = u_\theta(x_0, 0). \label{eq:backwards_SDE_EM}
\end{align}
With this discretization, the $k$-step EM-BSDE loss (for $N/k \in \N_+$) for
$L_{\mathrm{BSDE}}(\theta)$ is:
\begin{align}
        L_{\mathrm{EM}_k,\tau}(\theta) &:= \mathop{\E}_{x_0, w_n} \frac{k}{N\tau^2} \sum_{n=0}^{\frac{N}{k}-1} \left( u_\theta(\hat{X}_{(n+1)k}, t_{(n+1)k}) - u_\theta(\hat{X}_{nk}, t_{nk}) - (\hat{Y}^\theta_{(n+1)k} - \hat{Y}^\theta_{nk}) \right)^2. \label{eq:BSDE_EM} 
\end{align}
In the \emph{one-step} (or \emph{single-step}) setting $k=1$, \eqref{eq:BSDE_EM}
reduces to the self-consistent BSDE loss from \cite{raissi2024forward},
also discussed in~\cite{chan2019machine,kapllani2024bsde};
we use the shorthand $L_{\mathrm{EM},\tau}(\theta)$ to denote this setting.
We refer to the $k > 1$ case generally as \emph{multi-step},
which is a form of interpolating loss from~\cite{nusken2023interpolating}.
Another notable case is when $k=N$, which recovers the \emph{full-horizon}
losses used in the original BSDE works~\cite{han2017deep,han2018bsde}.

\section{Analysis of One-Step Self-Consistency Losses}
\label{sec:one_step_BSDE}

In this section we conduct an analysis of both EM and Heun stochastic integration applied to the one-step self-consistency BSDE loss.

\MyParagraph{The H{\"{o}}lder space $C^{k,1}$.}
Let $f : M \mapsto M'$, where both $M, M'$ are subsets of Euclidean space (with possibly different dimension). We say that $f$ is $C^{k,1}(M, M')$ ($C^{k,1}$ when $M,M'$ are clear) if $f$ is both bounded and $k$-times continuously differentiable on $M$, and 
all $j$-th derivatives of $f$ for $j \in \{0, \dots, k\}$ are Lipschitz continuous.
The H{\"{o}}lder norm $\norm{f}_{C^{k,1}(M, M')}$ is the smallest bound possible on $\norm{f}$ and all the Lipschitz constants for $D^j f$, $j \in \{0, \dots, k\}$.

\subsection{Analysis of Euler-Maruyama for BSDE}
\label{sec:results:euler_BSDE}

We first illustrate the bias when using EM to integrate the single-step consistency loss.
To do this, we define the point-wise EM loss at resolution $\tau$
for a fixed $(x, t) \in \R^d \times \calI$ as:
\begin{align}
    \ell_{\mathrm{EM},\tau}(\theta, x, t) &:= \E_{w}\left( u_\theta(\hat{x}_{t+\tau}, t+\tau) - u_\theta(x, t) - \tau h_\theta(x, t) - \sqrt{\tau}\ip{\nabla u_\theta(x, t)}{g(x, t) w} \right)^2, \label{eq:one_step_EM_BSDE} \\
    \hat{x}_{t+\tau} &:= x + \tau f(x, t) + \sqrt{\tau} g(x, t) w, \quad w \sim \sfN(0, I_d). \nonumber
\end{align}
The point-wise EM loss \eqref{eq:one_step_EM_BSDE} is related to the one-step EM-BSDE loss via
$L_{\mathrm{EM},\tau}(\theta) = \frac{1}{N \tau^2} \sum_{n=0}^{N-1} \E_{\hat{X}_n}[ \ell_{\mathrm{EM,\tau}}(\theta, \hat{X}_n, t_n) ]$.
Our first result shows that the dominant error term (in $\tau$) of the loss \eqref{eq:one_step_EM_BSDE} suffers from an additive bias term that is introduced as a result of the EM integration.

\begin{restatable}{mylemma}{EMthm}
\label{lemma:euler_maruyama_bsde}
Suppose that $f, g$ are bounded and $u_\theta$ is $C^{2,1}$.
We have that
\begin{align}
     \tau^{-2} \cdot  \ell_{\mathrm{EM},\tau}(\theta, x, t) = (R[u_\theta](x, t))^2 + \frac{1}{2} \Tr\left[ (H(x, t) \cdot \nabla^2 u_\theta (x, t))^2 \right] + O(\tau^{1/2}),
\end{align}
where the $O(\cdot)$ hides factors depending on $d$, the bounds on $f, g$, and
$\norm{ u_\theta }_{C^{2,1}}$.
\end{restatable}
\Cref{lemma:euler_maruyama_bsde} can further be used, in conjunction
with standard results on the order $1/2$ strong convergence of 
EM integration~\cite{kloeden1992numericalsolutions},
to show the following statement regarding the full loss $L_{\mathrm{EM},\tau}(\theta)$.
\begin{restatable}{mythm}{EMfinalform}
\label{thm:EM_final_form}
Suppose that $f,g,h_\theta \in C^{0, 1}$, $u_\theta \in C^{2,1}$, and $\tau \leq 1$.
We have that:
\begin{align}
     L_{\mathrm{EM},\tau}(\theta) = \frac{1}{T} \int_0^T \E\left[ (R[u_\theta](X_t, t))^2 + \frac{1}{2} \Tr\left[ (H(X_t, t) \cdot \nabla^2 u_\theta (X_t, t))^2 \right] \right] \rmd t + O(\tau^{1/2}), \label{eq:EM_tau_one_step_limit}
\end{align}
where the $O(\cdot)$ hides constants that depend on $d$, $T$, and the H{\"{o}}lder norms
of $f$, $g$, $h_\theta$, and $u_\theta$.
\end{restatable}
\Cref{thm:EM_final_form}
implies that even if the function class $\calU$ is expressive enough to contain a PDE solution $u_{\theta_\star}$ to \eqref{eq:main_PDE} satisfying $R[u_{\theta_\star}] = 0$, 
in general $L_{\mathrm{EM},\tau}(\theta_\star) > \inf_{u_\theta \in \calU} L_{\mathrm{EM},\tau}(\theta)$, and hence
optimizing $L_{\mathrm{EM},\tau}(\theta)$ can lead to sub-optimal solutions \emph{even in the limit of infinite simulated data.}
Furthermore, this bias \emph{cannot} be resolved by simply reducing the step-size $\tau$, since the PDE residual term and the bias term are both the same order (cf.~\eqref{eq:EM_tau_one_step_limit}).
We illustrate this with a one-dimensional example in \Cref{fig:loss_EM_tau}.
Proofs for both \Cref{lemma:euler_maruyama_bsde} and \Cref{thm:EM_final_form}
are given in \Cref{sec:appendix:proofs:euler_maruyama_bsde}.

\subsection{Stratonovich BSDEs and Stochastic Heun Integration}
\label{sec:results:strat_BSDE}

Our next step is to derive a new BSDE loss based on Heun integration. The starting point is to interpret the forward SDE as a Stratonovich SDE (in contrast to \eqref{eq:forward_SDE} which is an It{\^{o}} SDE):
\begin{align}
    \rmd \Xs_t = f(\Xs_t, t) \rmd t + g(\Xs_t, t) \circ \rmd B_t, \quad \Xs_0 = x_0. \label{eq:S_forward_SDE}
\end{align}
For any $u$ that satisfies 
\eqref{eq:main_PDE}, we have 
$\rmd u(\Xs_t, t) 
    = \hs[u](\Xs_t, t) \rmd t + \nabla u(\Xs_t, t)^\T g(\Xs_t, t) \circ \rmd B_t$
with 
$\hs[u](x, t) := h[u](x, t) - \frac{1}{2}\Tr(H(x, t) \nabla^2 u(x, t))$
by the Stratonovich chain rule,
which motivates the following $H$-horizon self-consistency Stratonovich BSDE loss:
\begin{align}
    L_{\SBSDE,H}(\theta) &:= \E_{x_0,B_t} \frac{1}{NH^2} \sum_{n=0}^{N-1} \left(  u_\theta(\Xs_{t_{n+1}}, t_{n+1}) - u_\theta(\Xs_{t_n}, t_n) - S^{\sbullet}_\theta(t_n, t_{n+1}) \right)^2,\label{eq:S_BSDE}
\end{align}
with
$S^{\sbullet}_\theta(t_0, t_1) := \int_{t_0}^{t_1} \hs_\theta(\Xs_t, t) \rmd t + \int_{t_0}^{t_1} \nabla u_\theta(\Xs_t, t)^\T g(\Xs_t, t) \circ \rmd B_t$
where $\hs_\theta(x, t) := \hs[u_\theta](x, t)$.
As \eqref{eq:S_BSDE} 
utilizes Stratonovich integration,
the Euler-Maruyama scheme cannot be used for integration, as it converges to the It{\^{o}} solution.
Hence, we will consider the stochastic Heun integrator~\cite{rumelin1982numericalintegration,kloeden1992numericalsolutions} which has the favorable property of converging to the Stratonovich solution.
We proceed first by defining the augmented forward and backward SDE process $\Zs_t := (\Xs_t, \Ys_t)$:
\begin{align}
    \rmd \cvectwo{ \Xs_t }{ \Ys_t } &= \cvectwo{ f(\Xs_t, t) }{ \hs_\theta(\Xs_t, t) } \rmd t + \cvectwo{ g(\Xs_t, t) }{ \nabla u_\theta(\Xs_t, t)^\T g(\Xs_t, t) } \circ \rmd B_t, \quad \cvectwo{X_0}{\Ys_0} = \cvectwo{x_0}{u_\theta(x_0, 0)},  \label{eq:forward_backwards_strat} \\
    &=: F_\theta(\Zs_t, t) \rmd t + G_\theta(\Zs_t, t) \circ \rmd B_t. \nonumber
\end{align}

\begin{figure}[t]
    \centering
    \begin{subfigure}[t]{0.48\textwidth}
        \centering
        \includegraphics[width=0.95\textwidth]{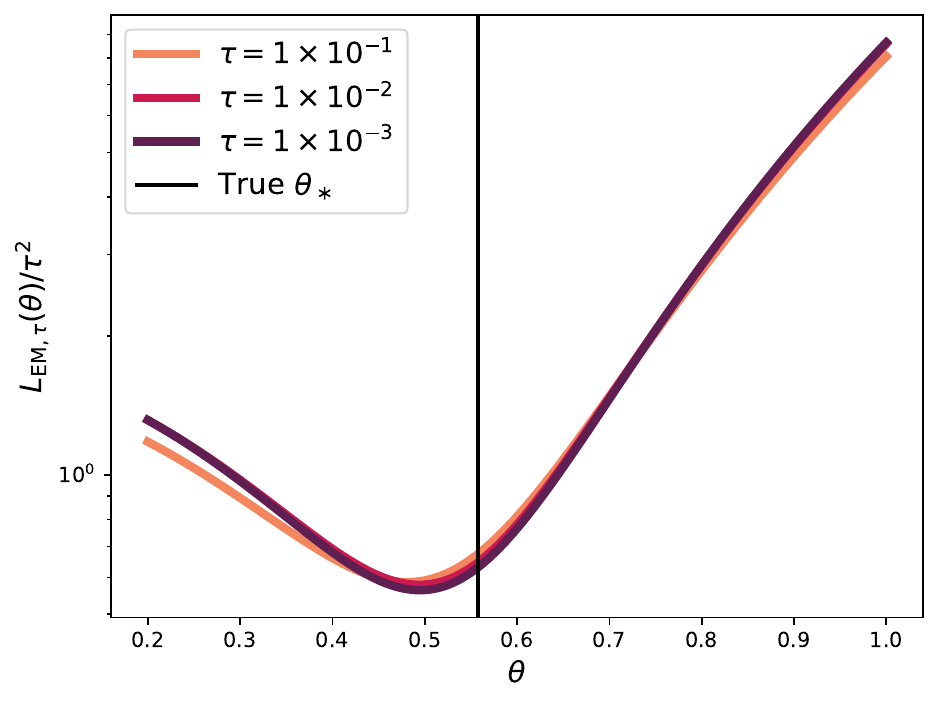}
        \caption{Plot of $L_{\mathrm{EM},\tau}(\theta)$
        at $\tau \in \{10^{-1}, 10^{-2}, 10^{-3}\}$ levels of discretization.}
        \label{fig:loss_EM_tau}
    \end{subfigure}
    \hfill
    \begin{subfigure}[t]{0.48\textwidth}
        \centering
        \includegraphics[width=0.95\textwidth]{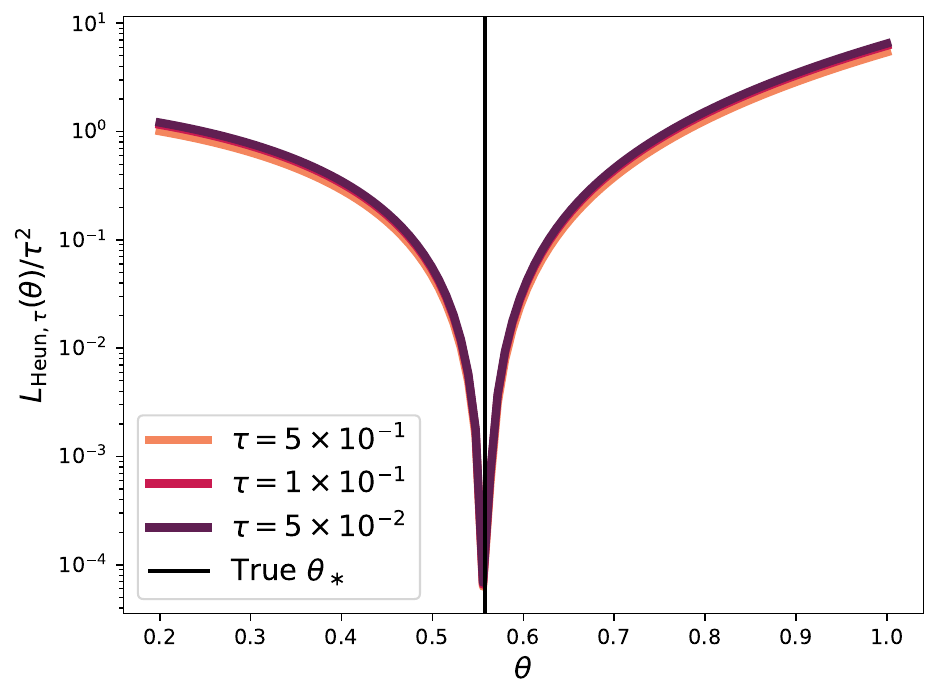}
        \caption{Plot of $L_{\mathrm{Heun},\tau}(\theta)$
        at $\tau \in \{5 \times 10^{-1}, 10^{-1}, 5 \times 10^{-2}\}$ levels of discretization.}
        \label{fig:loss_Heun_tau}
    \end{subfigure}
\caption{A plot of both $L_{\mathrm{EM},\tau}(\theta)$ and $L_{\mathrm{Heun},\tau}(\theta)$ at various levels of discretization. The PDE is a one dimensional Linear Quadratic Regulator HJB equation, where $\theta$ parameterizes a quadratic value function.}
\label{fig:one_step}
\end{figure}

The augmented SDE is discretized as follows using the stochastic Heun scheme:
\begin{align}
    \Zbars_{n+1} &= \Zhats_n + \tau F_\theta(\Zhats_n, t_n) + \sqrt{\tau} G_\theta(\Zhats_n, t_n) w_n, \quad w_n \sim \sfN(0, I_d), \label{eq:heun_scheme} \\
    \Zhats_{n+1} &= \Zhats_n + \frac{\tau}{2}\left(  F_\theta(\Zhats_n, t_n) + F_\theta(\Zbars_{n+1}, t_{n+1}) \right) + \frac{\sqrt{\tau}}{2} \left( G_\theta(\Zhats_n, t_n) + G_\theta(\Zbars_{n+1}, t_{n+1}) \right) w_n, \nonumber 
\end{align}
with $\Zhats_0 = (x_0, u_\theta(x_0, 0))$.
This gives rise to the $k$-step Heun-BSDE loss defined as:
\begin{align}
    L_{\mathrm{Heun}_k,\tau}(\theta) &:= \beta_k \mathop{\E}_{x_0,w_n} \sum_{n=0}^{\frac{N}{k}-1} \left( u_\theta(\Xhats_{(n+1)k}, t_{(n+1)k}) - u_\theta(\Xhats_{nk}, t_{nk}) - (\Yhats_{(n+1)k} - \Yhats_{nk}) \right)^2, \label{eq:BSDE_Heun}
\end{align}
with $\beta_k := \frac{k}{N\tau^2}$.
For the one-step $k=1$ case, we use the shorthand $L_{\mathrm{Heun},\tau}(\theta)$.
We now show that the one-step Heun-BSDE loss $L_{\mathrm{Heun},\tau}(\theta)$
avoids the undesirable bias term which appears 
for the one-step EM-BSDE loss $L_{\mathrm{EM},\tau}(\theta)$ (cf.~\Cref{sec:results:euler_BSDE}).
To do this, analogous to \eqref{eq:one_step_EM_BSDE}, 
we define the point-wise Heun loss at resolution $\tau$ for a fixed $(x, t)$ as:
\begin{align*}
    \ell_{\mathrm{Heun},\tau}(\theta, x, t) &:= \E_{w}( u_\theta(\hat{x}_{t+\tau}, t + \tau) - \hat{y}^\theta_{t+\tau})^2, \\
    \bar{z}^\theta_{t+\tau} &= z^\theta_t + \tau F_\theta(z^\theta_t, t) + \sqrt{\tau} G_\theta(z^\theta_t, t) w, \quad z^\theta_t = (x, u_\theta(x, t)), \\
    \hat{z}^\theta_{t+\tau} &= z^\theta_t + \frac{\tau}{2}( F_\theta(z^\theta_t, t) + F_\theta(\bar{z}^\theta_{t+\tau}, t+\tau)) + \frac{\sqrt{\tau}}{2}( G_\theta(z^\theta_t, t) + G_\theta(\bar{z}^\theta_{t+\tau}, t+\tau)) w,
\end{align*}
noting that $\hat{z}^\theta_{t+\tau} = (\hat{x}_{t+\tau}, \hat{y}^\theta_{t+\tau})$.
Similar to $L_{\mathrm{EM},\tau}(\theta)$,
we have the following identity
$L_{\mathrm{Heun},\tau}(\theta) = \frac{1}{N \tau^2}\sum_{n=0}^{N-1} \E_{\Xhats_n}[ \ell_{\mathrm{Heun},\tau}(\theta, \Xhats_n, t_n) ]$.
Our next result illustrates that the point-wise Heun loss avoids the issues
identified with the point-wise EM loss in \Cref{lemma:euler_maruyama_bsde}.

\begin{restatable}{mylemma}{Heunthm}
\label{lemma:heun_bsde}
Suppose that 
$f$, $g$, and $h_\theta$ are all in $C^{1,1}$, 
and $u_\theta$ is in $C^{3,1}$.
We have that
\begin{align}
    \tau^{-2} \cdot \ell_{\mathrm{Heun},\tau}(\theta, x, t) = (R[u_\theta](x, t))^2 + O(\tau^{1/2}),
\end{align}
where the $O(\cdot)$ hides factors depending on $d$ and the H{\"{o}}lder norms of $f$, $g$, $h_\theta$, and $u_\theta$.
\end{restatable}

Furthermore, analogously to 
\Cref{thm:EM_final_form}, we can utilize \Cref{lemma:heun_bsde} in conjunction with the order $1/2$ strong convergence of stochastic Heun (cf.~\Cref{sec:appendix:SDE_convergence})
to the Stratonovich solution
to show the following relationship for the full loss
$L_{\mathrm{Heun},\tau}(\theta)$.
\begin{restatable}{mythm}{Heunfinalform}
\label{thm:heun_final_form}
Suppose that $f$, $g$, and $h_\theta$ are all in $C^{1,1}$, $u_\theta \in C^{3,1}$, and $\tau \leq 1$.
We have that
\begin{align}
    L_{\mathrm{Heun},\tau}(\theta) = \frac{1}{T} \int_0^T \E \left[(R[u_\theta](\Xs_t, t))^2\right] \rmd t + O(\tau^{1/2}), \label{eq:Heun_tau_one_step_limit}
\end{align}
where the $O(\cdot)$ hides factors depending on $d$, $T$, and the H{\"{o}}lder norms of $f$, $g$, $h_\theta$, and $u_\theta$.
\end{restatable}

Therefore, unlike the situation with EM integration in \eqref{eq:EM_tau_one_step_limit}, 
any additional bias terms only enter through a $O(\tau^{1/2})$ 
term which is of higher order than the leading PDE residue term (cf.~\eqref{eq:Heun_tau_one_step_limit}).
In \Cref{fig:loss_Heun_tau}, we show the plot of $L_{\mathrm{Heun},\tau}(\theta)$ on the same HJB PDE problem as in \Cref{fig:loss_EM_tau}, and show that
the bias issue in $L_{\mathrm{EM},\tau}(\theta)$
is now resolved.
The proofs of both
\Cref{lemma:heun_bsde}
and \Cref{thm:heun_final_form}
are given in \Cref{sec:appendix:proofs:heun_bsde}.

\section{Trade-offs for Multi-Step BSDE Losses}
\label{sec:long_horizon_BSDE_analysis}

In \Cref{sec:one_step_BSDE}, we conducted a thorough analysis of the one-step self-consistency losses 
$L_{\mathrm{EM},\tau}(\theta)$ and $L_{\mathrm{Heun},\tau}(\theta)$.
We now consider the other extreme: the full-horizon ($k=N$)
losses $L_{\mathrm{EM}_N,\tau}(\theta)$ and $L_{\mathrm{Heun}_N,\tau}(\theta)$.
The intermediate multi-step regime~\cite{nusken2023interpolating}, where $1 < k < N$, 
serves as an extension to the cited method and is studied experimentally in \Cref{sec:experiments}.
Due to space constraints, we defer the precise theorem statements arising from our analysis, in addition to the proofs, to \Cref{sec:appendix:long_horizon_BSDE_analysis}.

\MyParagraph{BSDE loss and Euler-Maruyama discretization.}
We start with the $H$-horizon BSDE loss \eqref{eq:BSDE}.
Using Jensen's inequality, we show 
(\Cref{prop:BSDE_T_vs_tau})
the relationship 
$L_{\mathrm{BSDE},T}(\theta) \leq L_{\mathrm{BSDE},\tau}(\theta) + O(\tau^{1/2})$.
Thus, at the SDE level, the benefits of using the full-horizon loss $L_{\mathrm{BSDE},T}(\theta)$ 
over the $\tau$-horizon loss $L_{\mathrm{BSDE},\tau}(\theta)$ are not clear, 
given that (a) the full-horizon loss is dominated by the latter $\tau$-horizon loss (up to an order $\tau^{1/2}$ term), and 
(b) $L_{\mathrm{BSDE},\tau}(\theta)$ does indeed vanish for an optimal $\theta_\star$. 

The situation 
becomes more complex when factoring in EM discretization.
Using the order $1/2$ strong convergence of EM, we show (\Cref{prop:EM_N_tau_vs_BSDE_T}) that
$L_{\mathrm{EM}_N,\tau}(\theta) = L_{\mathrm{BSDE},T}(\theta) + O(\tau^{1/2})$.
On the other hand, 
from \Cref{thm:EM_final_form} we
also show (\Cref{prop:EM_tau_vs_BSDE_tau}) that
$L_{\mathrm{EM},\tau}(\theta) = L_{\mathrm{BSDE},\tau}(\theta) + \mathrm{Bias}(\theta) + O(\tau^{1/2})$ 
with the bias term $\mathrm{Bias}(\theta) := \frac{1}{2T} \int_0^T \E \Tr((H(X_t, t) \cdot \nabla^2 u_\theta(X_t, t))^2) \rmd t$ not vanishing as $\tau \to 0$.
Hence, the loss $L_{\mathrm{EM}_N,\tau}(\theta)$ presents an advantage over $L_{\mathrm{EM},\tau}(\theta)$ for sufficiently small discretization sizes in terms of bias. However, the inequality 
$L_{\mathrm{BSDE},T}(\theta) \leq L_{\mathrm{BSDE},\tau}(\theta) + O(\tau^{1/2})$
still holds,
meaning that while $L_{\mathrm{EM}_N,\tau}(\theta)$ does not
suffer from the bias issues identified in $L_{\mathrm{EM},\tau}(\theta)$,
the trade-off is that the
loss $L_{\mathrm{BSDE},T}(\theta)$ it approximates without bias is
nearly dominated by
another loss $L_{\mathrm{BSDE},\tau}(\theta)$;
this is precisely the loss that $L_{\mathrm{EM},\tau}(\theta)$ attempts to approximate,  
but it does so in a way that introduces an irreducible bias term $\mathrm{Bias}(\theta)$. Thus, neither of the EM-BSDE losses for  $k=1$ nor $k=N$ provides a completely satisfactory solution.
In \Cref{sec:experiments:skipped_self_regularization}, we illustrate 
these issues empirically.
Furthermore, in light of this analysis, we can interpret the interpolating loss of \cite{nusken2023interpolating} as attempting to resolve this trade-off by finding the best intermediate multi-step $k \in \{1, \dots, N\}$.

\MyParagraph{Stratonovich BSDE and Heun discretization.}
In the setting of the Stratonovich BSDE and the Heun-BSDE loss, we first show (\Cref{prop:SBSDE_T_vs_tau}) that 
$L_{\SBSDE,T}(\theta) \leq L_{\SBSDE,\tau}(\theta) + O(\tau^{1/2})$ holds at the SDE level,
analogous to the relationship between $L_{\mathrm{BSDE},T}(\theta)$ and $L_{\mathrm{BSDE},\tau}(\theta)$.
Next, we use the order $1/2$ strong convergence of Heun to show (\Cref{prop:heun_N_tau_vs_SBSDE_T}) that
$L_{\mathrm{Heun}_N,\tau}(\theta) = L_{\SBSDE,T}(\theta) + O(\tau^{1/2})$;
again analogous to the relationship between
$L_{\mathrm{EM}_N,\tau}(\theta)$ and $L_{\mathrm{BSDE},T}(\theta)$.
However, unlike the one-step EM case, using \Cref{thm:heun_final_form}
we show (\Cref{prop:heun_tau_vs_SBSDE_tau}) that $L_{\mathrm{Heun},\tau}(\theta) = L_{\SBSDE,\tau}(\theta) + O(\tau^{1/2})$,
from which we conclude $L_{\mathrm{Heun}_N,\tau}(\theta) \leq L_{\mathrm{Heun},\tau}(\theta) + O(\tau^{1/2})$.
Thus---unlike the EM setting---the relationship between the full-horizon and one-step case 
at both the SDE level and Heun-BSDE level is the same, suggesting questionable benefits
of $L_{\mathrm{Heun}_N,\tau}(\theta)$ over $L_{\mathrm{Heun},\tau}(\theta)$. In \Cref{sec:experiments:skipped_self_regularization}, we show that this conclusion is indeed reflected in practice.

\section{Experiments}
\label{sec:experiments}

In this section, we compare the proposed Heun-based BSDE method against both standard PINNs, a variant of PINNs which uses the forward SDE to sample collocation points,
and standard EM-based BSDE solvers
on various high-dimensional PDE problems.
Specifically, we compare the methods:%
\begin{enumerate}[label=(\alph*), itemsep=0pt, topsep=0pt, parsep=4pt, partopsep=0pt, leftmargin=*]
    \item \textbf{PINNs:} The standard PINNs loss $L_{\mathrm{PINNs}}(\theta)$ from \eqref{eq:PINNs} is minimized.
    Since we consider unbounded domains,
    the collocation measure $\mu$ over $(x, t)$ is chosen
    by fitting a normal distribution over the spatial dimensions of the forward SDE trajectories prior to training.
    \item \textbf{FS-PINNs:} 
    The standard PINNs loss \eqref{eq:PINNs} is again minimized,
    where the measure $\mu$ over space-time
    is chosen by directly sampling trajectories from the 
    forward SDE \eqref{eq:forward_SDE}. %
    \item \textbf{EM-BSDE:} 
    The self-consistency loss discretized with the standard Euler-Maruyama (EM) scheme, i.e., $L_{\mathrm{EM}_k,\tau}(\theta)$ from \eqref{eq:BSDE_EM} as described in \Cref{sec:background}.
    \item \textbf{EM-BSDE (NR):}
    A variant of EM-BSDE where we use the BSDE to propagate $Y_t$ instead of setting it directly to $u_\theta(X_t, t)$~\cite[cf.][]{raissi2024forward,nusken2023interpolating}. 
    We refer to this variant as the \emph{no-reset} (NR) variant.
    Specifically, the backward SDEs in
    \eqref{eq:backwards_SDE_EM} is integrated as (starting from $\hat{Y}^\theta_0 = u_\theta(x_0, 0)$):
    \begin{align}
    \hat{Y}^\theta_{n+1} &= \hat{Y}^\theta_n + \tau h_0(\hat{X}_n, t_n, \hat{Y}^\theta_n, \nabla u_\theta(\hat{X}_n, t_n)) + \sqrt{\tau} \nabla u_\theta(\hat{X}_n, t_n)^\T g(\hat{X}_n, t_n) w_n, \label{eq:backwards_SDE_EM_NR}
    \end{align}
    with $w_n \sim \sfN(0, I_d)$.
    The loss $L_{\mathrm{EM}_k,\tau}(\theta)$ remains the same
    except replacing \eqref{eq:backwards_SDE_EM} with \eqref{eq:backwards_SDE_EM_NR}.
    \item \textbf{Heun-BSDE (Ours):} 
    The self-consistency loss discretized with stochastic Heun integration, i.e., $L_{\mathrm{Heun}_k,\tau}(\theta)$ from \eqref{eq:BSDE_Heun} as described in \Cref{sec:results:strat_BSDE}.
\end{enumerate}%
We evaluate these methods on three PDE benchmark problems:
(i) a Hamilton-Jacobi-Bellman (HJB) equation~\cite{raissi2024forward},
(ii) a Black-Scholes-Barenblatt (BSB) equation~\cite{raissi2024forward}, and
(iii) a fully-coupled FBSDE from Bender \& Zhang (BZ)~\cite{bender2008coupledfbsdes}; the PDEs are detailed in \Cref{sec:appendix:experiments:PDEs}.
In addition, we evaluate the methods on an optimal control pendulum swing-up problem to demonstrate application to a non-linear control problem (see~\Cref{sec:appendix:experiments:pendulum}).
To evaluate model performance, the analytical solution (available for all PDEs under consideration) is %
compared with the model output along 5 forward SDE trajectories, using the \emph{relative $L_2$ error} (RL2) metric:
\begin{align}
    \mathrm{RL2} := \sqrt{\frac{\sum_{i=0}^{N}\left(u_{\mathrm{ref}}(X_{t_i},t_i) -u_{\mathrm{pred}}(X_{t_i}, t_i) \right)^2}{\sum_{i=0}^{N} u_{\mathrm{ref}}^2(X_{t_i}, t_i)}}. \label{eq:RL2}
\end{align}
Unless otherwise noted, we set $T=1$ and $N=50$ (i.e., $\tau=0.02$). Model architectures and training details are described  in~\Cref{sec:appendix:experiments}.
Additionally, the code to reproduce our experiments is available at: \url{https://github.com/sungje-park/heunbsde}.
For what follows, we report two main sets of results on
(i) one-step self-consistency losses
(\Cref{sec:experiments:one_step}) and
(ii) multi-step self-consistency losses
(\Cref{sec:experiments:skipped_self_regularization}).

\MyParagraph{Efficient sub-sampling BSDE implementation.}
In our experimental results, we consider both a full FSDE rollout algorithm,
in addition to a batched, sub-sampled FSDE rollout variation of FS-PINNs, EM-BSDE, and Heun-BSDE,
which we find performs similarly to the original algorithm while providing significant speed improvements. 
We sketch the details of the sub-sampled FSDE variation in~\Cref{alg:bsdebatchedsimple};
further details on the two algorithms can be found in~\Cref{sec:appendix:experiments:algorithms}.

\begin{algorithm}[hbt]
    \caption{Batched, Sub-sampling BSDE Algorithm (Simplified)}
    \label{alg:bsdebatchedsimple}
    \begin{algorithmic}[1]
        \small
        \Input Neural network $\hat u_\theta(x,t)$, parameters $\theta$,
        terminal function $\phi$, time step $\Delta t$, trajectory length $N$, 
        evaluation batch size $B$.
        \State Sample initial state: $(x[0],t[0])=(x_0,0)$, with $x_0 \sim \mu$
        \State Sample Brownian noise: $\xi[0:N-1]\sim \mathsf{N}(0,I_d)$
        \State Evaluate network at initial state: $(u,u_x)=(\hat u_\theta(x[0],t[0]),\nabla_x\hat u_\theta(x[0],t[0]))$
        \State \verb|/* Forward SDE rollout */|
        \For{$i=0,\dots,N-1$} %
            \State \verb|/* Use either EM or Heun integration */|
            \State Propagate forward state (with $(u, u_x)$ if coupled): $x[i+1]=x[i]+\Delta x$ %
            \State Propagate time: $t[i+1]=t[i]+\Delta t$
            \State Evaluate network at new state: $(u,u_x)=(\hat u_\theta(x[i+1],t[i+1]),\nabla_x\hat u_\theta(x[i+1],t[i+1]))$
        \EndFor
            \State Stop gradient: $x[0:N]=\mathrm{SG}(x[0:N])$
            \State Random sub-sampling: $(x_i,x_{i+1},t_i,t_{i+1})=\text{perm}(x_i,x_{i+1},t_i,t_{i+1})[0:B-1]$ %
            \State \verb|/* Use either EM or Heun integration */|
            \State Propagate backward SDE at batched points: $y_{i+1}=u_i+\Delta y$ %
            \State \verb|/* Use PINNs loss instead for FS-PINNs */|
            \State Compute self-consistency loss: $\mathcal{L}_{\mathrm{sr}}=\frac NB \sum_{i=0}^{B-1}\left(\hat u_{i+1}-y_{i+1}\right)^2$ %
            \State Compute terminal loss: $\mathcal{L}_{\phi}=(u+ \phi)^2+ \norm{u_x-\nabla_x \phi}^2$
    \end{algorithmic}
\end{algorithm}

 \begin{figure}[!b]
    \centering
    \begin{subfigure}[t]{0.48\textwidth}
        \centering
        \includegraphics[width=\textwidth]{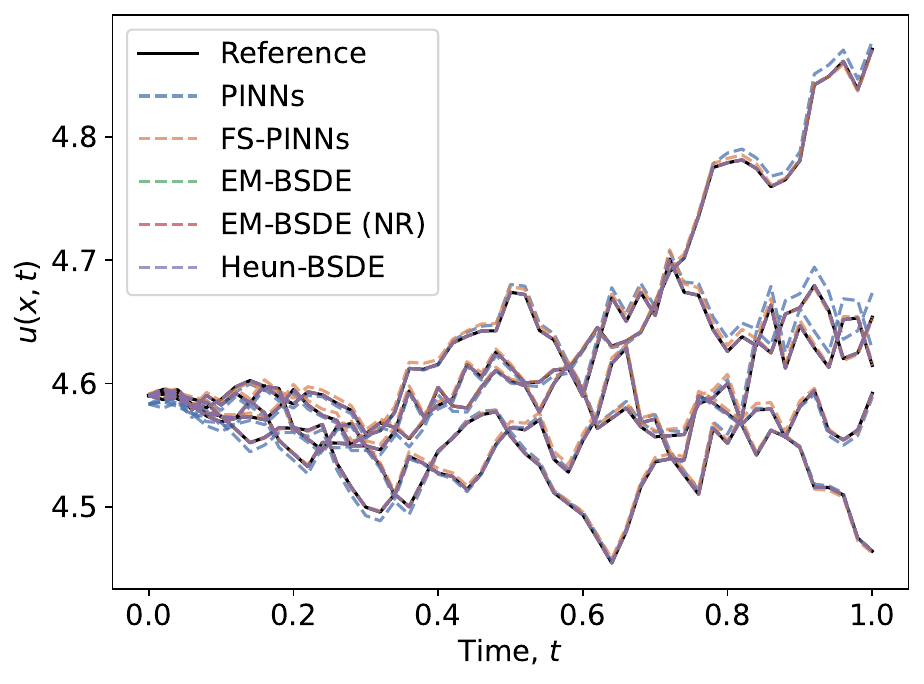}
        \caption{Plot of the learned solution of different models on the 100D HJB problem.}
        \label{fig:hjbsol}
    \end{subfigure}
    \hfill
    \begin{subfigure}[t]{0.48\textwidth}
        \centering
        \includegraphics[width=\textwidth]{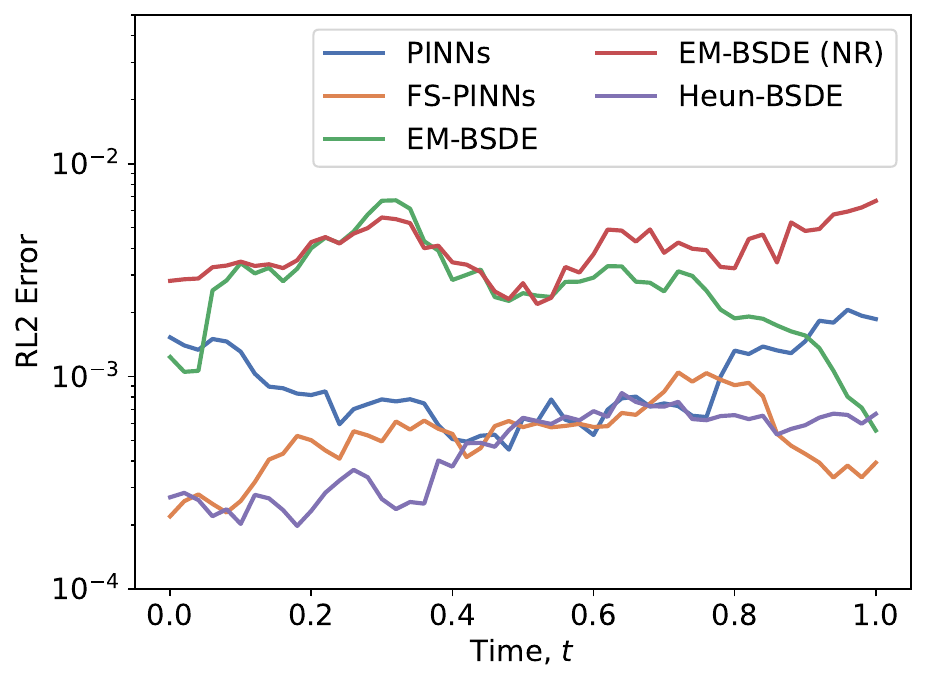}
        \caption{Plot of the RL2 errors across time $t \in [0, 1]$ for the 100D HJB case.}
        \label{fig:hjberr}
    \end{subfigure}
    \caption{
    A plot of the 100D HJB reference and learned solutions for each model and the associated RL2 errors.}
    \label{fig:hjb}
\end{figure}

\subsection{One-Step Self-Consistency Losses}
\label{sec:experiments:one_step}

\begin{table}[!t]
    \centering
    \begin{tabular}{ wc{.1\linewidth}|wc{.14\linewidth}|wc{.14\linewidth}|wc{.15\linewidth}|wc{.15\linewidth}|wc{.14\linewidth}}
        Cases & PINNs & FS-PINNs & EM-BSDE (NR) & EM-BSDE & Heun-BSDE \\ \hline\hline
        \multicolumn{6}{c}{Full Algorithm (cf. \Cref{alg:bsdeoriginal}) Results} \\ \hline\hline
        100D HJB & $.1260\pm.0107$ & $.0737 \pm.0110$ & $.5214\pm.0452$ & $.3626\pm.0113$ & $\mathbf{.0493 \pm .0109}$ \\
        100D BSB & $1.5066\pm.2349$ & $\mathbf{.0497 \pm .0031}$ & $.1855 \pm .0078$ & $.3735 \pm .0470$ & $.0535 \pm .0113$ \\
        100D BZ & - & - & - & $\mathbf{3.1259 \pm .1807}$ & $3.5619 \pm .2716$ \\
        10D BZ & $3.8566\pm.0310$ & $.0351 \pm .0041$ & $.1309 \pm .0311$ & $.1903 \pm .0066$ & $\mathbf{.0228 \pm .0016}$ \\ \hline\hline 
        
        \multicolumn{6}{c}{Batched Algorithm (cf. \Cref{alg:bsdebatched}) Results} \\ \hline\hline
        100D HJB & $.1362 \pm .0276$ & $.1828 \pm .0774$ & $.5214 \pm .0452$ & $.3831 \pm .0084$ & $\mathbf{.0573 \pm .0106}$ \\
        100D BSB & $3.0488 \pm 1.5625$ & $.0851 \pm .0027$ & $.1855 \pm .0078$ & $.3668 \pm .0244$ & $\mathbf{.0472 \pm .0076}$ \\
        100D BZ & - & $5.4502 \pm .1351$ & - & $5.7330 \pm .2342$ & $\mathbf{1.7973 \pm .1108}$ \\
        10D BZ & $3.8495 \pm .1562$ & $.0270 \pm .0017$ & $.1309 \pm .0311$ & $.1933 \pm .0022$ & $\mathbf{.0236 \pm .0031}$\\
    \end{tabular}
    \caption{
    Summary of RL2 errors averaged over three different initialization random seeds, $\pm$ one standard deviation. Settings that failed to converge to a satisfactory solution are denoted with -.
    The first set of results correspond to the full algorithm (see~\Cref{sec:appendix:experiments:algorithms} for a detailed description), whereas the second set of results correspond
    to the batched algorithm (cf.~\Cref{alg:bsdebatchedsimple}).
    }
    \label{tab:resultsummary}
\end{table}

For our first set of results, we compare the PINNs baselines and the one-step ($k=1$)
EM-BSDE baseline with our one-step Heun-BSDE method.
We solve each PDE 
instantiated with a state space of 100 dimensions
using three different initialization seeds for training.
The results are reported in \Cref{tab:resultsummary}, which shows that for nearly all the cases, the Heun-BSDE method outperforms EM-BSDE methods (i.e., lower RL2 error) as predicted by our analysis in \Cref{sec:one_step_BSDE}.
Furthermore, \Cref{fig:hjb} illustrates the performance of all methods across time $t \in [0, 1]$ on the 100D HJB case, which also highlights the low RL2 error of the Heun-BSDE method.
The one exception to the trend is the 100D BZ case, where all methods failed to converge to a high-quality solution. 
We hypothesize due to the high dimensionality of the problem involving fully-coupled SDEs, the optimization landscape for all methods is too complex to recover high-fidelity solutions.
To evaluate this hypothesis,
we further reduce the dimensionality of the BZ problem to 10D,
which restores the relative performance of all methods (cf.~\Cref{tab:resultsummary}, last row).
Another key observation from \Cref{tab:resultsummary} is that FS-PINNs and Heun-BSDE perform similarly across all cases, showing that
parity between the BSDE and PINNs 
is restored through Heun integration.
Finally, we note that the performance of PINNs is quite poor in comparison to FS-PINNs, which illustrates the relative impact of the %
sampling distribution $\mu$ for PINNs methods (cf.~\cite{nabian2021sampling,daw23sampling,zhang2025sampling}).

To show that the gap between EM and Heun performance cannot be closed with finer discretization meshes,
we re-run the 10D BSB example at varying discretization sizes. The results are reported in 
\Cref{fig:discresults}, which show that 
the EM-BSDE methods only experience minimal improvement
with smaller discretization size compared with the Heun-BSDE method. 
\Cref{fig:discresults} corroborates the findings in \Cref{sec:one_step_BSDE}, which shows that the one-step EM-BSDE loss
contains a bias term of the same order as the residual error 
which is not present in the one-step Heun-BSDE loss.

\begin{wrapfigure}[9]{r}{0.4\textwidth}
    \vspace{-15pt}
    \begin{minipage}{\linewidth}
    \centering
        \begin{tabular}{c|c|c}
            Method & Full & Batched \\ \hline \hline
            PINNs & 1x & 1x \\
            FS-PINNs & 2.64x & 1.14x \\
            EM-BSDE & 2.83x & 0.34x \\
            Heun-BSDE & 36.37x & 2.03x
        \end{tabular}
        \captionof{table}{A table of average training time overhead relative to PINNs for both the full and batched algorithm runs.}
        \label{tab:runtimes_average}
    \end{minipage}
\end{wrapfigure}

\begin{figure}[b]
    \begin{minipage}[t]{.48\linewidth}
        \centering
        \includegraphics[width=0.98\linewidth]{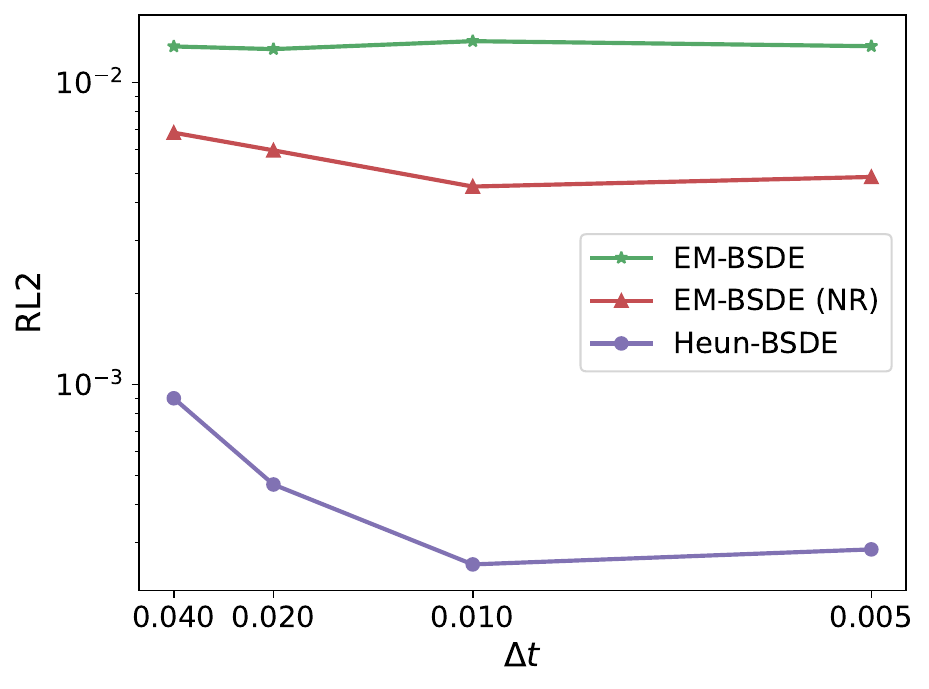}
        \captionof{figure}{RL2 performance for 10D BSB at
        discretization step-sizes $\tau=N^{-1}$ for $N \in \{25, 50, 100, 200\}$.}
        \label{fig:discresults}
    \end{minipage}
    \hfill
    \begin{minipage}[t]{.48\linewidth}
        \centering
        \includegraphics[width=\linewidth]{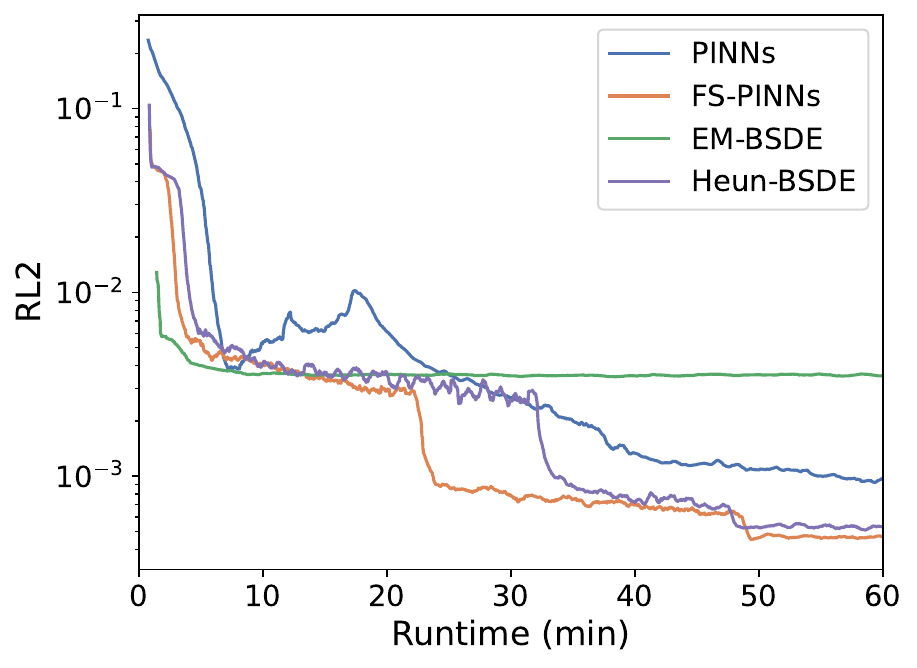}
        \captionof{figure}{A plot of the RL2 performance versus runtime for the 100D HJB problem.}
        \label{fig:runtime}
    \end{minipage}
\end{figure}
\MyParagraph{Computational considerations.}
Although Heun-BSDE outperforms EM-BSDE, it comes at a computational cost.
In \Cref{tab:runtimes_average}, 
we report the average training time for each method on a single NVIDIA A100 GPU; on average Heun-BSDE is approximately 6x slower than the EM-BSDE method for the batched algorithm. There are two major factors to this overhead. First, for the specific elliptic/parabolic PDEs we consider in \eqref{eq:main_PDE}, EM-BSDE does not require the computationally expensive Hessian computation $\nabla^2 u(x, t)$, which PINNs, FS-PINNs, and Heun-BSDE all do require. However, this does not necessarily hold true for all PDEs (e.g.,~\Cref{sec:appendix:experiments:pendulum}). Second, the Heun integration requires approximately double the compute of the EM integration---this is clearly reflected in the overhead between FS-PINNs and Heun-BSDE.

In~\Cref{fig:runtime}, we show the runtime-normalized RL2 performance demonstrating that while EM-BSDE shows strong convergence at first, its performance does not improve with more compute. Conversely, both FS-PINNs and Heun-BSDE achieve similar RL2 performance at equal runtimes.

\subsection{Multi-Step Self-Consistency Losses}
\label{sec:experiments:skipped_self_regularization}

We next consider multi-step self-consistency BSDE losses~\cite{nusken2023interpolating} in order to evaluate the mathematical analysis
conducted in \Cref{sec:long_horizon_BSDE_analysis}.
Specifically, we evaluate both the multi-step $L_{\mathrm{EM}_k,\tau}(\theta)$ (cf.~\eqref{eq:BSDE_EM}) and
$L_{\mathrm{Heun}_k,\tau}(\theta)$ (cf.~\eqref{eq:BSDE_Heun})
for varying values of skip-length $k$.
For multi-step losses, we also 
need to determine where both EM-BSDE and EM-BSDE (NR) will ``reset'' the value of $Y_t$ to $u_\theta(X_t, t)$.
Note that there are many degrees of freedom here in the multi-step formulation, so we simply pick one choice as a representative choice. 
For EM-BSDE, we set $\hat{Y}^\theta_{nk} = u_\theta(\hat{X}_{nk}, t_{nk})$, and use \eqref{eq:backwards_SDE_EM_NR} to integrate between $t_{nk}$ and $t_{(n+1)k}$.
On the other hand, for EM-BSDE (NR), we directly use
the value of $\hat{Y}^\theta_{nk}$ from \eqref{eq:backwards_SDE_EM_NR}.
We also vary EM-BSDE and EM-BSDE (NR) with $N$, the number of discretization steps for the interval $[0, 1]$, varying between $N \in \{50, 200\}$ as well.
We conduct this experiment on the 10D BSB setting,
with the results reported in \Cref{fig:skipped_reg}.

 \begin{figure}[!ht]
    \centering
    \begin{subfigure}[t]{0.48\textwidth}
        \centering
        \includegraphics[width=\textwidth]{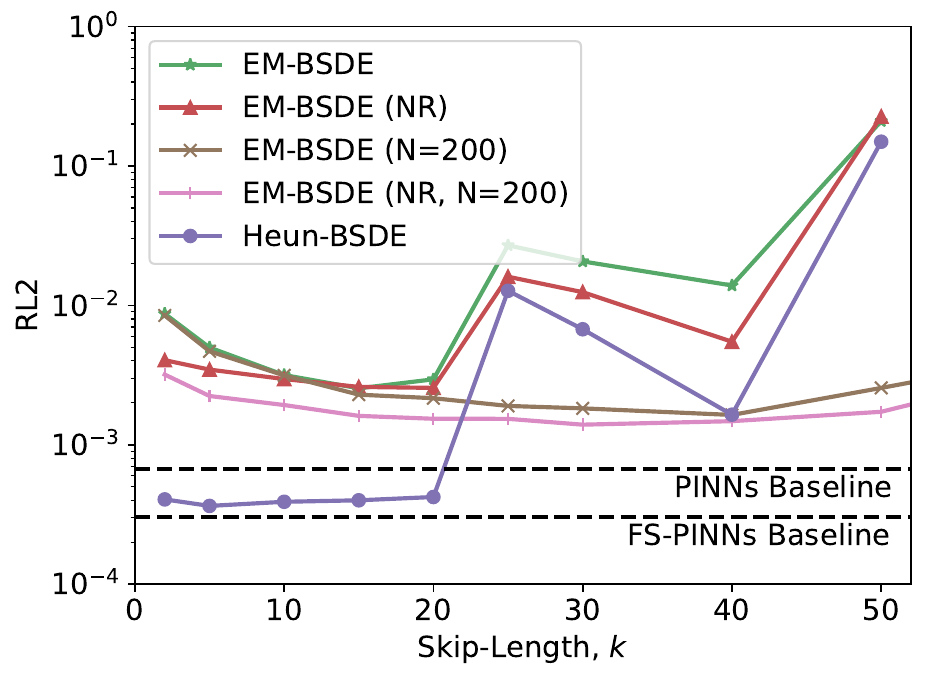}
        \caption{Plot of RL2 performance 
        as a function of the skip-length $k$, with the number of discretization steps also varying in $N \in \{50, 200\}$.}
        \label{fig:skipped_reg50}
    \end{subfigure}
    \hfill
    \begin{subfigure}[t]{0.48\textwidth}
        \centering
        \includegraphics[width=\textwidth]{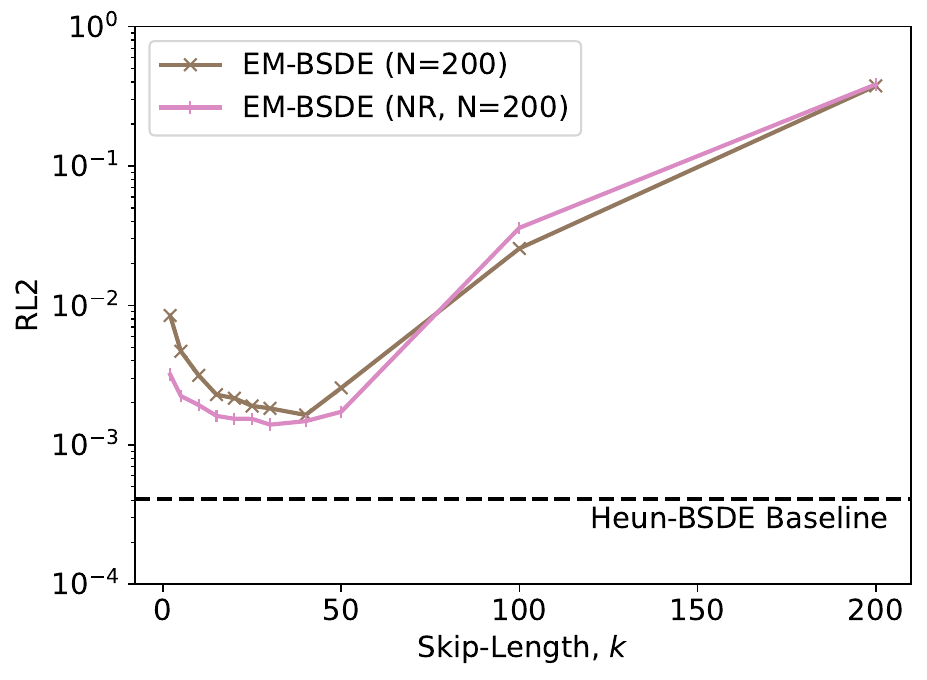}
        \caption{Plot of RL2 performance as a function of the skip-length $k$ for EM-BSDE and EM-BSDE (NR), with number of discretization steps set to $N=200$.}
        \label{fig:skipped_reg200}
    \end{subfigure}
    \caption{A plot of RL2 performance
    of each model on the 10D BSB case 
    at various skip lengths. 
    }
    \label{fig:skipped_reg}
\end{figure}
\Cref{fig:skipped_reg} shows that while the Heun-BSDE performance decreases as the skip-length $k$ increases, the performance of both EM-BSDE methods initially \emph{improves} with skip-length $k$ before then degrading, demonstrating the trade-off between minimizing the bias term and decreasing quality of the self-consistency loss identified in \Cref{sec:long_horizon_BSDE_analysis}.
Furthermore, this trade-off is present for EM-BSDE at both $N=50$ and $N=200$, illustrating again that these issues for EM are not mitigated with finer discretization
step-sizes.
Finally, although the EM-BSDE method overall improves its performance by tuning the skip-length $k$ (consistent with the findings from \cite{nusken2023interpolating}), the Heun-BSDE model at $k=1$ still outperforms the best multi-step EM-BSDE model by a significant margin.

\section{Conclusion and Future Work}
\label{sec:conclusion}

We conducted a systematic study of 
discretization strategies for BSDE-based loss formulations used to solve high-dimensional PDEs. By comparing the commonly used Euler-Maruyama scheme with stochastic Heun integration, we demonstrated that the choice of discretization can significantly impact the accuracy of BSDE-based methods. Our theoretical analysis showed that EM discretization introduces a non-trivial bias to the single-step self-consistency BSDE loss which does not vanish as the step-size decreases.
On the other hand, we show that this bias issue is not present when utilizing Heun discretization. Finally, the empirical results confirmed that the Heun scheme consistently outperforms EM in solution accuracy and performs competitively with PINNs.

Our work underscores the importance of stochastic integrator choice in BSDE solvers and suggest that higher-order schemes---though more computationally intensive---can offer substantial gains in performance. In future works, we aim to reduce Heun-BSDE's computational costs through methods such as Hutchinson trace estimation~\cite{hu2024hutchinson},
reversible Heun~\cite{kidger2021neuralsdes},
and adaptive time-stepping. 
Furthermore, while this current work focuses on understanding and restoring performance parity between BSDE and PINNs methods, future work will utilize the advantages of Heun-BSDE to solve problems such as high-dimensional stochastic control problems in model-free settings (cf.~\cite{wang2022modelfree}).

\subsubsection*{Acknowledgments}
The authors acknowledge the support of the 
USC Viterbi Summer Undergraduate Research Experience (SURE) program, which supported S. Park during
the beginning stages of this research.

\bibliographystyle{unsrt}
\bibliography{paper}

@article{raissi2019pinns,
title = {Physics-informed neural networks: A deep learning framework for solving forward and inverse problems involving nonlinear partial differential equations},
journal = {Journal of Computational Physics},
volume = {378},
pages = {686-707},
year = {2019},
author = {Maziar Raissi and Paris Perdikaris and George Em Karniadakis},
}

@article{raissi2017physicsI,
  title={Physics Informed Deep Learning (Part I): Data-driven Solutions of Nonlinear Partial Differential Equations},
  author={Raissi, Maziar and Perdikaris, Paris and Karniadakis, George Em},
  journal={arXiv preprint arXiv:1711.10561},
  year={2017}
}

@article{raissi2017physicsII,
  title={Physics Informed Deep Learning (Part II): Data-driven Discovery of Nonlinear Partial Differential Equations},
  author={Raissi, Maziar and Perdikaris, Paris and Karniadakis, George Em},
  journal={arXiv preprint arXiv:1711.10566},
  year={2017}
}

@inproceedings{krishnapriyan2021pinnfailures,
     author = {Krishnapriyan, Aditi and Gholami, Amir and Zhe, Shandian and Kirby, Robert and Mahoney, Michael W},
     booktitle = {Advances in Neural Information Processing Systems},
     pages = {26548--26560},
     publisher = {Curran Associates, Inc.},
     title = {Characterizing possible failure modes in physics-informed neural networks},
     volume = {34},
     year = {2021}
}

@article{wang2022pinnsNTK,
    title = {When and why PINNs fail to train: A neural tangent kernel perspective},
    journal = {Journal of Computational Physics},
    volume = {449},
    pages = {110768},
    year = {2022},
    author = {Sifan Wang and Xinling Yu and Paris Perdikaris},
}

@article{wang2023expert,
  title={An expert's guide to training physics-informed neural networks},
  author={Wang, Sifan and Sankaran, Shyam and Wang, Hanwen and Perdikaris, Paris},
  journal={arXiv preprint arXiv:2308.08468},
  year={2023}
}

@article{wu2023adaptive,
    title = {A comprehensive study of non-adaptive and residual-based adaptive sampling for physics-informed neural networks},
    journal = {Computer Methods in Applied Mechanics and Engineering},
    volume = {403},
    pages = {115671},
    year = {2023},
    author = {Chenxi Wu and Min Zhu and Qinyang Tan and Yadhu Kartha and Lu Lu},
}

@inproceedings{wang2022l2loss,
 author = {Wang, Chuwei and Li, Shanda and He, Di and Wang, Liwei},
 booktitle = {Advances in Neural Information Processing Systems},
 pages = {8278--8290},
 publisher = {Curran Associates, Inc.},
 title = {Is {$L^2$} Physics Informed Loss Always Suitable for Training Physics Informed Neural Network?},
 volume = {35},
 year = {2022}
}

@article{kovachki2023neuraloperator,
  author  = {Nikola Kovachki and Zongyi Li and Burigede Liu and Kamyar Azizzadenesheli and Kaushik Bhattacharya and Andrew Stuart and Anima Anandkumar},
  title   = {Neural Operator: Learning Maps Between Function Spaces With Applications to PDEs},
  journal = {Journal of Machine Learning Research},
  year    = {2023},
  volume  = {24},
  number  = {89},
  pages   = {1--97},
}

@article{wu2024ropinn,
  title={RoPINN: Region Optimized Physics-Informed Neural Networks},
  author={Wu, Haixu and Luo, Huakun and Ma, Yuezhou and Wang, Jianmin and Long, Mingsheng},
  journal={arXiv preprint arXiv:2405.14369},
  year={2024}
}

@Article{nusken2023interpolating,
    author = {Nüsken, Nikolas and Richter, Lorenz},
    title = {Interpolating Between BSDEs and PINNs: Deep Learning for Elliptic and Parabolic Boundary Value Problems},
    journal = {Journal of Machine Learning},
    year = {2023},
    volume = {2},
    number = {1},
    pages = {31--64},
}

@article{yu2018deep,
  title={The deep Ritz method: a deep learning-based numerical algorithm for solving variational problems},
  author={Weinan E and Bing Yu},
  journal={Communications in Mathematics and Statistics},
  volume={6},
  number={1},
  pages={1--12},
  year={2018},
  publisher={Springer}
}

@article{sirignano2018DGM,
    title = {DGM: A deep learning algorithm for solving partial differential equations},
    journal = {Journal of Computational Physics},
    volume = {375},
    pages = {1339-1364},
    year = {2018},
    issn = {0021-9991},
    author = {Justin Sirignano and Konstantinos Spiliopoulos},
}

@article{han2018bsde,
    author = {Jiequn Han and Arnulf Jentzen and Weinan E},
    title = {Solving high-dimensional partial differential equations using deep learning},
    journal = {Proceedings of the National Academy of Sciences},
    volume = {115},
    number = {34},
    pages = {8505-8510},
    year = {2018},
}

@article{han2017deep,
  title={Deep learning-based numerical methods for high-dimensional parabolic partial differential equations and backward stochastic differential equations},
  author={Weinan E and Jiequn Han and Arnulf Jentzen},
  journal={Communications in Mathematics and Statistics},
  volume={5},
  number={4},
  pages={349--380},
  year={2017},
  publisher={Springer}
}

@article{hure2020deep,
  title={Deep backward schemes for high-dimensional nonlinear PDEs},
  author={Hur{\'e}, C{\^o}me and Pham, Huy{\^e}n and Warin, Xavier},
  journal={Mathematics of Computation},
  volume={89},
  number={324},
  pages={1547--1579},
  year={2020}
}

@incollection{raissi2024forward,
  title={Forward--backward stochastic neural networks: deep learning of high-dimensional partial differential equations},
  author={Raissi, Maziar},
  booktitle={Peter Carr Gedenkschrift: Research Advances in Mathematical Finance},
  pages={637--655},
  year={2024},
  publisher={World Scientific}
}

@article{beck2019machine,
  title={Machine learning approximation algorithms for high-dimensional fully nonlinear partial differential equations and second-order backward stochastic differential equations},
  author={Beck, Christian and E, Weinan and Jentzen, Arnulf},
  journal={Journal of Nonlinear Science},
  volume={29},
  pages={1563--1619},
  year={2019},
  publisher={Springer}
}

@article{rumelin1982numericalintegration,
 author = {W. Rümelin},
 journal = {SIAM Journal on Numerical Analysis},
 number = {3},
 pages = {604--613},
 publisher = {Society for Industrial and Applied Mathematics},
 title = {Numerical Treatment of Stochastic Differential Equations},
 urldate = {2025-03-04},
 volume = {19},
 year = {1982}
}

@article{magnus1978gaussians,
author = {Magnus, Jan R.},
title = {The moments of products of quadratic forms in normal variables},
journal = {Statistica Neerlandica},
volume = {32},
number = {4},
pages = {201-210},
year = {1978}
}

@article{chan2019machine,
  title={Machine learning for semi linear PDEs},
  author={Chan-Wai-Nam, Quentin and Mikael, Joseph and Warin, Xavier},
  journal={Journal of Scientific Computing},
  volume={79},
  number={3},
  pages={1667--1712},
  year={2019},
  publisher={Springer}
}

@article{kapllani2024bsde,
    title={Deep learning algorithms for solving high-dimensional nonlinear backward stochastic differential equations},
    author={Lorenc Kapllani and Long Teng},
    journal={Discrete and Continuous Dynamical Systems - Series B (DCDS-B)},
    volume={29},
    number={4},
    pages={1695--1729},
    year={2024}
}

@book{kloeden1992numericalsolutions,
    title={Numerical Solution of Stochastic Differential Equations},
    author={Peter E. Kloeden and Eckhard Platen},
    year={1992},
    publisher={Springer}
}

@article{bender2008coupledfbsdes,
    title= {Time Discretization and Markovian Iteration for Coupled Fbsdes},
    author = {Christian Bender and Jianfeng Zhang},
    journal = {The Annals of Applied Probability},
    volume = {18},
    number = {1},
    pages = {143--177},
    publisher = {Institute of Mathematical Statistics},
    year = {2008}
}

@inproceedings{tancik2020fourierfeatures,
author = {Tancik, Matthew and Srinivasan, Pratul P. and Mildenhall, Ben and Fridovich-Keil, Sara and Raghavan, Nithin and Singhal, Utkarsh and Ramamoorthi, Ravi and Barron, Jonathan T. and Ng, Ren},
title = {Fourier features let networks learn high frequency functions in low dimensional domains},
year = {2020},
publisher = {Curran Associates Inc.},
address = {Red Hook, NY, USA},
booktitle = {Proceedings of the 34th International Conference on Neural Information Processing Systems},
articleno = {632},
numpages = {11},
location = {Vancouver, BC, Canada},
series = {NIPS '20}
}

@software{jax2018github,
  author = {James Bradbury and Roy Frostig and Peter Hawkins and Matthew James Johnson and Chris Leary and Dougal Maclaurin and George Necula and Adam Paszke and Jake Vander{P}las and Skye Wanderman-{M}ilne and Qiao Zhang},
  title = {{JAX}: composable transformations of {P}ython+{N}um{P}y programs},
  url = {http://github.com/jax-ml/jax},
  version = {0.3.13},
  year = {2018},
}

@ARTICLE{lagaris1998boundaryconditions,
  author={Lagaris, I.E. and Likas, A. and Fotiadis, D.I.},
  journal={IEEE Transactions on Neural Networks}, 
  title={Artificial neural networks for solving ordinary and partial differential equations}, 
  year={1998},
  volume={9},
  number={5},
  pages={987-1000},
}

@article{singh2024exact,
  title={Exact Imposition of Safety Boundary Conditions in Neural Reachable Tubes},
  author={Singh, Aditya and Feng, Zeyuan and Bansal, Somil},
  journal={arXiv preprint arXiv:2404.00814},
  year={2024}
}

@inproceedings{rathore2024challenges,
    author = {Rathore, Pratik and Lei, Weimu and Frangella, Zachary and Lu, Lu and Udell, Madeleine},
    title = {Challenges in training PINNs: a loss landscape perspective},
    year = {2024},
    publisher = {JMLR.org},
    booktitle = {Proceedings of the 41st International Conference on Machine Learning},
    articleno = {1715},
    numpages = {33},
    location = {Vienna, Austria},
    series = {ICML'24}
}

@article{gao2023failureinformed,
    author = {Gao, Zhiwei and Yan, Liang and Zhou, Tao},
    title = {Failure-Informed Adaptive Sampling for PINNs},
    journal = {SIAM Journal on Scientific Computing},
    volume = {45},
    number = {4},
    pages = {A1971-A1994},
    year = {2023},
    doi = {10.1137/22M1527763}
}

@article{mojgani2022lagrangian,
    title={Lagrangian pinns: A causality-conforming solution to failure modes of physics-informed neural networks},
    author={Mojgani, Rambod and Balajewicz, Maciej and Hassanzadeh, Pedram},
    journal={arXiv preprint arXiv:2205.02902},
    year={2022}
}

@article{takanashi2022controlvariate,
    title = {A new efficient approximation scheme for solving high-dimensional semilinear PDEs: Control variate method for Deep BSDE solver},
    journal = {Journal of Computational Physics},
    volume = {454},
    pages = {110956},
    year = {2022},
    issn = {0021-9991},
    doi = {https://doi.org/10.1016/j.jcp.2022.110956},
    url = {https://www.sciencedirect.com/science/article/pii/S0021999122000183},
    author = {Akihiko Takahashi and Yoshifumi Tsuchida and Toshihiro Yamada}
}

@article{andersson2023deepfbsde,
    author = {Andersson, Kristoffer and Andersson, Adam and Oosterlee, C. W.},
    title = {Convergence of a Robust Deep FBSDE Method for Stochastic Control},
    journal = {SIAM Journal on Scientific Computing},
    volume = {45},
    number = {1},
    pages = {A226-A255},
    year = {2023},
    doi = {10.1137/22M1478057}
}

@misc{wang2022modelfree,
    title={Deep BSDE-ML Learning and Its Application to Model-Free Optimal Control}, 
    author={Yutian Wang and Yuan-Hua Ni},
    year={2022},
    eprint={2201.01318},
    archivePrefix={arXiv},
    primaryClass={math.OC},
    url={https://arxiv.org/abs/2201.01318}
}

@inproceedings{kidger2021neuralsdes,
 author = {Kidger, Patrick and Foster, James and Li, Xuechen (Chen) and Lyons, Terry},
 booktitle = {Advances in Neural Information Processing Systems},
 editor = {M. Ranzato and A. Beygelzimer and Y. Dauphin and P.S. Liang and J. Wortman Vaughan},
 pages = {18747--18761},
 publisher = {Curran Associates, Inc.},
 title = {Efficient and Accurate Gradients for Neural SDEs},
 volume = {34},
 year = {2021}
}

@InProceedings{shen2022selfconsistency,
  title = 	 {Self-Consistency of the Fokker Planck Equation},
  author =       {Shen, Zebang and Wang, Zhenfu and Kale, Satyen and Ribeiro, Alejandro and Karbasi, Amin and Hassani, Hamed},
  booktitle = 	 {Proceedings of Thirty Fifth Conference on Learning Theory},
  pages = 	 {817--841},
  volume = 	 {178},
  series = 	 {Proceedings of Machine Learning Research},
  month = 	 {02--05 Jul},
  year = {2022},
  publisher =    {PMLR}
}

@article{boffi2023probability,
  title={Probability flow solution of the fokker--planck equation},
  author={Boffi, Nicholas M and Vanden-Eijnden, Eric},
  journal={Machine Learning: Science and Technology},
  volume={4},
  number={3},
  pages={035012},
  year={2023},
  publisher={IOP Publishing}
}

@inproceedings{li2023probabilityflow,
 author = {Li, Lingxiao and Hurault, Samuel and Solomon, Justin M},
 booktitle = {Advances in Neural Information Processing Systems},
 pages = {57038--57057},
 publisher = {Curran Associates, Inc.},
 title = {Self-Consistent Velocity Matching of Probability Flows},
 volume = {36},
 year = {2023}
}

@article{hu2024hutchinson,
title = {Hutchinson Trace Estimation for high-dimensional and high-order Physics-Informed Neural Networks},
journal = {Computer Methods in Applied Mechanics and Engineering},
volume = {424},
pages = {116883},
year = {2024},
issn = {0045-7825},
author = {Zheyuan Hu and Zekun Shi and George Em Karniadakis and Kenji Kawaguchi}
}

@article{lyons1998,
    author = {Lyons, Terry J.},
    journal = {Revista Matemática Iberoamericana},
    number = {2},
    pages = {215-310},
    title = {Differential equations driven by rough signals.},
    volume = {14},
    year = {1998},
}

@InProceedings{daw23sampling,
  title = 	 {Mitigating Propagation Failures in Physics-informed Neural Networks using Retain-Resample-Release ({R}3) Sampling},
  author =       {Daw, Arka and Bu, Jie and Wang, Sifan and Perdikaris, Paris and Karpatne, Anuj},
  booktitle = 	 {Proceedings of the 40th International Conference on Machine Learning},
  pages = 	 {7264--7302},
  year = 	 {2023},
  volume = 	 {202},
  series = 	 {Proceedings of Machine Learning Research},
  month = 	 {23--29 Jul},
  publisher =    {PMLR},
}

@article{nabian2021sampling,
    author = {Nabian, Mohammad Amin and Gladstone, Rini Jasmine and Meidani, Hadi},
    title = {Efficient training of physics-informed neural networks via importance sampling},
    journal = {Computer-Aided Civil and Infrastructure Engineering},
    volume = {36},
    number = {8},
    pages = {962-977},
    year = {2021}
}

@article{zhang2025sampling,
    title = {Annealed adaptive importance sampling method in PINNs for solving high dimensional partial differential equations},
    journal = {Journal of Computational Physics},
    volume = {521},
    pages = {113561},
    year = {2025},
    issn = {0021-9991},
    author = {Zhengqi Zhang and Jing Li and Bin Liu},
}

@InProceedings{wang2024pinnsdifficulty,
  title = 	 {Understanding the difficulty of solving {C}auchy problems with {PINN}s},
  author =       {Wang, Tao and Zhao, Bo and Gao, Sicun and Yu, Rose},
  booktitle = 	 {Proceedings of the 6th Annual Learning for Dynamics \& Control Conference},
  pages = 	 {453--465},
  year = 	 {2024},
  volume = 	 {242},
  series = 	 {Proceedings of Machine Learning Research},
  month = 	 {15--17 Jul},
  publisher =    {PMLR},
}

@article{chuang2023predictive,
  title={Predictive limitations of physics-informed neural networks in vortex shedding},
  author={Chuang, Pi-Yueh and Barba, Lorena A},
  journal={arXiv preprint arXiv:2306.00230},
  year={2023}
}

@article{ramachandran2017searching,
  title={Searching for activation functions},
  author={Ramachandran, Prajit and Zoph, Barret and Le, Quoc V},
  journal={arXiv preprint arXiv:1710.05941},
  year={2017}
}

@book{hytonen2016analysis,
  title={Analysis in Banach Spaces: Volume I: Martingales and Littlewood-Paley Theory},
  author={Hyt{\"o}nen, Tuomas and Van Neerven, Jan and Veraar, Mark and Weis, Lutz},
  year={2016},
  publisher={Springer}
}

@article{clark1987discretegronwal,
title = {Short proof of a discrete gronwall inequality},
journal = {Discrete Applied Mathematics},
volume = {16},
number = {3},
pages = {279--281},
year = {1987},
author = {Dean S. Clark},
}

@article{beck2021deepsplitting,
title = {Deep splitting method for parabolic PDEs},
journal = {SIAM Journal on Scientific Computing},
volume = {43},
number = {5},
pages = {A3135-A3154},
year = {2021},
author = {Beck, Christian and Becker, Sebastian and Cheridito, Patrick and Jentzen, Arnulf and Neufeld, Ariel},
}

@article{naito2020weakexpansions,
author = {Naito, Riu and Yamada, Toshihiro},
title = {An acceleration scheme for deep learning-based BSDE solver using weak expansions},
journal = {International Journal of Financial Engineering},
volume = {07},
number = {02},
pages = {2050012},
year = {2020},
}

@misc{pham2021nnbackward,
title={Neural networks-based backward scheme for fully nonlinear PDEs}, 
author={Huyen Pham and Xavier Warin and Maximilien Germain},
year={2021},
journal={arXiv preprint arXiv:1908.00412},
}

@article{han2020convergencebsde,
title={Convergence of the deep BSDE method for coupled FBSDEs},
volume={5},
number={1},
journal={Probability, Uncertainty and Quantitative Risk},
publisher={American Institute of Mathematical Sciences (AIMS)},
author={Han, Jiequn and Long, Jihao},
year={2020},
}

@article{richter2023tensortrains,
      title={From continuous-time formulations to discretization schemes: tensor trains and robust regression for BSDEs and parabolic PDEs}, 
      author={Lorenz Richter and Leon Sallandt and Nikolas Nüsken},
      year={2023},
      journal={arXiv preprint arXiv:2307.15496}
}

@article{han2024nonsmooth,
      title={On the Nonsmooth Geometry and Neural Approximation of the Optimal Value Function of Infinite-Horizon Pendulum Swing-up}, 
      author={Haoyu Han and Heng Yang},
      year={2024},
      journal = {arXiv preprint arXiv:2312.17467} 
}

@article{chassagneux2022deep,
  title={Deep runge-kutta schemes for bsdes},
  author={Chassagneux, Jean-Fran{\c{c}}ois and Chen, Junchao and Frikha, Noufel},
  journal={arXiv preprint arXiv:2212.14372},
  year={2022}
}

\clearpage
\appendix

\section{Limitations}
\label{sec:limitations}

We provide a concise, bulleted list of the limitations present in our work.
\begin{enumerate}[label=(\alph*), itemsep=0pt, topsep=0pt, parsep=2pt, partopsep=0pt, leftmargin=*]
    \item \textit{Computational overhead of Heun-BSDE:} As discussed in \Cref{sec:experiments:one_step} (cf.~\Cref{tab:runtimes_average}), the computational overhead of the Heun-BSDE method compared with the existing EM-BSDE methods is non-trivial.
    Furthermore, we also found the Heun integrator to be more susceptible to floating point imprecision (cf.~\Cref{tab:f32vsf64}), and hence
    we run our main experiments in \texttt{float64}, which further adds to the computation time.
    We mitigate these issues using batched computation and random sub-sampling (see~\Cref{sec:appendix:experiments:algorithms}) which helped significantly reduce computation time, but there still remains a computational penalty.
    While we believe more sophisticated techniques (e.g., randomized trace estimation and more numerically stable Heun integrators discussed in \Cref{sec:experiments:one_step}) can help to reduce 
    the computational overhead, we have not yet verified this hypothesis experimentally in our current work.

    \item \textit{Performance relative to PINNs:} Our results in \Cref{sec:experiments:one_step} show that the proposed Heun-BSDE method restores parity with the FS-PINNs method in terms of the RL2 error. While this is a significant improvement over EM-BSDE, Heun-BSDE does not yet provide significant (e.g., orders of magnitude) performance improvement over the best PINNs method (cf.~\Cref{tab:resultsummary});
    further work is needed to determine whether or not such an improvement is possible.
    Hence, the current advantage of Heun-BSDE over PINNs is with its model-free capabilities (cf.~\Cref{sec:appendix:model_free}).
    \item \textit{Limitations of theoretical analysis:} 
    While our theoretical analysis in \Cref{sec:one_step_BSDE} and \Cref{sec:long_horizon_BSDE_analysis} is fairly predictive of practice (cf.~\Cref{sec:experiments}), our analysis is not without its own set of limitations.
    One limitation is that we only analyze the two extremes of horizon length: the one-step 
    case (\Cref{sec:one_step_BSDE}) and the full-horizon $N$-step case (\Cref{sec:long_horizon_BSDE_analysis});
    the intermediate regimes (i.e., skip-lengths $k$ satisfying $1 < k < N$) are studied empirically
    (cf.~\Cref{sec:experiments:skipped_self_regularization}). 
    Another limitation is that we do not consider \emph{fully-coupled} FS-BSDEs in our analysis
    (e.g., the Bender \& Zhang (BZ) PDE, cf.~\Cref{sec:appendix:experiments:PDEs}),
    where the forwards SDE \eqref{eq:forward_SDE} is allowed to depend on $Y_t$.
\end{enumerate}

\section{Further Discussion on Related Works}
\label{sec:appendix:related_works}

While our work focuses on the impact that the widely used EM integration scheme has on the performance of BSDE-based solvers, several complementary enhancements to the BSDE loss have been proposed in literature~\cite{wang2022modelfree,naito2020weakexpansions,takanashi2022controlvariate,pham2021nnbackward,beck2021deepsplitting}. Many of these improvements are naturally compatible with our proposed Heun-BSDE framework. For example, the Heun-BSDE method could be adapted to utilize a control variate~\cite{takanashi2022controlvariate}, applied in operator splitting settings~\cite{beck2021deepsplitting}, 
and extended to fully non-linear PDEs~\cite{pham2021nnbackward}, enabling direct comparison against their EM-BSDE baseline.

Furthermore, in addition to PINNs and EM-based BSDE methods discussed in the paper, there are various other deep-learning methods for solving PDEs such as Deep Ritz~\cite{yu2018deep}, Neural Operators~\cite{kovachki2023neuraloperator}, and
tensor trains~\cite{richter2023tensortrains},
in addition to various theoretical analyses developed for It\^o-based BSDE formulations~\cite{han2020convergencebsde}.
We leave extending these approaches and analyses to 
Stratonovich-based formulations as future research directions.

\section{Details Regarding Model-Free BSDE Formulation}
\label{sec:appendix:model_free}

Suppose that the drift term $f(x, t)$ from the forwards SDE
\eqref{eq:forward_SDE} is \emph{unknown}.
Instead, suppose that our the computational model 
takes as input a realization of $(B_t)_{t=0}^{T}$
and returns FSDE \emph{trajectories} $(X_t)_{t=0}^{T}$, or more practically a sub-sampled trajectory $\{X_k\}_{k=0}^{N}$.
Under this computational model, PINNs methods cannot be used to solve the PDE \eqref{eq:main_PDE}---even if the other terms $g$, $h$, and $\phi$ are all known---since the residual term $R[u]$ cannot be computed without knowledge of the drift term $f$. However, in this settings, BSDE methods including the proposed Heun-BSDE method can still be used.
This is because BSDE losses only require access 
to the FSDE trajectory $(X_t)$ and the Brownian motion $(B_t)$ used to generate it (cf.~\Cref{eq:BSDE}).

A specific example where the proposed computational model is realistic comes from model-free optimal control, and was first described in~\cite{wang2022modelfree}. 
Consider the following deterministic continuous-time control-affine \emph{fully-actuated} system:
\begin{align}
    \dot{X}_t = f(X_t, t) + \Phi(X_t, t) U_t, \quad X_t, U_t \in \R^d, \quad \rank(\Phi(x, t)) = d\,\,\forall\,(x,t). \label{eq:control_affine}
\end{align}
We assume that we do not know the drift term $f(x, t)$, but we are able to select control inputs $U_t$ and obtain the resulting trajectories $(X_t)$; this setting is often called the \emph{model-free setting} in optimal control and reinforcement learning.
In this framework,
by setting the input $U_t$ to 
be a nominal input $\bar{U}_t$ injected with excitation noise injected,
i.e., $U_t = \bar{U}_t + \textrm{``}\mathrm{Noise}_t\textrm{''}$,
we can select the realization $(B_t)$ of Brownian noise and
observe the trajectories of forward SDE of the forms:
\begin{align}
    \rmd X_t &= [ f(X_t, t) + \Phi(X_t, t) \bar{U}_t ]\rmd t + g(X_t, t) \diamond \rmd B_t \label{eq:control_affine_SDE},
\end{align}
where the $\diamond$ indicates the SDE is to be interpreted in terms of either It{\^{o}} or Stratonovich, depending on the context.
To rigorously define $\textrm{``}\mathrm{Noise}_t\textrm{''}$ to establish a connection between
\eqref{eq:control_affine} and \eqref{eq:control_affine_SDE} is
technical, requiring the use of e.g., rough path theory~\cite{lyons1998}.
We will take a more practical approach inspired from \cite{wang2022modelfree} and observe how injecting Gaussian noise into discretizations of \eqref{eq:control_affine}
induces stochastic discretizations of \eqref{eq:control_affine_SDE}.

Concretely, we proceed as follows.
We work with constant step-size integrators, and define
integration times $t_k = k \tau$, $k \in \N$,
for a step-size $\tau > 0$.
Given a nominal control input $\bar{U}_t$ which is an open-loop (i.e., only time-dependent) signal,\footnote{A similar argument can also be made for state-dependent policies.} we form our 
control input $U_t$ by injecting Gaussian noise as follows:
\begin{align}
    U_t = \bar{U}_t + w_{\floor{t/\tau}} / \sqrt{\tau}, \quad w_k \sim \sfN(0, I_d). \label{eq:exploration_noise}
\end{align}
For what follows we 
define $\hat{U}_k := U_{t_k}$ and $\hat{\bar{U}}_k := \bar{U}_{t_k}$ for $k \in \N$.
We will assume that our dynamics $f, \Phi$ are continuous in both $(x, t)$, in addition to the nominal signal $\bar{U}_t$
being continuous in $t$. However, since our signal $U_t$ is discontinuous in $t$ due to the addition of the Gaussian noise, 
the resulting vector field
$F(x, t) := f(x, t) + \Phi(x, t) U_t$ is
discontinuous on $t$. Hence, we will assume that
the integration strategies will, in order to generate the $(k+1)$-th iterate given the $k$-th iterate, only evaluate the vector field $F(x, t)$ on the half-open interval $[t_k, t_{k+1})$.
In particular, we will 
interpret 
$F(x, t_k)$ using the right limit
$F(x, t_k^+) = \lim_{t \to t_k^+} F(x, t)$ and
$F(x, t_{k+1})$ using the left limit 
$F(x, t_{k+1}^-) = \lim_{t \to t_{k+1}^-} F(x, t)$.

\paragraph{Euler scheme.}
Consider the standard forward Euler scheme used to discretize
\eqref{eq:control_affine} with constant step-size $\tau$:
\begin{align*}
    \hat{X}_{k+1} &= \hat{X}_k + \tau [f(\hat{X}_k, t_k) + \Phi(\hat{X}_k, t_k) \hat{U}_k] \\
    &= \hat{X}_k + \tau [ f(\hat{X}_k, t_k) + \Phi(\hat{X}_k, t_k) \hat{\bar{U}}_k] + \sqrt{\tau} \Phi(\hat{X}_k, t_k) w_k,
\end{align*}
which corresponds to the standard Euler-Maruyama discretization of the It{\^{o}} variant of \eqref{eq:control_affine_SDE}
with $g = \Phi$.

\paragraph{Heun scheme.}
Now consider the Heun scheme used to discretize \eqref{eq:control_affine}, again with constant step-size $\tau$:
\begin{align*}
    \bar{X}_{k+1} &= \hat{X}_k + \tau [ f(\hat{X}_k, t_k) + \Phi(\hat{X}_k, t_k) \hat{U}_k ], \\
    \hat{X}_{k+1} &= \hat{X}_k + \frac{\tau}{2}\left[ f(\hat{X}_k, t_k) + \Phi(\hat{X}_k, t_k) \hat{U}_k   + f(\bar{X}_{k+1}, t_{k+1}) + \Phi(\bar{X}_{k+1}, t_{k+1}) \hat{U}_{k+1}^-   \right],
\end{align*}
where $\hat{U}_{k+1}^- = \hat{\bar{U}}_{k+1} + w_k / \sqrt{\tau}$.
Using the shorthand $\bar{F}(x, t) := f(x, t) + \Phi(x, t) \bar{U}_t$, we see that:
\begin{align*}
    \bar{X}_{k+1} &= \hat{X}_k + \tau \bar{F}(\hat{X}_k, t_k) + \sqrt{\tau} \Phi(\hat{X}_k, t_k) w_k, \\
    \hat{X}_{k+1} &= \hat{X}_k + \frac{\tau}{2}\left[ \bar{F}(\hat{X}_k, t_k) + \bar{F}(\bar{X}_{k+1}, t_{k+1}) \right] + \frac{\sqrt{\tau}}{2} \left[ \Phi(\hat{X}_k, t_k) + \Phi(\bar{X}_{k+1}, t_{k+1}) \right] w_k,
\end{align*}
which corresponds to the stochastic Heun discretization of the Stratonovich variant of \eqref{eq:control_affine_SDE}, again with $g = \Phi$.

\section{Proofs of Main Results}
\label{sec:appendix:proofs}

\subsection{Auxiliary results}

We first recall a standard formula for the variance of Gaussian quadratic forms.
\begin{myprop}
\label{prop:hutchinson_variance}
Let $Q$ be a $d \times d$ symmetric matrix and $w \sim \sfN(0, I_d)$. Then,
\begin{align*}
    \E_{w} (\Tr(Q) - w^\T Q w)^2 = 2 \norm{Q}_F^2.
\end{align*}
\end{myprop}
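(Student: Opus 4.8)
The plan is to prove the identity $\E_w(\Tr(Q) - w^\T Q w)^2 = 2\norm{Q}_F^2$ by expanding the square and computing each resulting moment directly. First I would write the square as
\begin{align*}
    \E_w(\Tr(Q) - w^\T Q w)^2 = (\Tr Q)^2 - 2 \Tr(Q)\, \E_w[w^\T Q w] + \E_w[(w^\T Q w)^2].
\end{align*}
The cross term is immediate: since $\E_w[w^\T Q w] = \sum_{i,j} Q_{ij}\, \E[w_i w_j] = \sum_i Q_{ii} = \Tr Q$, the first two terms combine to $(\Tr Q)^2 - 2(\Tr Q)^2 = -(\Tr Q)^2$. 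The real content is in evaluating the fourth-order term $\E_w[(w^\T Q w)^2]$.

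For the fourth moment, I would write $(w^\T Q w)^2 = \sum_{i,j,k,l} Q_{ij} Q_{kl}\, w_i w_j w_k w_l$ and use the standard Gaussian moment formula for $\E[w_i w_j w_k w_l]$, which (for $w \sim \sfN(0,I_d)$ with independent standard normal coordinates) equals $\delta_{ij}\delta_{kl} + \delta_{ik}\delta_{jl} + \delta_{il}\delta_{jk}$ (Isserlis'/Wick's theorem). Substituting this into the sum gives three contributions: the $\delta_{ij}\delta_{kl}$ term yields $\sum_{i,k} Q_{ii} Q_{kk} = (\Tr Q)^2$; the $\delta_{ik}\delta_{jl}$ term yields $\sum_{i,j} Q_{ij} Q_{ij} = \norm{Q}_F^2$; and the $\delta_{il}\delta_{jk}$ term yields $\sum_{i,j} Q_{ij} Q_{ji} = \Tr(Q^2) = \norm{Q}_F^2$, where the last equality uses that $Q$ is symmetric. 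Hence $\E_w[(w^\T Q w)^2] = (\Tr Q)^2 + 2\norm{Q}_F^2$.

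Combining the two parts yields $\E_w(\Tr Q - w^\T Q w)^2 = -(\Tr Q)^2 + (\Tr Q)^2 + 2\norm{Q}_F^2 = 2\norm{Q}_F^2$, as claimed. The only place where symmetry of $Q$ is genuinely used is in identifying $\sum_{i,j} Q_{ij} Q_{ji}$ with $\norm{Q}_F^2$; I would flag this explicitly since it is the single hypothesis of the proposition. There is no real obstacle here---the argument is a direct moment computation---but the one step requiring care is invoking the correct form of Isserlis' theorem and tracking the three index-pairing contributions without double counting. An alternative, essentially equivalent, route would be to diagonalize $Q = U\Lambda U^\T$, use rotational invariance of the Gaussian to replace $w$ by $U^\T w$ (still standard normal), and reduce to the diagonal case where $w^\T Q w = \sum_i \lambda_i w_i^2$; then $\Var(\sum_i \lambda_i w_i^2) = \sum_i \lambda_i^2 \Var(w_i^2) = 2\sum_i \lambda_i^2 = 2\norm{Q}_F^2$, using $\Var(w_i^2) = 2$ for a standard normal. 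I would likely present the direct expansion as the primary proof and mention the diagonalization as a one-line remark.
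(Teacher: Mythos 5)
Your proof is correct and follows essentially the same route as the paper: expand the square, use $\E_w[w^\T Q w] = \Tr(Q)$, and reduce everything to the fourth-moment identity $\E_w[(w^\T Q w)^2] = (\Tr Q)^2 + 2\Tr(Q^2)$, with symmetry of $Q$ identifying $\Tr(Q^2)$ with $\norm{Q}_F^2$. The only difference is that where the paper cites this identity from Magnus (1978, Lemma 2.2), you derive it from scratch via Isserlis'/Wick's theorem (and your diagonalization remark is also valid), making your version self-contained but otherwise identical in structure.
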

\begin{proof}
We have that $\E_w (\Tr(Q) - w^\T Q w)^2 = \E_w (w^\T Q w)^2 - \Tr(Q)^2$.
From \cite[cf.][Lemma 2.2]{magnus1978gaussians} we obtain
the identity $\E_w(w^\T Q w)^2 = 2 \Tr(Q^2) + \Tr(Q)^2$,
which concludes the proof.
\end{proof}

\begin{myprop}
\label{prop:stochastic_riemann_sum}
Let $(X_t)_{t=a}^{b}$ for $a \leq b$ denote an $\R^d$-valued stochastic process,
and let $r : \R^d \times [a, b] \mapsto \R$ be an $L$-Lipschitz function on its domain.
Suppose that for all $t_1, t_2 \in [a, b]$ we have
$\E \norm{X_{t_1} - X_{t_2}}^2 \leq M \abs{t_1-t_2}$. Then,
\begin{align}
    \int_{a}^{b} \E[ \abs{r(X_t, t) - r(X_a, a)} ] \rmd t \leq L \left[ M^{1/2} (b-a)^{3/2} + (b-a) ^2 \right]. \label{eq:stochastic_riemann_sum_one}
\end{align}
Furthermore, suppose that for some $0 < \tau \leq 1$, we have $N := (b-a)/\tau \in \N_+$.
Define $t_n := a + \tau n$ for $n \in \{0, \dots, N\}$.
Then we have that the left-endpoint Riemann sum satisfies:
\begin{align}
    \bigabs{ \frac{1}{b-a} \int_a^b \E[ r(X_t, t) ] \rmd t - \frac{1}{N}\sum_{n=0}^{N-1}\E[ r(X_{t_n}, t_n) ] } \leq L(1 + M^{1/2}) \tau^{1/2}.  \label{eq:stochastic_riemann_sum_full}
\end{align}
\end{myprop}
\begin{proof}
First, we have:
\begin{align*}
    \int_a^b \E[\abs{ r(X_t, t) - r(X_a, a) }] \rmd t 
    &\leq L \int_a^b \E \bignorm{\cvectwo{X_t - X_a}{t - a}} \rmd t \\
    &\leq L \int_a^b \E \norm{X_t - X_a} \rmd t + L \int_a^b (t-a) \rmd t \\
    &\stackrel{(a)}{=} L \int_a^b \E \norm{X_t - X_a} \rmd t + \frac{L (b-a)^2}{2} \\
    &\leq \frac{2LM^{1/2}}{3}(b-a)^{3/2} + \frac{L (b-a)^2}{2} ,
\end{align*}
where (a) follows by Jensen's inequality and the second moment assumption on $(X_t)_t$:
\begin{align*}
    L \int_a^b \E \norm{X_t - X_a} \rmd t \leq L \int_a^b \sqrt{\E\norm{X_t - X_a}^2} \rmd t \leq LM^{1/2} \int_a^b (t - a)^{1/2} \rmd t = \frac{2LM^{1/2}}{3}(b-a)^{3/2}.
\end{align*}
This establishes \eqref{eq:stochastic_riemann_sum_one}.
We now turn to \eqref{eq:stochastic_riemann_sum_full}.
We write:
\begin{align*}
    \bigabs{\int_a^b \E[r(X_t, t)] \rmd t - \tau \sum_{n=0}^{N-1} \E[ r(X_{t_n}, t_n) ]} &= \bigabs{\sum_{n=0}^{N-1} \int_{t_n}^{t_{n+1}} \E[ r(X_t, t) ] \rmd t - \tau \sum_{n=0}^{N-1} \E[ r(X_{t_n}, t_n) ]} \\
    &= \bigabs{\sum_{n=0}^{N-1} \left( \int_{t_n}^{t_{n+1}} \E[ r(X_t, t) - r(X_{t_n}, t_n) ] \rmd t \right) } \\
    &\leq \sum_{n=0}^{N-1} \int_{t_n}^{t_{n+1}} \E[ \abs{ r(X_t, t) - r(X_{t_n}, t_n) } ] \rmd t \\
    &\stackrel{(a)}{\leq} N L \left[ M^{1/2} \tau^{3/2} + \tau^2 \right] \\
    &\stackrel{(b)}{\leq} N L (M^{1/2} + 1) \tau^{3/2} = (b-a) L (M^{1/2} + 1) \tau^{1/2},
\end{align*}
where (a) is from \eqref{eq:stochastic_riemann_sum_one}
and (b) is from the assumption that $\tau \leq 1$.
This establishes \eqref{eq:stochastic_riemann_sum_full}.
\end{proof}

\begin{myprop}
\label{prop:stochastic_riemann_squared}
Let $(X_t)_{t=a}^{b}$ for $a \leq b$ 
denote an $\R^d$-valued stochastic process,
and let $r : \R^d \times [a, b] \mapsto \R$ be a $B$-bounded and 
$L$-Lipschitz function on its domain.
Suppose that for all $t_1, t_2 \in [a, b]$ we have
$\E \norm{X_{t_1} - X_{t_2}}^2 \leq M \abs{t_1-t_2}$, with $M \geq 1$. Then, if $b-a \leq 1$, 
\begin{align}
    \bigabs{\E\left( \int_a^b r(X_t, t) \rmd t \right)^2 - (b-a)^2 \E[ r^2(X_a, a) ]} \leq \left[ L^2 M + 4 B L M^{1/2} \right] (b-a)^{5/2}.
\end{align}
\end{myprop}
\begin{proof}
We first decompose:
\begin{align*}
    \E\left(\int_{a}^{b} r(X_t, t) \rmd t\right)^2 &= \E\left( (b-a) r(X_{a}, a) + \int_{a}^{b} [ r(X_t, t) - r(X_{a}, a) ] \rmd t \right)^2 \\
    &= (b-a)^2 \E[ r^2(X_{a}, a) ] + \E\left( \int_{a}^{b} [ r(X_t, t) - r(X_{a}, a) ] \rmd t \right)^2 \\
    &\qquad + 2 (b-a) \E\left[  r(X_{a}, a) \int_{a}^{b} [ r(X_t, t) - r(X_{a}, a) ] \rmd t \right] .
\end{align*}
Next, by Jensen's inequality,
\begin{align*}
    \E\left( \int_{a}^{b} [ r(X_t, t) - r(X_{a}, a) ] \rmd t \right)^2 &\leq (b-a) \int_a^b \E[ (r(X_t, t) - r(X_{a}, a) )^2 ] \rmd t \\
    &\leq (b-a) L^2 \int_a^b (\E[ \norm{X_t - X_a}^2 ] + (t-a)^2) \rmd t \\
    &\leq (b-a) L^2 \int_a^b (M (t-a) + (t-a)^2) \rmd t 
    \stackrel{(a)}{\leq} L^2 M (b-a)^3,
\end{align*}
where in (a) we use the assumptions that $M \geq 1$ and $b-a \leq 1$.
On the other hand, by another application of Jensen's inequality,
\begin{align*}
    2(b-a)\bigabs{\E\left[  r(X_{a}, a) \int_{a}^{b} [ r(X_t, t) - r(X_{a}, a) ] \rmd t \right]} &\leq 2(b-a) B \int_a^b \E[ \abs{ r(X_t, t) - r(X_a, a) } ] \rmd t \\
    &\stackrel{(a)}{\leq} 2B L \left[ M^{1/2} (b-a)^{5/2} + (b-a)^3 \right] \\
    &\stackrel{(b)}{\leq} 4BL M^{1/2} (b-a)^{5/2},
\end{align*}
where (a) uses \Cref{prop:stochastic_riemann_sum}, specifically \eqref{eq:stochastic_riemann_sum_one},
and (b) uses the assumptions that $M \geq 1$ and $b-a \leq 1$.
The claim now follows.
\end{proof}

\begin{myprop}
\label{prop:SDE_holder_one_half}
Consider the It{\^{o}} SDE $(X_t)_{t=a}^{b}$
defined by $\rmd X_t = f(X_t, t) \rmd t + g(X_t, t) \rmd B_t$,
with 
\begin{align*}
    \sup_{(x, t) \in \R^d \times [a, b]} \max\{\norm{f(x, t)}, \norm{g(x, t)}_F \} \leq B.
\end{align*}
For any $t_0, t_1 \in [a, b]$, 
\begin{align*}
    \E\norm{X_{t_1} - X_{t_0}}^2 \leq 2B^2 \left[ (t_1 - t_0)^2 + \abs{t_1 - t_0} \right].
\end{align*}
\end{myprop}
\begin{proof}
Assume wlog that $t_1 \geq t_0$.
We first decompose:
\begin{align*}
    \E \norm{ X_{t_1} - X_{t_0} }^2 &= \E \bignorm{ \int_{t_0}^{t_1} f(X_t, t) \rmd t + \int_{t_0}^{t_1} g(X_t, t) \rmd B_t }^2 \\
    &\leq 2 \E \bignorm{ \int_{t_0}^{t_1} f(X_t, t) \rmd t }^2 + 2 \E \bignorm{ \int_{t_0}^{t_1} g(X_t, t) \rmd B_t  }^2.
\end{align*}
Next, we have:
\begin{align*}
    \bignorm{ \int_{t_0}^{t_1} f(X_t, t) \rmd t } \leq \int_{t_0}^{t_1} \norm{ f(X_t, t) } \rmd t \leq (t_1 - t_0) B \Longrightarrow \E \bignorm{ \int_{t_0}^{t_1} f(X_t, t) \rmd t }^2 \leq B^2 (t_1-t_0)^2.
\end{align*}
On the other hand, by It{\^{o}} isometry,
\begin{align*}
    \E \bignorm{ \int_{t_0}^{t_1} g(X_t, t) \rmd B_t  }^2 &= \int_{t_0}^{t_1} \E\norm{ g(X_t, t) }_F^2 \rmd t \leq B^2 (t_1 - t_0).
\end{align*}
The result now follows
\end{proof}

\begin{myprop}
\label{prop:strat_SDE_holder_one_half}
Consider the Stratonovich SDE $(\Xs_t)_{t=a}^{b}$
defined by $\rmd \Xs_t = f(\Xs_t, t) + g(\Xs_t, t) \circ \rmd B_t$.
Suppose that for $t \in [a, b]$, the map $x \mapsto g(x, t)$ is $C^1$, and that
\begin{align*}
    \sup_{(x, t) \in \R^d \times [a, b]} \max\left\{ \bignorm{ f(x, t) + \frac{1}{2}\sum_{k=1}^{m} \partial_x g^k(x, t) g(x, t)}, \norm{ g(x, t) }_F \right\} \leq B.
\end{align*}
Then for any $t_0, t_1 \in [a, b]$,
\begin{align*}
    \E\norm{\Xs_{t_1} - \Xs_{t_0}}^2 \leq 2B^2 \left[ (t_1 - t_0)^2 + \abs{t_1 - t_0} \right].
\end{align*}
\end{myprop}
\begin{proof}
Consider $\bar{f}(x, t) := f(x, t) + \frac{1}{2}\sum_{k=1}^{m} \partial_x g^k(x, t) g(x, t)$.
The It{\^{o}} SDE $\rmd X_t = \bar{f}(X_t, t) \rmd t + g(X_t, t) \rmd B_t$ is pathwise equivalent to $(\Xs_t)_t$, i.e., $(\Xs_t(\omega))_t = (X_t(\omega))_t$
for a.e.\ $\omega$, and hence the result follows from 
\Cref{prop:SDE_holder_one_half}.
\end{proof}

\subsection{Proofs of \Cref{lemma:euler_maruyama_bsde} and \Cref{thm:EM_final_form}}
\label{sec:appendix:proofs:euler_maruyama_bsde}

We first restate and prove  \Cref{lemma:euler_maruyama_bsde}.
\EMthm*
\begin{proof}
We first introduce two pieces of notation:
$O(\cdot)$ and $O^*(\cdot)$.
The former $O(\cdot)$ hides constants that depend 
arbitrarily on the dimension $d$, the bounds on $f$ and $g$,
and $\norm{u_\theta}_{C^{2,1}}$,
whereas the latter $O^*(\cdot)$ in addition also
hides constants that depend \emph{polynomially} on $\norm{w}$.
The latter polynomial dependence is important 
when we take expectations of powers of $O^*(\cdot)$ terms,
since $\E \norm{w}^p$ is finite for any finite $p \in \N$.

Setting $\bar{\Delta} := (\hat{x}_{t+\tau} - x, \tau) \in \R^d \times \calI$ and writing $u = u_\theta$, a second-order Taylor expansion yields:
\begin{align*}
    u(\hat{x}_{t+\tau}, t + \tau) - u(x, t) 
    &= Du(x, t) \bar{\Delta} + \frac{1}{2} \bar{\Delta}^\T D^2 u(x, t) \bar{\Delta} + O(\norm{\bar{\Delta}}^3), \\
    D u(x, t) \bar{\Delta} &= \ip{\nabla u(x, t)}{ \hat{x}_{t+\tau} - x } + \partial_t u(x, t) \tau, \\
    \frac{1}{2} \bar{\Delta}^\T D^2 u(x, t) \bar{\Delta} &= \frac{1}{2}\big(  (\hat{x}_{t+\tau} - x)^\T \nabla^2 u(x, t) (\hat{x}_{t+\tau} - x) \\
    &\qquad + 2 \tau \ip{\partial_t \nabla u(x, t)}{\hat{x}_{t+\tau} - x} + \tau^2 \partial^2_{t} u(x, t) \big).
\end{align*}
Plugging in $\hat{x}_{t+\tau} - x = f(x, t) \tau + \sqrt{\tau} g(x, t) w$, we obtain:
\begin{align*}
    &(u(\hat{x}_{t+\tau}, t + \tau) - u(x, t)) - (h(x, t) \tau - \sqrt{\tau} \ip{\nabla u(x, t)}{g(x, t) w}) \\
    &= \tau \left[ \ip{\nabla u(x, t)}{f(x, t)} + \partial_t u(x, t) - h(x, t) + \frac{1}{2} w^\T g(x, t)^\T \nabla^2 u(x, t) g(x, t) w  \right] + O^*(\tau^{3/2}) \\
    &= \tau \left[ R[u](x, t) - \frac{1}{2} \Tr(H(x, t) \cdot \nabla^2 u(x, t)) + \frac{1}{2} w^\T g(x, t)^\T \nabla^2 u(x, t) g(x, t) w  \right] + O^*(\tau^{3/2}),
\end{align*}
where in the last equality we used the definition
of the PDE residual from \eqref{eq:main_PDE}.
Hence,
\begin{align*}
    &\ell_{\mathrm{EM},\tau}(x, t) \\
    &= \E_w (u(\hat{x}_{t+\tau}, t + \tau) - u(x, t)) - (h(x, t) \tau - \sqrt{\tau} \ip{\nabla u(x, t)}{g(x, t) w})^2 \\
    &= \tau^2 \cdot \E_w \left(R[u](x, t) - \frac{1}{2} \Tr(H(x, t) \cdot \nabla^2 u(x, t)) + \frac{1}{2} w^\T g(x, t)^\T \nabla^2 u(x, t) g(x, t) w\right)^2 + O(\tau^{5/2}) \\
    &= \tau^2 \left( (R[u](x, t))^2 + \frac{1}{4} \E_w\left( \Tr(H(x, t) \cdot \nabla^2 u(x, t)) - w^\T g(x, t)^\T \nabla^2 u(x, t) g(x, t) w \right)^2   \right) + O(\tau^{5/2}) \\
    &= \tau^2 \left( (R[u](x, t))^2 + \frac{1}{2}  \Tr( (H(x, t) \cdot \nabla^2 u(x, t))^2 ) \right) + O(\tau^{5/2}),
\end{align*}
where the final equality follows from \Cref{prop:hutchinson_variance}.
The claim now follows.
\end{proof}

Next, we use \Cref{lemma:euler_maruyama_bsde}, along with
order $1/2$ strong convergence for EM integration (\Cref{sec:appendix:SDE_convergence})
to show the following result.
\EMfinalform*
\begin{proof}
To start, we have that:
\begin{align*}
    L_{\mathrm{EM},\tau}(\theta) &= \frac{1}{N} \sum_{n=0}^{N-1} \E_{\hat{X}_n}[ \tau^{-2} \cdot \ell_{\mathrm{EM,\tau}}(\theta, \hat{X}_n, t_n) ] \\
    &\stackrel{(a)}{=} \frac{1}{N} \sum_{n=0}^{N-1} \E_{\hat{X}_n}\Big[ \underbrace{(R[u_\theta](\hat{X}_n, t_n))^2 + \frac{1}{2} \Tr\left[ (H(\hat{X}_n, t_n) \cdot \nabla^2 u_\theta (\hat{X}_n, t_n))^2 \right]}_{=: \bar{R}_\theta(\hat{X}_n, t_n)} \Big] + O(\tau^{1/2}),
\end{align*}
where (a) comes from \Cref{lemma:euler_maruyama_bsde}
which holds since (i) $f,g \in C^{0,1}$ implies $f,g$ are bounded, and
(ii) $u_\theta \in C^{2,1}$.
Our next observation is that the map $\bar{R}_\theta$ is also Lipschitz
continuous over the domain $\R^d \times \calI$ by our assumptions $f,g,h[u_\theta] \in C^{0,1}$ and $u_\theta \in C^{2,1}$.
Let us call this Lipschitz constant $L_{\bar{R}_\theta}$, 
which depends only on 
the norms $\norm{f}_{C^{0,1}}, \norm{g}_{C^{0,1}}, \norm{h[u_\theta]}_{C^{0,1}}, \norm{u_\theta}_{C^{2,1}}$.
Continuing from above,
\begin{align*}
    \E_{\hat{X}_n}[ \bar{R}_\theta(\hat{X}_n, t_n) ] &\stackrel{(a)}{=} \E_{(B_t)_{t}}[ \bar{R}_\theta(\hat{X}_n, t_n)  ] \\
    &\stackrel{(b)}{=} \E_{(B_t)_{t}}[ \bar{R}_\theta(X_{t_n}, t_n) ] + \E_{(B_t)_{t}}[ \bar{R}_\theta(\hat{X}_n, t_n) - \bar{R}_\theta(X_{t_n}, t_n) ],
\end{align*}
where in (a) we consider the process $\{\hat{X}_n\}_n$ as being defined over
$\{\Delta W_n\}_{n} := \{B_{t_{n+1}} - B_{t_n}\}_{n}$ in place of the
process $\{\sqrt{\tau} w_n\}_{n}$ in \eqref{eq:backwards_SDE_EM}, which is distributionally equivalent,
in (b) we consider the forward SDE $(X_t)_t$ from \eqref{eq:forward_SDE}
as being \emph{coupled} with the process $\{\hat{X}_n\}_n$ via the same
realization of both Brownian motion $(B_t)_t$ and $X_0 = \hat{X}_0 = x_0$.
Hence we have:
\begin{align*}
    \bigabs{ \E_{\hat{X}_n}[ \bar{R}_\theta(\hat{X}_n, t_n) ] - \E_{(B_t)_{t}}[ \bar{R}_\theta(\hat{X}_n, t_n)  ] } &\leq \E_{(B_t)_t}[\abs{ \bar{R}_\theta(\hat{X}_n, t_n) - \bar{R}_\theta(X_{t_n}, t_n) } ] \\
    &\leq L_{\bar{R}_\theta} \E_{(B_t)_t}[ \norm{ \hat{X}_n - X_{t_n} } ] \\
    &\leq L_{\bar{R}_\theta} \sqrt{\E_{(B_t)_t}[ \norm{ \hat{X}_n - X_{t_n} }^2 ] }.
\end{align*}
Now, by definition, since the functions $f, g \in C^{0, 1}$, then the pair $(f, g)$ is EM-regular (\Cref{def:EM_regularity}).
From \Cref{thm:EM_strong_order_one_half}, we have that $\{\hat{X}_n\}_n$ is strong order $1/2$ convergent towards $(X_t)_t$, and hence
$\E_{(B_t)_t}[ \norm{ \hat{X}_n - X_{t_n} }^2 ] \leq \E_{(B_t)_t}[ \max_{n \in \{0, \dots, N\}} \norm{ \hat{X}_n - X_{t_n} }^2 ] \leq C^2 \tau$, where the constant
$C$ does not depend on $\tau$, but can depend on $d$, $T$, and the
norms $\norm{f}_{C^{0,1}}$ and $\norm{g}_{C^{0,1}}$.
Consequently, 
\begin{align}
    L_{\mathrm{EM},\tau}(\theta) &= \frac{1}{N} \sum_{n=0}^{N-1} \E_{(B_t)_t}[ \bar{R}_\theta(X_{t_n}, t_n) ] + O( \tau^{1/2} ), \label{eq:EM_BSDE_equiv_with_sum}
\end{align}
Our last step is to approximate the sum $\frac{1}{N} \sum_{n=0}^{N-1}$ in \eqref{eq:EM_BSDE_equiv_with_sum}
with the integral.
To do this, we will use \Cref{prop:stochastic_riemann_sum}.
We already have $\bar{R}_\theta$ is Lipschitz over $\R^d \times \calI$.
Furthermore, since $f, g \in C^{0,1}$, they are both bounded over the domain,
and hence \Cref{prop:SDE_holder_one_half}
shows that $\E \norm{X_{t_1} - X_{t_0}}^2 \leq O(1) \abs{t_1 - t_0}$ for any $t_0, t_1 \in \calI$.
Therefore by \Cref{prop:stochastic_riemann_sum} and the assumption $\tau \leq 1$,
\begin{align*}
    \frac{1}{T} \int_0^T \E[ \bar{R}_\theta(X_t, t) ] \rmd t 
    &= \frac{1}{N} \sum_{n=0}^{N-1} \E[ \bar{R}_\theta(X_{t_n}, t_n) ] + O(\tau^{1/2}) 
    \stackrel{(a)}{=} L_{\mathrm{EM},\tau}(\theta) + O(\tau^{1/2}),
\end{align*}
where (a) is from \eqref{eq:EM_BSDE_equiv_with_sum}.
The result now follows.
\end{proof}

\subsection{Proofs of \Cref{lemma:heun_bsde} and \Cref{thm:heun_final_form}}
\label{sec:appendix:proofs:heun_bsde}

We now restate and prove \Cref{lemma:heun_bsde}.
\Heunthm*
\begin{proof}
Similar to the proof of \Cref{lemma:euler_maruyama_bsde},
we let $O(\cdot)$
hide constants that depend on $d$ and the H{\"{o}}lder norms
$\norm{f}_{C^{1,1}}$, $\norm{g}_{C^{1,1}}$, $\norm{h_\theta}_{C^{1,1}}$, and
$\norm{u_\theta}_{C^{3,1}}$,
and $O^*(\cdot)$ additionally
hides constants that depend \emph{polynomially} on $\norm{w}$.

Our first step is to check that $\hs_\theta \in C^{1,1}$ under our assumptions.
Recalling that $\hs_\theta(x, t) = h_\theta(x, t) - \frac{1}{2} \Tr( H(x, t) \nabla^2 u_\theta(x, t))$, this is ensured if $h_\theta, H \in C^{1,1}$ and $u_\theta \in C^{3,1}$,
which holds since $g \in C^{1,1}$ implies $H \in C^{1,1}$.
Furthermore, $\norm{ \hs_\theta }_{C^{1,1}} = O(1)$.
For what follows, we drop the dependency in the notation on $\theta$.

We next Taylor expand $\hat{z}_{t+\tau} - z_t$ up to order $\tau$ terms.
To do this, we observe that
by our assumptions on $f, g, \hs, u$, the functions $F, G$ are both in $C^{1,1}$.
Hence,
\begin{align*}
    F(\bar{z}_{t+\tau}, t+\tau) &= F(z_t, t) + D_Z F(z_t, t)[\bar{z}_{t+\tau} - z_t] + \partial_t F(z_t, t) \tau +  O(\norm{\bar{z}_{t+\tau} - z_t}^2) + O(\tau^2) \\
    &= F(z_t, t) + D_Z F(z_t, t)[G(z_t, t) w] \sqrt{\tau} + O^*(\tau).
\end{align*}

By a similar argument,
\begin{align*}
    G(\bar{z}_{t+\tau}, t+\tau) &= G(z_t, t) + D_Z G(z_t, t)[G(z_t, t) w] \sqrt{\tau} + O^*(\tau).
\end{align*}
Hence,
\begin{align}
    \hat{z}_{t+\tau} - z_t &= \left[ F(z_t, t) + \frac{1}{2} D_Z G(z_t, t)[G(z_t, t)w]w \right] \tau + G(z_t, t) w \sqrt{\tau} + O^*(\tau^{3/2}). \label{eq:heun_stochastic_taylor}
\end{align}
A straightforward computation yields:
\begin{align*}
    D_Z G((x, y), t)[(\Delta_x, \Delta_y)] &= \cvectwo{ Dg(x, t)[\Delta_x] }{ \nabla u(x, t)^\T Dg(x, t)[\Delta_x] + \Delta_x^\T \nabla^2 u(x, t) g(x, t) },
\end{align*}
and therefore:
\begin{align*}
    \frac{1}{2} D_Z G(z_t, t)[G(z_t, t)w]w = \frac{1}{2} \cvectwo{  Dg(x, t)[ g(x, t)w ] w  }{ \nabla u(x, t)^\T Dg(x, t)[g(x, t) w] w + w^\T g(x, t)^\T \nabla^2 u(x, t) g(x, t) w  }.
\end{align*}
Substituting the above into expression \eqref{eq:heun_stochastic_taylor} 
for $\hat{z}_{t+\tau} - z_t$,
\begin{align}
    \hat{z}_{t+\tau} - z_t &= \cvectwo{f(x, t)}{h(x, t) - \frac{1}{2}\Tr(H(x, t) \nabla^2 u(x, t))} \tau \nonumber \\
    &\qquad+ \frac{1}{2} \cvectwo{  Dg(x, t)[ g(x, t)w ] w  }{ \nabla u(x, t)^\T Dg(x, t)[g(x, t) w] w + w^\T g(x, t)^\T \nabla^2 u(x, t) g(x, t) w  } \tau \nonumber \\
    &\qquad+ \sqrt{\tau} \cvectwo{g(x, t) w}{\nabla u(x, t)^\T g(x, t) w} + O^*(\tau^{3/2}). \label{eq:z_hat_next_minus_z}
\end{align}
Setting $\bar{\Delta} := (\hat{x}_{t+\tau} - x, \tau) \in \R^{d + 1}$ we have:
\begin{align*}
    u(\hat{x}_{t+\tau}, t + \tau) - u(x, t) 
    &= Du(x, t) \bar{\Delta} + \frac{1}{2} \bar{\Delta}^\T D^2 u(x, t) \bar{\Delta} + O(\norm{\bar{\Delta}}^3), \\
    D u(x, t) \bar{\Delta} &= \ip{\nabla u(x, t)}{ \hat{x}_{t+\tau} - x } + \partial_t u(x, t) \tau, \\
    \frac{1}{2} \bar{\Delta}^\T D^2 u(x, t) \bar{\Delta} &= \frac{1}{2}\big(  (\hat{x}_{t+\tau} - x)^\T \nabla^2 u(x, t) (\hat{x}_{t+\tau} - x) \\
    &\qquad + 2 \tau \ip{\partial_t \nabla u(x, t)}{\hat{x}_{t+\tau} - x} + \tau^2 \partial^2_{t} u(x, t) \big),
\end{align*}
from which we conclude,
\begin{align*}
    &u(\hat{x}_{t+\tau}, t+\tau) - u(x, t) \\
    &= \ip{\nabla u(x, t)}{\hat{x}_{t+\tau} - x} + \partial_t u(x,t ) \tau +  \frac{1}{2} (\hat{x}_{t+\tau}-x)^\T \nabla^2 u(x, t) (\hat{x}_{t+\tau}-x) + O(\tau^{3/2}) \\
    &= \left[ \ip{\nabla u(x, t)}{f(x, t) + \frac{1}{2} Dg(x, t)[g(x, t)w]w } + \partial_t u(x, t)  + \frac{1}{2} w^\T g(x, t)^\T \nabla^2 u(x, t) g(x, t) w \right] \tau \\
    &\qquad + \sqrt{\tau} \ip{\nabla u(x, t)}{g(x, t) w} + O^*(\tau^{3/2}).
\end{align*}
On the other hand, from \eqref{eq:z_hat_next_minus_z},
\begin{align*}
    \hat{y}_{t+\tau} - y_t &= \left[ h(x, t) - \frac{1}{2} \Tr( H(x, t) \nabla^2 u(x, t)) \right] \tau \\
    &\qquad + \frac{1}{2} \left[\nabla u(x, t)^\T D g(x, t)[g(x, t)w] w + \ w^\T g(x, t)^\T \nabla^2 u(x, t) g(x, t) w \right] \tau \\
    &\qquad + \sqrt{\tau} \nabla u(x, t)^\T g(x, t) w + O^*(\tau^{3/2}).
\end{align*}
Hence,
\begin{align*}
    &u(\hat{x}_{t+\tau}, t+\tau) - \hat{y}_{t+\tau} \\
    &= (u(\hat{x}_{t+\tau}, t+\tau) - u(x, t)) - (\hat{y}_{t+\tau} - y_t) \\ 
    &= \left[ \ip{\nabla u(x, t)}{f(x, t)} + \partial_t u(x, t) + \frac{1}{2} \Tr(H(x, t) \nabla^2 u(x, t)) - h(x, t) \right] \tau + O^*(\tau^{3/2}) \\
    &= R[u](x, t) \tau + O^*(\tau^{3/2}).
\end{align*}
To conclude,
\begin{align*}
    \E_w( u(\hat{x}_{t+\tau}, t+\tau) - \hat{y}_{t+\tau})^2 = \E_w ( R[u](x, t) \tau + O^*(\tau^{3/2}) )^2 = (R[u](x, t))^2 \tau^2 + O( \tau^{5/2} ).
\end{align*}
\end{proof}

\Heunfinalform*
\begin{proof}
The proof follows the structure of \Cref{thm:EM_final_form} closely. We start with:
\begin{align*}
    L_{\mathrm{Heun},\tau}(\theta) = \frac{1}{N}\sum_{n=0}^{N-1} \E_{\Xhats_n}[ \tau^{-2} \cdot \ell_{\mathrm{Heun},\tau}(\theta, \Xhats_n, t_n) ] \stackrel{(a)}{=} \frac{1}{N}\sum_{n=0}^{N-1} \E_{\Xhats_n}[ (R[u_\theta](\Xhats_n, t_n))^2 ] + O(\tau^{1/2})
\end{align*}
where (a) follows from \Cref{lemma:heun_bsde}.
Next, we define $\bar{R}_\theta(x, t) := (R[u_\theta](x, t))^2$, and
observe that $\bar{R}_\theta$ is Lipschitz over $\R^d \times \calI$ due to our assumptions on
$f, g, h_\theta, u_\theta$.
Hence, we have the decomposition:
\begin{align*}
    \E_{\Xhats_n}[ \bar{R}_\theta(\Xhats_n, t_n) ] &\stackrel{(a)}{=} \E_{(B_t)_t}[ \bar{R}_\theta(\Xhats_n, t_n) ] \\
    &\stackrel{(b)}{=} \E_{(B_t)_t}[ \bar{R}_\theta(\Xs_{t_n}, t_n) ] + \E_{(B_t)_t}[ \bar{R}_\theta(\Xhats_{t_n}, t_n) - \bar{R}_\theta(\Xs_n, t_n) ],
\end{align*}
where in (a) and (b) we take same steps as \Cref{thm:EM_final_form}:
(a) considers the process $\{\Xhats_n\}_n$ as being defined over $\{ \Delta W_n \}_n$ (Brownian increments) in place of $\{ \sqrt{\tau} w_n \}_n$ in \eqref{eq:heun_scheme}, 
and (b) couples the SDE $(\Xs_t)_t$ together with $\{ \Xhats_n \}_n$ via the
same Brownian motion $(B_t)_t$ and initial condition $\Xs_0 = \Xhats_0 = x_0$.
Hence, we have:
\begin{align*}
    \abs{ \E_{\Xhats_n}[ \bar{R}_\theta(\Xhats_n, t_n) ] - \E_{(B_t)_t}[ \bar{R}_\theta(\Xs_{t_n}, t_n) ] } \leq O(1) \sqrt{ \E_{(B_t)_t}[ \norm{ \Xs_{t_m} - \Xhats_n }^2 ] }.
\end{align*}
Since $f,g \in C^{1,1}$, then they are by definition Heun-regular (\Cref{def:heun_regularity}), and hence by \Cref{corollary:heun_strong_convergence},
$\E_{(B_t)_t} [\norm{ \Xs_{t_n} - \Xhats_n }^2] \leq \E_{(B_t)_t}[ \max_{n \in \{0, \dots, N\}} \norm{ \Xs_{t_n} - \Xhats_n }^2 ] \leq C^2 \tau$, where $C$ does not depend on $\tau$ 
but depends on $d$, $T$, and the H{\"{o}}lder norms on $f$, $g$.
Therefore we have:
\begin{align}
    L_{\mathrm{Heun},\tau}(\theta) = \frac{1}{N} \sum_{n=0}^{N-1} \E_{(B_t)_t}[ (R[u_\theta](\Xs_{t_n}, t_n))^2 ] + O(\tau^{1/2}). \label{eq:heun_BSDE_equiv_with_sum}
\end{align}
Now we can finish up using the same ending as \Cref{thm:EM_final_form}.
The only thing we need to do differently is to control
the second moment $\E\norm{ \Xs_{t_1} - \Xs_{t_0} }^2$.
Since $f, g \in C^{1,1}$, \Cref{prop:strat_SDE_holder_one_half} yields that 
$\E \norm{ \Xs_{t_1} - \Xs_{t_0} }^2 \leq O(1) \abs{t_1-t_0}$.
From this inequality and the Lipschitz continuity of $\bar{R}_\theta$ over the
domain $\R^d \times \calI$, by \Cref{prop:stochastic_riemann_sum}
and the assumption $\tau \leq 1$,
\begin{align*}
    \frac{1}{T} \int_0^T \E_{(B_t)_t}[ (R[u_\theta](\Xs_t, t))^2 ] \rmd t &= \frac{1}{N} \sum_{n=0}^{N-1} \E_{(B_t)_t}[ (R[u_\theta](\Xs_{t_n}, t_n))^2 ] + O(\tau^{1/2}) \\
    &\stackrel{(a)}{=} L_{\mathrm{Heun},\tau}(\theta) + O(\tau^{1/2}),
\end{align*}
where (a) is from \eqref{eq:heun_BSDE_equiv_with_sum}. The result now follows.
\end{proof}

\section{Analysis of Multi-Step BSDE Losses}
\label{sec:appendix:long_horizon_BSDE_analysis}

We now present the derivations supporting the analysis in \Cref{sec:long_horizon_BSDE_analysis}.
For what follows, we define the follow FS-PINNs loss:
\begin{align}
     L_{\FPINNs}(\theta) := \E_{\substack{x_0 \sim \mu_0, B_t}} \frac{1}{T} \int_{0}^{T} (R[u_\theta](X_t, t))^2 \rmd t. \label{eq:FS_PINNS_loss}
\end{align}

\subsection{BSDE loss and Euler-Maruyama discretization}

\begin{myprop}
\label{prop:BSDE_jensen_bound}
Suppose that $u_\theta \in C^2$. We have that:
\begin{align}
    L_{\mathrm{BSDE},T}(\theta) \leq L_{\FPINNs}(\theta). \label{eq:BSDE_jensen_bound}
\end{align}
\end{myprop}
\begin{proof}
To start, we abbreviate $R_\theta(x, t) = R[u_\theta](x, t)$.
By application of It{\^{o}}'s Lemma, we have:
\begin{align*}
    u_\theta(X_{T}, T) - u_\theta(X_{0}, 0) - \int_{0}^{T} h_\theta(X_t, t) \rmd t - \int_{0}^{T} \nabla u_\theta(X_t, t)^\T g(X_t, t) \rmd B_t = \int_{0}^{T} R_\theta(X_t, t) \rmd t,
\end{align*}
which immediately yields the following identity:
\begin{align}
    L_{\mathrm{BSDE},T}(\theta) = \E_{\substack{x_0 \sim \mu_0, B_t}} \left( \frac{1}{T} \int_{0}^{T} R_\theta(X_t, t) \rmd t  \right)^2. \label{eq:BSDE_identity}
\end{align}
Thus, the BSDE loss is equal to averaging the square of the accumulation of the residual error $R_\theta$ along the forward SDE trajectories. 
Hence by Jensen's inequality, the BSDE loss is dominated by:
\begin{align*}
    L_{\mathrm{BSDE},T}(\theta) \leq \E_{\substack{x_0 \sim \mu_0, B_t}} \frac{1}{T} \int_{0}^{T} (R_\theta(X_t, t))^2 \rmd t = L_{\FPINNs}(\theta).
\end{align*}
\end{proof}

Note that while the relationship in \eqref{eq:BSDE_jensen_bound} is pointed out in 
\cite[Section 5.2.3]{nusken2023interpolating}, 
the implications of this inequality are not discussed further in their work.

\begin{myprop}
\label{prop:BSDE_FPINNs}
Suppose that $f, g, h_\theta \in C^{0, 1}$, $u_\theta \in C^{2,1}$, and $\tau \leq 1$.
Then,
\begin{align}
    L_{\mathrm{BSDE},\tau}(\theta) &= L_{\FPINNs}(\theta) + O(\tau^{1/2}), \label{eq:BSDE_FPINNs}
\end{align}
where the $O(\cdot)$ hides constants depending on the H{\"{o}}lder norms of $f$, $g$, $h_\theta$, and $u_\theta$.
\end{myprop}
\begin{proof}
Again, we abbreviate $R_\theta(x, t) = R[u_\theta](x, t)$.
By our assumptions on $f, g, h_\theta \in C^{0, 1}$ and $u_\theta \in C^{2,1}$, we have
that $R_\theta \in C^{0, 1}$, with $\norm{R_\theta}_{C^{0, 1}} = O(1)$.
Also, since $f, g \in C^{0, 1}$, by \Cref{prop:SDE_holder_one_half}
we also have that $\E \norm{ X_{t_0} - X_{t_1} }^2 \leq O(1) \abs{t_0 - t_1}$ 
for $t_0, t_1 \in \calI$.
Hence by \Cref{prop:stochastic_riemann_squared},
\begin{align}
    \E\left(\int_{t_n}^{t_{n+1}} R_\theta(X_t, t) \rmd t\right)^2 &=  \tau^2 \E[ R_\theta^2(X_{t_n}, t_n) ] + O(\tau^{5/2}). \label{eq:one_term_sq_expansion}
\end{align}
Furthermore, since $R_\theta \in C^{0, 1}$, 
we can readily check that $R_\theta^2$ is Lipschitz on its domain as well,
and hence \Cref{prop:stochastic_riemann_sum} yield:
\begin{align}
    \frac{1}{T} \int_0^T \E[(R_\theta(X_t, t))^2] \rmd t = \frac{1}{N}\sum_{n=0}^{N-1} \E[ (R_\theta(X_{t_n}, t_n))^2 ] + O(\tau^{1/2}). \label{eq:one_term_expansion}
\end{align}

Therefore,
\begin{align*}
    L_{\mathrm{BSDE},\tau}(\theta) &= \E_{x_0 \sim \mu_0, B_t} \frac{1}{N\tau^2}\sum_{n=0}^{N-1} \left( \int_{t_n}^{t_{n+1}} R_\theta(X_t, t) \rmd t \right)^2 \nonumber \\
    &= \E_{x_0 \sim \mu_0, B_t} \frac{1}{N\tau^2}\sum_{n=0}^{N-1} \left(\tau^2 R_\theta^2(X_{t_n}, t_n) + O(\tau^{5/2})\right) &&[\text{using \eqref{eq:one_term_sq_expansion}}] \nonumber \\
    &= \E_{x_0 \sim \mu_0, B_t} \frac{1}{N}\sum_{n=0}^{N-1} R_\theta^2(X_{t_n}, t_n) + O(\tau^{1/2}) \nonumber \\
    &= \E_{x_0 \sim \mu_0, B_t} \frac{1}{T} \int_0^T (R_\theta(X_t, t))^2 \rmd t + O(\tau^{1/2}) &&[\text{using \eqref{eq:one_term_expansion}}]  \nonumber \\
    &= L_{\FPINNs}(\theta) + O(\tau^{1/2}).
\end{align*}
\end{proof}

\begin{myprop}
\label{prop:BSDE_T_vs_tau}
Suppose the assumptions of \Cref{prop:BSDE_FPINNs} hold. Then,
\begin{align}
    L_{\mathrm{BSDE},T}(\theta) \leq L_{\mathrm{BSDE},\tau}(\theta) + O(\tau^{1/2}), \label{eq:BSDE_jensen_relationship}
\end{align}
where the $O(\cdot)$ hides constants depending on the H{\^{o}}lder norms of $f$, $g$, $h_\theta$, and $u_\theta$.
\end{myprop}
\begin{proof}
Follows immediately from \Cref{prop:BSDE_jensen_bound} and \Cref{prop:BSDE_FPINNs}.
\end{proof}

\begin{myprop}
\label{prop:EM_N_tau_vs_BSDE_T}
Suppose that $f, g, h_\theta \in C^{0,1}$, $u_\theta \in C^{1,1}$, and $\tau \leq 1$.
We have that:
\begin{align}
    L_{\mathrm{EM}_N,\tau}(\theta) &= L_{\mathrm{BSDE},T}(\theta) + O(\tau^{1/2}),
\end{align}
where the $O(\cdot)$ hides constants depending on the $d$, $T$, and the H{\"{o}}lder norms of $f$, $g$, $h_\theta$, and $u_\theta$.
\end{myprop}
\begin{proof}
We consider the joint forward and backward SDE (cf.~\eqref{eq:forward_SDE} and \eqref{eq:backward_SDE})
with $Z_t^\theta := (X_t, Y_t^\theta)$:
\begin{align*}
    \rmd \cvectwo{X_t}{Y_t^\theta} = \underbrace{\cvectwo{f(X_t, t)}{h_\theta(X_t, t)}}_{=: F_\theta(Z_t^\theta, t)} \rmd t + \underbrace{\cvectwo{ g(X_t, t) }{ \nabla u_\theta(X_t, t)^\T g(X_t, t)}}_{=: G_\theta(Z_t^\theta, t)} \rmd B_t, \quad \cvectwo{X_0}{Y_0^\theta} = \cvectwo{x_0}{u_\theta(x_0, 0)}.
\end{align*}
Given our assumptions on $f, g, h_\theta, u_\theta$, we have that
both $F_\theta, G_\theta \in C^{0, 1}$. Hence, the pair $(F_\theta, G_\theta)$ is EM-regular (\Cref{def:EM_regularity}).
Therefore, by \Cref{thm:EM_strong_order_one_half},
the EM-discretization $\{(\hat{X}_n, \hat{Y}_n^\theta)\}_{n}$ (cf.~\eqref{eq:backwards_SDE_EM}),
coupled with $(Z_t^\theta)_t$ through Brownian increments $\{ \Delta W_n \}$, 
satisfies order $1/2$ strong convergence to $(Z_t^\theta)_t$:
\begin{align*}
    \left( \E\left[ \max_{n \in \{0, \dots, N\}} \max\{ \norm{\hat{X}_n - X_{t_n}}^2, \abs{\hat{Y}^\theta_n - Y^\theta_{t_n}}^2 \} \right]   \right)^{1/2} \leq C \tau^{1/2}.
\end{align*}
Now, define $\hat{\Psi}_N := u_\theta(\hat{X}_N, T) - \hat{Y}_N^\theta$ and $\Psi_T := u_\theta(X_T, T) - Y_T^\theta$
\begin{align*}
    \E[ \hat{\Psi}_N^2 ] &= \E[ (\Psi_T + (\hat{\Psi}_N - \Psi_T) )^2 ] 
    = \E[ \Psi_T^2 ] + \E[ (\hat{\Psi}_N - \Psi_T)^2 ] + 2 \E[ \Psi_T (\hat{\Psi}_N - \Psi_T) ].
\end{align*}
Hence by Cauchy-Schwarz,
\begin{align*}
    \abs{ \E[ \hat{\Psi}_N^2 ] - \E[ \Psi_T^2 ] } &\leq \E[ (\hat{\Psi}_N - \Psi_T)^2 ] + 2 \sqrt{ \E[ \Psi_T^2 ] }\sqrt{ \E[ (\hat{\Psi}_N - \Psi_T)^2 ] }.
\end{align*}
Since $u_\theta \in C^{1,1}$ the function is $\norm{u_\theta}_{C^{1,1}}$-Lipschitz and therefore:
\begin{align*}
    \E[ (\hat{\Psi}_N - \Psi_T)^2 ] &\leq 2 \E[ (u_\theta(\hat{X}_N, T) - u_\theta(X_T, T))^2 ] + 2 \E[ (\hat{Y}_N^\theta - Y_T^\theta)^2 ] \\
    &\leq 2 \norm{u_\theta}^2_{C^{1,1}} \E[ \norm{ \hat{X}_N - X_T }^2 ] + 2 \E[ (\hat{Y}_N^\theta - Y_T^\theta)^2 ] \\
    &\leq 2 C^2 (1 + \norm{u_\theta}^2_{C^{1,1}}) \tau.
\end{align*}
Furthermore $u_\theta$ is also $\norm{u_\theta}_{C^{1,1}}$-bounded and hence:
\begin{align*}
    \E[ \Psi_T^2 ] \leq 2 \norm{u_\theta}^2_{C^{1,1}} + 2 \E[ \abs{Y_T^\theta}^2 ].
\end{align*}
Next, since $F_\theta, G_\theta$ are both bounded, then by
\Cref{prop:SDE_holder_one_half}, we have that $\E[ \abs{Y_T^\theta}^2 ] = O(1)$.
Putting these bounds together yields:
\begin{align}
    \abs{ \E[ \hat{\Psi}_N^2 ] - \E[ \Psi_T^2 ] } \leq O(\tau + \sqrt{\tau}) = O(\sqrt{\tau}), \label{eq:psi_squared_gap}
\end{align}
since $\tau \leq 1$.
To finish the proof, we observe
\begin{align*}
    L_{\mathrm{EM}_N,\tau}(\theta) &= \E_{x_0 \sim \mu_0, w_n} \frac{1}{T^2}  \hat{\Psi}_N^2 \nonumber \\
    &= \E_{x_0 \sim \mu_0, B_t} \frac{1}{T^2}  \hat{\Psi}_N^2 \nonumber && \text{[coupling with Brownian increments]} \\
    &= \E_{x_0 \sim \mu_0, B_t} \frac{1}{T^2} \Psi_T^2 + O(\tau^{1/2}) && \text{[using \eqref{eq:psi_squared_gap}]} \\
    &= L_{\mathrm{BSDE},T}(\theta) + O(\tau^{1/2}). %
\end{align*}
\end{proof}

\begin{myprop}
\label{prop:EM_tau_vs_BSDE_tau}
Suppose that $f,g,h_\theta \in C^{0, 1}$, $u_\theta \in C^{2,1}$, and $\tau \leq 1$.
We have that:
\begin{align}
    L_{\mathrm{EM},\tau}(\theta) &= L_{\mathrm{BSDE},\tau}(\theta) + \mathrm{Bias}(\theta) + O(\tau^{1/2}), \label{eq:one_step_EM_BSDE_relation} \\
    \mathrm{Bias}(\theta) &:= \frac{1}{2T}\int_0^T \E[ \Tr( (H(X_t, t) \cdot \nabla^2 u_\theta(X_t, t))^2 )]  \rmd t \nonumber .
\end{align}
Here, $O(\cdot)$ hides factors depending on $d$, $T$, and the H{\"{o}}lder norms of $f$, $g$, $h_\theta$, and $u_\theta$.
\end{myprop}
\begin{proof}
We have the following:
\begin{align*}
    L_{\mathrm{EM},\tau}(\theta) &\stackrel{(a)}{=} \frac{1}{T} \int_0^T \E\left( (R[u_\theta](X_t, t))^2 + \frac{1}{2} \Tr((H(X_t, t) \cdot \nabla^2 u_\theta(X_t, t))^2) \right) \rmd t + O(\tau^{1/2}) \nonumber \\
    &= L_{\FPINNs}(\theta) + \mathrm{Bias}(\theta) + O(\tau^{1/2}) \nonumber \\
    &\stackrel{(b)}{=} L_{\mathrm{BSDE},\tau}(\theta) + \mathrm{Bias}(\theta) + O(\tau^{1/2}),
\end{align*}
where (a) holds from \Cref{thm:EM_final_form}, and
(b) holds from \Cref{prop:BSDE_FPINNs}.
\end{proof}

\subsection{Stratonovich BSDE and Heun discretization}

We first define the
Stratonovich variant of the FS-PINNs loss \eqref{eq:FS_PINNS_loss}:
\begin{align*}
    L_{\SFPINNs}(\theta) := \E_{\substack{x_0 \sim \mu_0, B_t}} \frac{1}{T} \int_{0}^{T} (R[u_\theta](\Xs_t, t))^2 \rmd t.
\end{align*}

\begin{myprop}
\label{prop:SBSDE_jensen_bound}
Suppose that $u_\theta \in C^2$. We have that:
\begin{align*}
    L_{\SBSDE,T}(\theta) \leq L_{\SFPINNs}(\theta).
\end{align*}
\end{myprop}
\begin{proof}
We mimic the arguments in \Cref{prop:BSDE_jensen_bound}.
Abbreviating $R_\theta(x, t) = R[u_\theta](x, t)$ and
using the Stratonovich chain rule, we have:
\begin{align*}
    u_\theta(\Xs_T, T) - u_\theta(\Xs_0, 0) &= \int_0^T \left[ R_\theta(\Xs_t, t) + h_\theta(\Xs_t, t) - \frac{1}{2}\Tr(H(\Xs_t, t) \nabla^2 u_\theta(\Xs_t, t)\right] \rmd t \\
    &\qquad + \int_0^T \nabla u_\theta(\Xs_t, t)^\T g(\Xs_t, t) \circ \rmd B_t,
\end{align*}
and hence the following identity which parallels \eqref{eq:BSDE_identity} holds:
\begin{align*}
    L_{\SBSDE,T}(\theta) = \E_{\substack{x_0 \sim \mu_0, B_t}} \left( \frac{1}{T} \int_{0}^{T} R_\theta(\Xs_t, t) \rmd t  \right)^2.
\end{align*}
Now we simply apply Jensen's inequality to conclude:
\begin{align*}
    L_{\SBSDE,T}(\theta) &= \E_{\substack{x_0 \sim \mu_0, B_t}} \left( \frac{1}{T} \int_{0}^{T} R_\theta(\Xs_t, t) \rmd t  \right)^2 \\
    &\leq \E_{\substack{x_0 \sim \mu_0, B_t}} \frac{1}{T} \int_0^T (R_\theta(\Xs_t, t))^2 \rmd t = L_{\SFPINNs}(\theta).
\end{align*}
\end{proof}

\begin{myprop}
\label{prop:SBSDE_SFPINNs}
Suppose that $f, h_\theta \in C^{0, 1}$, $g \in C^{1,1}$, $u_\theta \in C^{2,1}$, and $\tau \leq 1$. Then,
\begin{align}
    L_{\SBSDE,\tau}(\theta) = L_{\SFPINNs}(\theta) + O(\tau^{1/2}),
\end{align}
where the $O(\cdot)$ hides constants depending on the H{\"{o}}lder norms of $f$, $g$, $h_\theta$, and $u_\theta$.
\end{myprop}
\begin{proof}
The proof is nearly identical to 
the proof of \Cref{prop:BSDE_FPINNs}, but
with $L_{\SBSDE,\tau}(\theta)$
taking the place of $L_{\mathrm{BSDE},\tau}(\theta)$
and $L_{\SFPINNs}(\theta)$ taking the place of $L_{\FPINNs}(\theta)$.
The only notable difference is we need to establish
the condition $\E \norm{ \Xs_{t_0} - \Xs_{t_1} }^2 \leq O(1) \abs{t_0 - t_1}$
for $t_0, t_1 \in \calI$.
By our assumption that $f \in C^{0, 1}$ and $g \in C^{1, 1}$, we have that
both $f(x, t) + \frac{1}{2}\sum_{k=1}^{d} \partial_x g^k(x, t) g(x, t)$ 
and $g(x, t)$ are bounded, and hence the condition 
$\E \norm{ \Xs_{t_0} - \Xs_{t_1} }^2 \leq O(1) \abs{t_0 - t_1}$ holds
by \Cref{prop:strat_SDE_holder_one_half}.
\end{proof}

\begin{myprop}
\label{prop:SBSDE_T_vs_tau}
Suppose the assumptions of \Cref{prop:SBSDE_SFPINNs} hold. Then,
\begin{align*}
    L_{\SBSDE,T}(\theta) \leq L_{\SBSDE,\tau}(\theta) + O(\tau^{1/2}),
\end{align*}
where the $O(\cdot)$ hides constants depending on the H{\"{o}}lder norms of $f$, $g$, $h_\theta$, and $u_\theta$.
\end{myprop}
\begin{proof}
Follows immediately from \Cref{prop:SBSDE_jensen_bound} and \Cref{prop:SBSDE_SFPINNs}.
\end{proof}

\begin{myprop}
\label{prop:heun_N_tau_vs_SBSDE_T}
Suppose that $f, h_\theta \in C^{0,1}$, $g \in C^{1,1}$, $u_\theta \in C^{2,1}$, and $\tau \leq 1$.
We have that:
\begin{align}
    L_{\mathrm{Heun}_N,\tau}(\theta) = L_{\SBSDE,T}(\theta) + O(\tau^{1/2}),
\end{align}
where the $O(\cdot)$ hides constants depending on the $d$, $T$, and the H{\"{o}}lder norms of $f$, $g$, $h_\theta$, and $u_\theta$.    
\end{myprop}
\begin{proof}
We consider the joint forward/backward 
Stratonovich SDE $\Zs_t = (\Xs_t, \Ys_t)$ from \eqref{eq:forward_backwards_strat}
of the form
$\rmd \Zs_t = F_\theta(\Zs_t, t) \rmd t + G_\theta(\Zs_t, t) \circ \rmd B_t$.
We first show that the pair $(F_\theta, G_\theta)$ is Heun-regular (cf.~\Cref{def:heun_regularity}).
A sufficient condition is that (a) $F_\theta \in C^{0, 1}$ 
and (b) $G_\theta \in C^{1, 1}$.
For condition (a), it is equivalent to both $f$ and $\hs_\theta(x, t) = h_\theta(x, t) - \frac{1}{2} \Tr( H(x, t) \nabla^2 u_\theta(x, t))$ are in $C^{0, 1}$;
the former is by assumption, and the latter holds since
$h_\theta \in C^{0, 1}$, $g \in C^{1,1}$, and $u_\theta \in C^{2,1}$ by assumption.
Now for condition (b), it is equivalent to both $g$ and $\nabla u_\theta(x, t)^\T g(x, t)$ are in $C^{1, 1}$.
The former is again by assumption, and the latter holds since $u_\theta \in C^{2,1}$ and $g \in C^{1, 1}$.
Hence by \Cref{corollary:heun_strong_convergence}, we have that Heun discretization
$\{\Zhats_n\}_n$ from \eqref{eq:heun_scheme}, coupled with the SDE 
$(\Zs_t)_t$ through Brownian increments $\{\Delta W_n\}_n$,
satisfies order $1/2$ strong convergence to the SDE $(\Zs_t)_t$, i.e., 
\begin{align*}
    \left( \E\left[ \max_{n \in \{0, \dots, N\}} \max\{ \norm{\Xhats_n - \Xs_{t_n}}^2, \abs{\Yhats_n - \Ys_{t_n}}^2 \} \right] \right)^{1/2} \leq C \tau^{1/2},
\end{align*}
where $C$ does not depend on $\tau$.
The remainder of the proof proceeds nearly identically to 
\Cref{prop:EM_N_tau_vs_BSDE_T}, with the only difference being that
in order to argue $\E[ \abs{\Ys_T}^2 ] = O(1)$, we 
utilize that $F_\theta \in C^{0, 1}$
and $G_\theta \in C^{1, 1}$ to invoke \Cref{prop:strat_SDE_holder_one_half}.
\end{proof}

\paragraph{Showing that $L_{\mathrm{Heun},\tau}(\theta) = L_{\SBSDE,\tau}(\theta) + O(\tau^{1/2})$.}

We start from \eqref{eq:Heun_tau_one_step_limit} and following nearly identical
arguments as in the derivation of \eqref{eq:one_step_EM_BSDE_relation}.

\begin{myprop}
\label{prop:heun_tau_vs_SBSDE_tau}
Suppose that $f$, $g$, and $h_\theta$ are all in $C^{1,1}$, $u_\theta \in C^{3,1}$, and $\tau \leq 1$.
We have:
\begin{align*}
    L_{\mathrm{Heun},\tau}(\theta) = L_{\SBSDE,\tau}(\theta) + O(\tau^{1/2}),
\end{align*}
where the $O(\cdot)$ hides factors depending on $d$, $T$, and the H{\"{o}}lder norms of $f$, $g$, $h_\theta$, and $u_\theta$.
\end{myprop}
\begin{proof}
We proceed similarly to \Cref{prop:EM_tau_vs_BSDE_tau}:
\begin{align*}
    L_{\mathrm{Heun},\tau}(\theta) &\stackrel{(a)}{=} \frac{1}{T} \int_0^T \E (R[u_\theta](\Xs_t, t))^2 \rmd t + O(\tau^{1/2}) \\
    &= L_{\SFPINNs}(\theta) + O(\tau^{1/2}) \\
    &\stackrel{(b)}{=} L_{\SBSDE,\tau}(\theta) + O(\tau^{1/2}),
\end{align*}
where (a) holds from \Cref{thm:heun_final_form}, and (b) holds from \Cref{prop:SBSDE_SFPINNs}.
\end{proof}

\section{Strong Convergence of Euler-Maruyama and Heun Integration}
\label{sec:appendix:SDE_convergence}

Let $\calI := [0, T]$ denote a time interval, and 
consider functions $a : \R^d \times \calI \mapsto \R^d$
and $b : \R^d \times \calI \mapsto \R^{d \times m}$ which define the following SDE:
\begin{align}
    \rmd X_t = a(X_t, t) \rmd t + b(X_t, t) \diamond \rmd B_t, \quad X_0 \sim \calD_0. \label{eq:SDE_a_b_pair}
\end{align}
where $(B_t)_{t \geq 0}$ is $m$-dimensional Brownian motion and $\calD_0$ is an 
arbitrary distribution over $\R^d$ with bounded second moments, i.e., $\E \norm{X_0}^2 < \infty$.
Here, the pair $(a, b)$ is used instead of $(f, g)$ to avoid confusion
with the forward SDE \eqref{eq:forward_SDE}, and the
$\diamond$ notation denotes that the SDE \eqref{eq:SDE_a_b_pair} 
is either to be interpreted as an It{\^{o}} or Stratonovich SDE.
We write $b^k : \R^d \times \calI \mapsto \R^d$ for $k \in [m]$ so that $b = (b^1, \dots, b^m)$,
i.e., $b^k(t, x)$ is the $k$-th column of the matrix $b(t, x)$.
We consider a discretization time $\tau \in (0, T]$ such that $N := \floor{T / \tau} \in \N_+$.
We denote the timesteps $\{t_n\}_{n=0}^{N}$ and Brownian increments $\{ \Delta W_n \}_{n=0}^{N-1}$ as
$t_n := n \tau$ and
$\Delta W_n := B_{t_{n+1}} - B_{t_n}$.
We will also often utilize the shorthand notation $a_n(x) := a(x, t_n)$,
$b_n(x) := b(x, t_n)$, and $b_n^k(x) := b^k(x, t_n)$ for $n \in \{0, \dots, N\}$.

In this section, we review standard results regarding convergence
of basic stochastic integration schemes (Euler-Maruyama for It{\^{o}}, stochastic Heun for Stratonovich) for the SDE \eqref{eq:SDE_a_b_pair}.

\subsection{Euler-Maruyama Convergence}
\label{sec:appendix:EM_convergence}

The Euler-Maruyama scheme for integrating the SDE \eqref{eq:SDE_a_b_pair}
interpreted as an It{\^{o}} SDE is the following discrete-time process:
\begin{align}
    \hat{X}_{n+1} = a_n(\hat{X}_n) \tau + b_n(\hat{X}_n) \Delta W_n, \quad \hat{X}_0 = X_0. \label{eq:EM_a_b_pair}
\end{align}
 
The order $1/2$ strong convergence of the Euler-Maruyama process \eqref{eq:EM_a_b_pair} to the It{\^{o}} SDE \eqref{eq:SDE_a_b_pair}
is thoroughly documented in the literature. 
Concretely, we will state a result from \cite{kloeden1992numericalsolutions}.
First, we define the necessary regularity condition on the drift and diffusion
terms
\begin{mydef}[EM-regularity]
\label{def:EM_regularity}
The pair $(a, b)$ with $a : \R^d \times \calI \mapsto \R^d$
and $g : \R^d \times \calI \mapsto \R^{d \times m}$ 
is \emph{EM-regular} if 
there exists finite $K_1, K_2, K_3$ such that
for all $x, y \in \R^d$ and $s, t \in \calI$,
\begin{align*}    
    \norm{a(x, t)} + \opnorm{b(x, t)} &\leq K_1 (1 + \norm{x}), \\
    \norm{ a(x, t) - a(y, t) } + \opnorm{ b(x, t) - b(y, t) } &\leq K_2 \norm{ x - y }, \\
    \norm{ a(x, s) - a(x, t) } + \opnorm{ b(x, s) - b(x, t) } &\leq K_3 (1 + \norm{x})\abs{s-t}^{1/2}.
\end{align*}
\end{mydef}
By definition of the H{\"{o}}lder class $C^{0, 1}$, we have that if
the pair $(a, b)$ satisfies $a,b \in C^{0, 1}$, then $(a, b)$ is EM-regular, 
although \Cref{def:EM_regularity} is a weaker assumption.
With this notation of regularity in place, we have the following 
order $1/2$ strong convergence result.
\begin{mythm}[{\cite[Theorem 10.2.2]{kloeden1992numericalsolutions}}]
\label{thm:EM_strong_order_one_half}
Suppose the pair $(a, b)$ is EM-regular (cf.~\Cref{def:EM_regularity}).
Then the It{\^{o}} SDE $(X_t)_t$ defined in \eqref{eq:SDE_a_b_pair}
and the Euler-Maruyama discretization $\{\hat{X}_n\}_n$ defined in \eqref{eq:EM_a_b_pair}
satisfy the following bound:
\begin{align*}
    \left( \E \max_{n \in \{0, \dots, N\}} \norm{ {X}_{t_n} - \hat{{X}}_n }^2 \right)^{1/2} \leq C \sqrt{\tau},
\end{align*}
where the constant $C$ does not depend on $\tau$.
\end{mythm}

\subsection{Stochastic Heun Convergence}
\label{sec:appendix:heun_convergence}

The stochastic Heun discretization of the Stratonovich SDE $(X_t)_t$ defined in \eqref{eq:SDE_a_b_pair} is the discrete-time process with $\hat{X}_0 = X_0$ and:
\begin{subequations} \label{eq:heun_a_b_pair}
\begin{align}
  \hat{Y}_{n+1} &= \hat{X}_n + a_n(\hat{X}_t) \tau + b_n(\hat{X}_t) \Delta W_n, \label{eq:heun_a_b_predictor} \\
  \hat{X}_{n+1} &= \hat{X}_n + \frac{1}{2}\left[ a_n(\hat{X}_t) + a_{n+1}(\hat{Y}_{n+1}) \right] \tau + \frac{1}{2}\left[ b_n(\hat{X}_t) + b_{n+1}(\hat{Y}_{n+1}) \right] \Delta W_n. \label{eq:heun_a_b_trapezoid}
\end{align}
\end{subequations}
Analogous to the order $1/2$ strong convergence results in \Cref{sec:appendix:EM_convergence}
for the EM discretization of the It{\^{o}} SDE, we also have a similar result that holds
for the Heun discretization \eqref{eq:heun_a_b_pair} of the Stratonovich SDE \eqref{eq:SDE_a_b_pair}.
While we consider such a result to be a folklore result, 
we were unable to find a specific theorem statement in the literature listing out a
precise set of sufficient conditions on $(a, b)$ for strong convergence to hold.\footnote{The
closest statement we were able to find in the literature is \cite[Theorem D.12]{kidger2021neuralsdes},
which shows order $1/2$ strong convergence of the \emph{reversible} Heun method,
which is a modified version of the stochastic Heun method that is algebraically reversible.}
Thus, the rest of this sub-section provides a result and mostly self-contained proof
that builds on top of EM results stated in \Cref{sec:appendix:EM_convergence}.

We first start with a sufficient regularity condition, which 
adds a few extra assumptions to the EM-regular definition (\Cref{def:EM_regularity}).
\begin{mydef}[Heun-regularity]
\label{def:heun_regularity}
The pair $(a, b)$ with $a : \R^d \times \calI \mapsto \R^d$
and $b : \R^d \times \calI \mapsto \R^{d \times m}$ is \emph{Heun-regular} 
if for every $t \in \calI$ and $k \in [m]$, the map $x \mapsto b^k(t, x)$ is $C^1(\R^d)$,
and there exists finite $K_i$, $i \in [5]$ such that
for all $x, y \in \R^d$ and $s, t \in \calI$:
\begin{align*}
    \norm{a(x, t)} + \opnorm{b(x, t)} &\leq K_1, \\
    \norm{ a(x, t) - a(y, t) } + \opnorm{ b(x, t) - b(y, t) } &\leq K_2 \norm{ x - y }, \\
    \norm{ a(x, s) - a(x, t) } + \opnorm{ b(x, s) - b(x, t) } &\leq K_3( 1 + \norm{x})\abs{s-t}, \\
    \opnorm{ \partial_x b^k(x, t) - \partial_x b^k(y, t) } &\leq K_4 \norm{ x - y }, \\
    \opnorm{ \partial_x b^k(x, s) - \partial_x b^k(x, t) } &\leq K_5 (1 + \norm{x}) \abs{s-t}^{1/2}.
\end{align*}
\end{mydef}
We note that from the definition of the H{\"{o}}lder classes $C^{0, 1}$ and $C^{1, 1}$
that if $a \in C^{0, 1}$ and $b \in C^{1, 1}$, then the pair $(a, b)$ is Heun-regular.
The following result is the main convergence result for Heun.
\begin{mythm}
\label{corollary:heun_strong_convergence}
Suppose that the pair $(a, b$) is Heun-regular (cf.~\Cref{def:heun_regularity}).
Then the Stratonovich SDE $(X_t)_t$ defined in \eqref{eq:SDE_a_b_pair}
and the stochastic Heun discretization $\{\hat{X}_n\}_n$ defined in 
\eqref{eq:heun_a_b_pair} satisfy:
\begin{align}
    \left(\E \max_{n \in \{0, \dots, N\}} \norm{ X_{t_n} - \hat{X}_{n} }^2\right)^{1/2} \leq C \sqrt{\tau}, 
\end{align}
where the constant $C$ does not depend on $\tau$.
\end{mythm}

\subsubsection{Proof of \Cref{corollary:heun_strong_convergence}}

Our proof of \Cref{corollary:heun_strong_convergence} is based on the following reduction.
By defining:
\begin{align*}
    \bar{a}(x, t) := a(x, t) + \frac{1}{2} \sum_{k=1}^{m} \partial_x b^k(x, t) b^k(x, t),
\end{align*}
the It{\^{o}} SDE:
\begin{align}
    \rmd \bar{X}_t = \bar{a}(\bar{X}_t, t) \rmd t + b(\bar{X}_t, t) \rmd B_t, \quad \bar{X}_0 = X_0, \label{eq:ito_SDE_a_b_pair}
\end{align}
defines an identical process as the Stratonovich SDE $(X_t)_t$ from \eqref{eq:SDE_a_b_pair},
i.e., $(X_t(\omega))_t = (\bar{X}_t(\omega))_t$ for almost every $t, \omega$.
Furthermore, we can consider an Euler-Maruyama discretization of \eqref{eq:ito_SDE_a_b_pair}:
\begin{align}
    \hat{\bar{X}}_{n+1} &= \hat{\bar{X}}_{n} + \bar{a}_n(\hat{\bar{X}}_n) \tau + b_n(\hat{\bar{X}}_n) \Delta W_n, \quad \hat{\bar{X}}_0 = X_0. \label{eq:EM_abar_b_pair}
\end{align}

As the It{\^{o}} SDE \eqref{eq:ito_SDE_a_b_pair} and 
Stratonovich SDE \eqref{eq:SDE_a_b_pair} are identical, then we also have that 
if the pair $(\bar{a}, b)$ is EM-regular (cf.~\Cref{def:EM_regularity}), then the EM discretization 
\eqref{eq:EM_abar_b_pair}
is order $1/2$ strongly convergent to the Stratonovich SDE \eqref{eq:SDE_a_b_pair}.
Hence, this reduces the problem to comparing the two discrete processes
$\{ \hat{X}_n \}_n$ from \eqref{eq:heun_a_b_pair} 
and $\{ \hat{\bar{X}}_n \}_n$ from \eqref{eq:EM_abar_b_pair}.
In particular, if we can show that:
\begin{align*}
    \left( \E \max_{n \in \{0, \dots, N\}} \norm{ \hat{X}_n - \hat{\bar{X}}_n }^2 \right)^{1/2} \leq C \sqrt{\tau},
\end{align*}
where again the two processes are coupled under the same Brownian motion $(B_t)_t$ and initial condition $X_0$,
then by triangle inequality we have the desired result 
\Cref{corollary:heun_strong_convergence}.
One advantage of this proof strategy is that we only need
to study the evolution of two discrete-time processes, which we can do with purely elementary
(discrete-time) martingale techniques, avoiding the need for any stochastic calculus.
Indeed, the main tools we utilize are the following two results.
\begin{myprop}[{Doob's maximal inequality (vector-valued), cf.~\cite[Theorem 3.2.2]{hytonen2016analysis}}]
\label{prop:doob_maximal_inequality}
Let $(X_n)_{n \in \N_+}$ denote a martingale taking values in a normed vector space $X$ with norm $\norm{\cdot}_{X}$. We have that for any $p \in (0, \infty]$ and $n \in \N_+$:
\begin{align*}
    \E\left( \max_{i \in [n]} \norm{X_i}_X^p \right)^{1/p} \leq \frac{p}{p-1} \left(\E[\norm{X_n}_{X}^p]\right)^{1/p}.
\end{align*}
\end{myprop}

\begin{myprop}[{Discrete Gronwall inequality, cf.~\cite{clark1987discretegronwal}}]
\label{prop:discrete_gronwall}
Let $\{x_n\}_{n \in \N}$ and $\{ \beta_n \}_{n \in \N}$ be non-negative sequences satisfying for some $\alpha > 0$:
\begin{align*}
    x_n \leq \alpha + \sum_{k=0}^{n-1} \beta_k x_n, \quad n \in \N.
\end{align*}
Then we have:
\begin{align*}
    x_n \leq \alpha \exp\left( \sum_{k=0}^{n-1} \beta_k \right), \quad n \in \N.
\end{align*}
Here, we interpret $\sum_{k=0}^{-1}$ to indicate zero.
\end{myprop}

Our first step shows that Heun-regularity of the pair $(a, b)$ implies
EM-regularity of the pair $(\bar{a}, b)$.
\begin{myprop}
\label{prop:heun_implies_EM_regular}
If the pair $(a, b)$ is Heun-regular (cf.~\Cref{def:heun_regularity}), 
then the pair $(\bar{a}, b)$ is EM-regular (cf.~\Cref{def:EM_regularity}).
\end{myprop}
\begin{proof}
We let $K := \max_{i \in [5]} K_i$.
We first check the growth condition on $\norm{\bar{a}(x, t)}$:
\begin{align*}
    \norm{ \bar{a}(x, t) } &= \bignorm{ a(x, t) + \frac{1}{2} \sum_{k=1}^{m} \partial_x b^k(x, t) b^k(x, t) } \\
    &\stackrel{(a)}{\leq} K + \frac{1}{2} \sum_{k=1}^{m} \norm{ \partial_x b^k(x, t) b^k(x, t) } 
    \stackrel{(b)}{\leq} K + \frac{K}{2} \sum_{k=1}^{m} \opnorm{ \partial_x b^k(x, t) } 
    \stackrel{(c)}{\leq} K + \frac{K^2 m}{2},
s\end{align*}
where (a) holds since $\norm{a(x, t)} \leq K$, 
(b) holds since $\norm{ b^k(x, t) } \leq \opnorm{ b(x, t) } \leq K$, and
(c) holds since $\norm{ g^k(x, t) - g^k(y, t) } \leq K \norm{x - y}$ implies that
$\opnorm{ \partial_x g^k(x, t) } \leq K$.

Next, we check the Lipschitz condition over $x$ on $\bar{a}(x, t)$:
\begin{align*}
    \norm{ \bar{a}(x, t) - \bar{a}(y, t) } &\leq \norm{a(x, t) - a(y, t)} + \frac{1}{2}\sum_{k=1}^{m} \norm{ \partial_x b^k(x, t) b^k(x, t) - \partial_x b^k(y, t) b^k(y, t) } \\
    &\stackrel{(a)}{\leq} K\norm{x-y} + \frac{1}{2} \sum_{k=1}^{m} \norm{ \partial_x b^k(x, t) [ b^k(x, t) - b^k(y, t)] } \\
    &\qquad + \frac{1}{2}\sum_{k=1}^{m} \norm{ [ \partial_x b^k(x, t) - \partial_x b^k(y, t)] b^k(y, t) } \\
    &\stackrel{(b)}{\leq} K\norm{x-y} + \frac{K^2 m}{2} \norm{x-y} + \frac{K^2 m}{2} \norm{x-y} = (K + K^2 m ) \norm{x-y}.
\end{align*}
where (a) uses the Lipschitz condition $\norm{a(x, t) - a(y, t)} \leq K \norm{x-y}$,
and (b) uses $\opnorm{ \partial_x b^k(x, t) - \partial_x b^k(y, t) } \leq K \norm{x - y}$,
$\norm{ b^k(x, t) } \leq K$,
$\opnorm{ \partial_x b^k(x, t) } \leq K$, and
$\norm{ b^k(x, t) - b^k(y, t) } \leq K \norm{x - y}$,

Finally, we check the H{\"{o}}lder $1/2$ condition over $t$ on $\bar{a}(x, t)$:
\begin{align*}
    \norm{\bar{a}(x, s) - \bar{a}(x, t)} &\leq \norm{a(x, s) - a(x, t)} + \frac{1}{2}\sum_{k=1}^{m} \norm{ \partial_x b^k(x, s) b^k(x, s) - \partial_x b^k(x, t) b^k(x, t) } \\
    &\stackrel{(a)}{\leq} K (1+\norm{x}) \abs{s-t} + \frac{1}{2}\sum_{k=1}^{m} \norm{ \partial_x b^k(x, s) [ b^k(x, s) - b^k(x, t)] } \\
    &\qquad + \frac{1}{2}\sum_{k=1}^{m} \norm{ [ \partial_x b^k(x, s) - \partial_x b^k(x, t)] b^k(x, t) } \\
    &\stackrel{(b)}{\leq} K (1+\norm{x})\abs{s-t} + \frac{K^2 m}{2} (1 + \norm{x}) \abs{s-t} + \frac{K^2 m}{2} (1+\norm{x})\abs{s-t}^{1/2} \\
    &\stackrel{(c)}{\leq} (K + K^2 m) \sqrt{T} (1+\norm{x}) \abs{s-t}^{1/2},
\end{align*}
where in (a) we use $\norm{a(x, s) - a(x, t)} \leq K (1+\norm{x})\abs{s-t}$,
in (b) we use
$\norm{ b^k(x, s) - b^k(x, t) } \leq K (1+\norm{x}) \abs{s-t}$,
$\opnorm{ \partial_x b^k(x, t) } \leq K$,
$\norm{ \partial_x b^k(x, s) - \partial_x b^k(x, t) } \leq K(1+\norm{x})\abs{s-t}^{1/2}$, and
$\norm{ b^k(x, t) } \leq K$, and in (c) we use
$\abs{s-t} = \abs{s-t}^{1/2} \cdot \abs{s-t}^{1/2} \leq \sqrt{T} \abs{s-t}^{1/2}$.

Since the growth, Lipschitz, and H{\"{o}}lder $1/2$ conditions on $b(x, t)$ are immediate
from the Heun regularity assumptions, this concludes the claim.
\end{proof}

The following result shows that the discrete-time processes 
\eqref{eq:heun_a_b_pair} and \eqref{eq:EM_abar_b_pair} are strongly convergent.
\begin{mylemma}
\label{lemma:heun_strong_convergence_discrete}
Suppose that the pair $(a, b)$ is Heun-regular (cf.~\Cref{def:heun_regularity})
and that $\tau \leq 1$.
Then, we have that the updates \eqref{eq:heun_a_b_pair} and \eqref{eq:EM_abar_b_pair} satisfy:
\begin{align*}
    \left( \E \max_{n \in \{0, \dots, N\}} \norm{ \hat{X}_n - \hat{\bar{X}}_n }^2 \right)^{1/2} \leq C \sqrt{\tau},
\end{align*}
where the constant $C$ does not depend on $\tau$.
\end{mylemma}
\begin{proof}
To start the proof, we recall that $t_n = n \tau$, $N = \floor{T/\tau}$ which is assumed to be a positive integer,
and $\Delta W_n := B_{t_{n+1}} - B_{t_n}$.
We will equivalently write $\Delta W_n = \sqrt{\tau} w_n$, where $\{ w_n \}_{n=0}^{N-1}$ are i.i.d.\ $\sfN(0, I_m)$ random vectors.
In order to index the coordinates of both $\Delta W_n$ and $w_n$, we use the notation
$\Delta W_n^i$ and $w_n^i$, for $i \in [m]$, to refer to the $i$-th coordinate of the vector.
We let $K := 1 + \E\norm{X_0}^2 + \max_{i \in [5]} K_i$ to denote a bound on all the parameters from both Heun regularity and the second moment of the initial condition $X_0$.
As with $a_n, b_n$, we also define $\bar{a}_n(x) := \bar{a}(x, t_n)$ for $n \in [N]$.
Furthermore, we will drop the hat notation to reduce
notational clutter and
and write $X_n, Y_n = \hat{X}_n, \hat{Y}_n$,
and similarly $\bar{X}_n = \hat{\bar{X}}_n$;
since we are not dealing with the SDEs
\eqref{eq:SDE_a_b_pair} and
\eqref{eq:ito_SDE_a_b_pair},
there is no risk of confusion with this notation.

To avoid keeping track of explicit dependence on constants besides $\tau$,
for a set of parameters $Q$
we use $C_Q$ to denote a finite constant that
that depends only on the parameters listed in $Q$,
and $a \lesssim_Q b$ to denote $a \leq C_Q b$.
For example, $C_{K, m}$ is a constant
that depends only on $(K, m)$ (its dependency may be arbitrary however). 
We also let $a \lesssim b$ denote $a \leq C b$ where $C$ is a universal positive constant.

With the aforementioned notation in place, 
we have the following discrete-time update rules for \eqref{eq:EM_abar_b_pair}:
\begin{align*}
    \bar{X}_{n+1} = \bar{X}_n + \bar{a}_n(\bar{X}_n) \tau + b_n(\bar{X}_n) \Delta W_n,
\end{align*}
and for \eqref{eq:heun_a_b_pair}:
\begin{align*}
    Y_{n+1} &= X_n + a_n(X_n) \tau + b_n(X_n) \Delta W_n, \\
    X_{n+1} &= X_n + \frac{1}{2}\left[ a_n(X_n) + a_{n+1}(Y_{n+1})  \right]\tau + \frac{1}{2}\left[ b_n(X_n) + b_{n+1}(Y_{n+1})  \right] \Delta W_n.
\end{align*}
Let us define the filtration $\calF_n := \sigma(w_0, \dots, w_{n-1})$ for $n \in \N_+$
with $\calF_0$ the
trivial $\sigma$-algebra.
We observe a key property of both $\{ X_n \}_n$ and
$\{ \bar{X}_n \}_n$ is that both $X_n$
and $\bar{X}_n$ are $\calF_{n}$-measurable.
Hence by the tower property we have for expressions $A, B$,
$\E[ A(X_n) B(w_n) ] = \E[ A(X_n) \E[ B(w_n) \mid \calF_n ]]$ and
$\E[ B(w_n) \mid \calF_n] = \E_{w_n \sim \sfN(0, I_m)}[ B(w_n) ]$, 
a property we make heavy use of in our calculations.
Another simple inequality we make use of is that
for any $q \in \N_+$ and any set of vectors $v_1, \dots, v_q$,
\begin{align*}
    \bignorm{ \sum_{i=1}^{q} v_i }^2 \leq q \sum_{i=1}^{q} \norm{v_i}^2,
\end{align*}
which follows by triangle inequality and Cauchy-Schwarz.

\paragraph{Heun update decomposition.}
To relate the Heun and EM updates, we write the Heun update as:
\begin{align}
    X_{n+1} = X_n + \bar{a}_n(X_n)\tau + b_n(X_n)\Delta W_n + E_n, \label{eq:heun_EM_decomposition}
\end{align}
where $E_n$ contains the residual terms:
\begin{align*}
    E_n := \underbrace{\frac{1}{2}\left[ a_{n+1}(Y_{n+1}) - a_n(X_n) \right]\tau}_{=: E_n^a} + \underbrace{\frac{1}{2} \left[ b_{n+1}(Y_{n+1}) - b_n(X_n) \right] \Delta W_n - \frac{\tau}{2} \sum_{i=1}^{m} \partial_x b_n^k(X_n) b^k_n(X_n)}_{=: E_n^b}.
\end{align*}
We further decompose $E_n^b$ as follows.
We first write:
\begin{align*}
    b_{n+1}(Y_{n+1}) - b_n(X_n) &= (b_{n+1}(Y_{n+1}) - b_n(Y_{n+1})) + (b_n(Y_{n+1}) - b_n(X_n)).
\end{align*}
Next, we use the Heun-regularity to expand the RHS above:
\begin{align*}
    b^k_n(Y_{n+1}) - b^k_n(X_n) &= \partial_x b^k_n(X_n) (Y_{n+1} - X_n) + R^k_n \\
    &= \partial_x b^k_n(X_n) ( a_n(X_n)\tau + b_n(X_n) \Delta W_n ) + R^k_n,
\end{align*}
where the remainder term $R^k_n$ satisfies $\norm{R_n^k} \leq K \norm{Y_{n+1} - X_n}^2$.
Hence,
\begin{align*}
    &(b_n(Y_{n+1}) - b_n(X_n)) \Delta W_n \\
    &= \sum_{k=1}^{m} (b_n^k(Y_{n+1}) - b_n^k(X_n)) \Delta W_n^k \\
    &= \tau\sum_{k=1}^{m} \partial_x b_n^k(X_n) a_n(X_n) \Delta W_n^k + \sum_{k=1}^{m} \partial_x b_n^k(X_n) b_n(X_n) \Delta W_n \Delta W_n^k + \sum_{k=1}^{m} R_n^k \Delta W_n^k
\end{align*}
Now the middle term further decomposes as:
\begin{align*}
    \sum_{k=1}^{m} \partial_x b_n^k(X_n) b_n(X_n) \Delta W_n \Delta W_n^k &= \sum_{k_1,k_2=1}^{m} \partial_x b_n^{k_1}(X_n) b_n^{k_2}(X_n) \Delta W_n^{k_1} \Delta W_n^{k_2} \\
    &= \sum_{k_1,k_2=1}^{m} \partial_x b_n^{k_1}(X_n) b_n^{k_2}(X_n) \left(\Delta W_n^{k_1} \Delta W_n^{k_2} - \tau \ind_{\{ k_1 = k_2 \}} \right) \\
    &\qquad + \tau \sum_{k=1}^{m} \partial_x b_n^k(X_n) b_n^k(X_n) .
\end{align*}
Combining these decompositions together,
\begin{align*}
    E_n^b &= \frac{1}{2} \left[ b_{n+1}(Y_{n+1}) - b_n(X_n) \right] \Delta W_n - \frac{\tau}{2} \sum_{i=1}^{m} \partial_x b_n^k(X_n) b^k_n(X_n) \\
    &= \frac{1}{2} \left[ b_{n+1}(Y_{n+1}) - b_n(Y_{n+1}) \right] \Delta W_n + \frac{1}{2} \left[ b_n(Y_{n+1}) - b_n(X_n) \right] \Delta W_n - \frac{\tau}{2} \sum_{i=1}^{m} \partial_x b_n^k(X_n) b^k_n(X_n) \\
    &=  \frac{1}{2} \left[ b_{n+1}(Y_{n+1}) - b_n(Y_{n+1}) \right] \Delta W_n + \frac{\tau}{2} \sum_{k=1}^{m} \partial_x b^k_n(X_n) a_n(X_n) \Delta W_n^k + \frac{1}{2} \sum_{k=1}^{m} R_n^k \Delta W_n^k \\
    &\qquad + \frac{1}{2} \sum_{k=1}^{m} \partial_x b_n^k(X_n) b_n(X_n) \Delta W_n \Delta W_n^k - \frac{\tau}{2} \sum_{i=1}^{m} \partial_x b_n^k(X_n) b^k_n(X_n) \\ 
    &=  \underbrace{\frac{1}{2} \left[ b_{n+1}(Y_{n+1}) - b_n(Y_{n+1}) \right] \Delta W_n}_{=: E_n^{b,1}} + \underbrace{\frac{\tau}{2} \sum_{k=1}^{m} \partial_x b^k_n(X_n) a_n(X_n) \Delta W_n^k}_{=:E_n^{b,2}} + \underbrace{\frac{1}{2} \sum_{k=1}^{m} R_n^k \Delta W_n^k}_{=:E_n^{b,3}} \\
    &\qquad + \underbrace{\frac{1}{2} \sum_{k_1,k_2=1}^{m} \partial_x b_n^{k_1}(X_n) b_n^{k_2}(X_n) \left(\Delta W_n^{k_1} \Delta W_n^{k_2} - \tau \ind_{\{ k_1 = k_2 \}} \right)}_{=:E_n^{b,4}}.
\end{align*}
Thus, \eqref{eq:heun_EM_decomposition}
becomes:
\begin{align}
    X_{n+1} = X_n + \bar{a}_n(X_n)\tau + b_n(X_n)\Delta W_n + E_n^a + \sum_{\ell=1}^{4} E_n^{b,\ell}, \label{eq:heun_EM_decomposition_full}
\end{align}
which serves as the starting point for what follows.

\paragraph{Second moment bounds on error terms.}
Our next step is to bound the second moment of all the error terms separately. Here we make heavy use of the Heun-regularity conditions.

\emph{Bound on $\E\norm{E_n^a}^2$:}
We write:
\begin{align*}
    \norm{a_{n+1}(Y_{n+1}) - a_n(X_n)} &= \norm{(a_{n+1}(Y_{n+1}) - a_{n+1}(X_n)) + (a_{n+1}(X_n) - a_n(X_n))} \\
    &\leq \norm{ a_{n+1}(Y_{n+1}) - a_{n+1}(X_n) } + \norm{ a_{n+1}(X_n) - a_n(X_n) } \\
    &\leq K \norm{Y_{n+1} - X_n} + K(1 + \norm{X_n}) \sqrt{\tau} \\
    &\leq K(\norm{a_n(X_n)} \tau + \norm{b_n(X_n)} \sqrt{\tau} \norm{w_n}) + K (1 + \norm{X_n})\sqrt{\tau} \\
    &\leq K^2( \tau + \sqrt{\tau} \norm{w_n} ) + K (1 + \norm{X_n})\sqrt{\tau}.
\end{align*}
Hence,
\begin{align*}
    \E\norm{a_{n+1}(Y_{n+1}) - a_n(X_n)}^2 \lesssim  K^4 (\tau^2 + \tau m) + K^2(1 + \E\norm{X_n}^2) \tau 
    \lesssim_{K,m} (1+\E\norm{X_n}^2) \tau.
\end{align*}
which implies:
\begin{align*}
    \E\norm{E_n^a}^2 = \frac{\tau^2}{4} \E\norm{a_{n+1}(Y_{n+1}) - a_n(X_n)}^2 \lesssim_{K,m} (1+\E\norm{X_n}^2) \tau^3.
\end{align*}

\emph{Bound on $\E\norm{E_n^{b,1}}^2$:}
We have:
\begin{align*}
    \E\norm{E_n^{b,1}}^2 &= \frac{1}{4} \E \bignorm{ \left[b_{n+1}(Y_{n+1}) - b_n(Y_{n+1})\right] \Delta W_n }^2 \\
    &\lesssim K^2 \tau^3 \E[ (1+\norm{Y_{n+1}}^2) \norm{w_n}^2 ] \\
    &\lesssim K^2 \tau^3 \E[ (1 +\norm{X_n}^2 + \norm{a_n(X_n)}^2 \tau^2 + \opnorm{b_n(X_n)}^2 \norm{w_n}^2 \tau)  \norm{w_n}^2  ] \\
    &\lesssim K^4 \tau^3 \E[ (1 + \norm{X_n}^2 + \norm{w_n}^2) \norm{w_n}^2 ] \\
    &\lesssim_{K,m} (1 + \E\norm{X_n}^2) \tau^3.
\end{align*}

\emph{Bound on $\E\norm{E_n^{b,2}}^2$:}
We have:
\begin{align*}
    \E\norm{E_n^{b,2}}^2 &= \frac{\tau^2}{4}\E \bignorm{ \sum_{k=1}^{m} \partial_x b^k_n(X_n) a_n(X_n) \Delta W_n^k }^2 
    = \frac{\tau^3}{4} \sum_{k=1}^{m} \E \norm{ \partial_x b_n^k(X_n) a_n(X_n) }^2 \\
    &\leq \frac{\tau^3}{4} \sum_{k=1}^{m} \E \opnorm{ \partial_x b_n^k(X_n)}^2 \norm{a_n(X_n)}^2 
    \lesssim_{K,m} \tau^3.
\end{align*}

\emph{Bound on $\E\norm{E_n^{b,3}}^2$:}
Recall that the residual $R_n^k$ satisfies
$\norm{R_n^k} \leq K \norm{Y_{n+1} - X_n}^2$.
We have:
\begin{align*}
    \E\norm{E_n^{b,3}}^2 &= \frac{1}{4} \E\bignorm{ \sum_{k=1}^{m} R_n^k \Delta W_n^k }^2 \\
    &\lesssim \tau m \sum_{k=1}^{m} \E[ \norm{R_n^k}^2 \abs{w_n^k}^2 ] \\
    &\lesssim \tau K^2 m \sum_{k=1}^{m} \E[ \norm{Y_{n+1} - X_n}^4 \abs{w_n^k}^2 ]  \\
    &\lesssim \tau K^2 m \sum_{k=1}^{m} \E\left[ (  \norm{a_n(X_n)}\tau + \opnorm{b_n(X_n)} \norm{w_n} \sqrt{\tau} )^4   \abs{w_n^k}^2 \right] \\
    &\lesssim_{K,m} \tau^3.
\end{align*}

\emph{Bound on $\E\norm{E_n^{b,4}}^2$:}
We have:
\begin{align*}
    \E\norm{E_n^{b,4}}^2 &= \frac{\tau^2}{4} \E \bignorm{ \sum_{k_1,k_2=1}^{m} \partial_x b_n^{k_1}(X_n) b_n^{k_2}(X_n) \left( w_n^{k_1} w_n^{k_2} - \ind_{\{k_1=k_2\}} \right) }^2 
    \lesssim_{K,m} \tau^2.
\end{align*}

\paragraph{Second moment bounds on the Heun process.}

We now use \eqref{eq:heun_EM_decomposition_full}
to write for any $n \in [N]$:
\begin{align}
    \E\norm{X_n}^2 &= \E\bignorm{X_0 + \sum_{i=0}^{n-1} \bar{a}_i(X_i) \tau + \sum_{i=0}^{n-1} b_i(X_i) \Delta W_i + \sum_{i=0}^{n-1} \left( E_i^a + \sum_{\ell=1}^{4} E_i^{b,\ell} \right) }^2 \nonumber \\
    &\lesssim \E\norm{X_0}^2 + \tau^2 \E\bignorm{\sum_{i=0}^{n-1} \bar{a}_i(X_i)}^2 + \E \bignorm{\sum_{i=0}^{n-1} b_i(X_i) \Delta W_i }^2 + 
    \E \bignorm{ \sum_{i=0}^{n-1} E_i^a }^2 + \sum_{\ell=1}^{4} \E \bignorm{ \sum_{i=0}^{n-1} E_i^{b,\ell} }^2 . \label{eq:heun_EM_decomposition_unrolled}
\end{align}
We now focus on bounding these second moments,
using the fact that for $n \in [N]$, we have $\tau n \leq \tau N = \tau \floor{T/\tau} \leq T$.
First, we have
$\norm{a_i(X_i)} \leq K + \frac{K^2 m}{2}$ and hence
\begin{align*}
     \tau^2 \E\bignorm{\sum_{i=0}^{n-1} \bar{a}_i(X_i)}^2 \lesssim \tau^2 n \sum_{i=0}^{n-1} \E \norm{\bar{a}_i(X_i)}^2 \lesssim_{K,m} (n \tau)^2 \lesssim_{K,m,T} 1.
\end{align*}
Next, since $\{ b_i(X_i) \Delta W_i \}_{i}$ forms a martingale difference sequence (MDS),
\begin{align*}
    \E \bignorm{\sum_{i=0}^{n-1} b_i(X_i) \Delta W_i }^2 = \sum_{i=0}^{n-1} \E\norm{ b_i(X_i) \Delta W_i}^2  = \tau \sum_{i=0}^{n-1} \E \norm{b_i(X_i)}_F^2 \lesssim_{K,m} n\tau \lesssim_{K,m,T} 1.
\end{align*}
Next, we have the following bound using the second moment computations for the error terms:
\begin{align*}
    \E\bignorm{\sum_{i=0}^{n-1} E_i^a}^2 + \E\bignorm{\sum_{i=0}^{n-1} E_i^{b,1}}^2 &\lesssim n \sum_{i=0}^{n-1} (\E\norm{E_i^a}^2 +\E\norm{E_i^{b,1}}^2) \lesssim_{K,m} n \sum_{i=0}^{n-1} (1+\E\norm{X_i}^2)\tau^3 \\
    &\lesssim_{K,m} (n\tau)^2 \tau + n \tau^3 \sum_{i=0}^{n-1} \E \norm{X_i}^2 
    \lesssim_{K,m,T} \tau + \tau^2 \sum_{i=0}^{n-1} \E\norm{X_i}^2, \\
    \E\bignorm{\sum_{i=0}^{n-1} E_i^{b,2}}^2 + \E\bignorm{\sum_{i=0}^{n-1} E_i^{b,3}}^2 &\lesssim n \sum_{i=0}^{n-1} (\E\norm{E_i^{b,2}}^2 +\E\norm{E_i^{b,3}}^2) \lesssim_{K,m} (n\tau)^2 \tau \lesssim_{K,m,T} \tau.
\end{align*}
Furthermore, since $\{ E_i^{b,4} \}_{i}$ is an MDS,
\begin{align*}
    \E\bignorm{ \sum_{i=0}^{n-1} E_i^{b,4} }^2 = \sum_{i=0}^{n-1} \E \norm{E_i^{b,4}}^2 \lesssim_{K,m} (n\tau)\tau \lesssim_{K,m,T} \tau.
\end{align*}
Combining these bounds together in \eqref{eq:heun_EM_decomposition_unrolled}, we obtain:
\begin{align*}
    \E\norm{X_n}^2 \lesssim_{K,m,T} 1 + \tau^2 \sum_{i=0}^{n-1} \E \norm{X_i}^2.
\end{align*}
By the discrete Gronwall lemma (\Cref{prop:discrete_gronwall}), we have:
\begin{align*}
    \E \norm{X_n}^2 &\leq C_{K,m,T} \exp( n \tau^2 C'_{K,m,T} ) 
    \leq C_{K,m,T} \exp(\tau C''_{K,m,T} ) 
    \lesssim_{K,m,T} 1.
\end{align*}
Hence, we have shown that:
\begin{align*}
    \max_{n \in \{0, \dots, N\}} \E\norm{X_n}^2 \lesssim_{K,m,T} 1.
\end{align*}

\paragraph{Final result.}
Our goal now is to estimate $\E[ \Delta_n^2 ]$ for $\Delta_n := \max_{i \in [n]} \norm{\delta_i}$, where $\delta_i := \bar{X}_i - X_i$.
We start with:
\begin{align*}
    \delta_{n+1} &= \bar{X}_{n+1} - X_{n+1} \\
    &= \delta_n + \left[ \bar{a}_n(\bar{X}_n) - \bar{a}_n(X_n) \right] \tau + \left[ b_n(\bar{X}_n) - b_n(X_n) \right] \Delta W_n - E_n.
\end{align*}
Hence for $n \in [N]$,
\begin{align*}
    \norm{\delta_n}^2 &=  \bignorm{\tau \sum_{i=0}^{n-1} \left[ \bar{a}_i(\bar{X}_i) - \bar{a}_i(X_i) \right] + \sum_{i=0}^{n-1} \left[ b_i(\bar{X}_i) - b_i(X_i) \right] \Delta W_i - \sum_{i=0}^{n-1} E_i }^2 \\
    &\lesssim \tau^2 \bignorm{ \sum_{i=0}^{n-1}\left[ \bar{a}_i(\bar{X}_i) - \bar{a}_i(X_i) \right]  }^2 + \bignorm{ \sum_{i=0}^{n-1} \left[ b_i(\bar{X}_i) - b_i(X_i) \right] \Delta W_i  }^2 + \bignorm{ \sum_{i=0}^{n-1} E_i }^2 \\
    &\lesssim_{K} \tau^2 n \sum_{i=0}^{n-1} \norm{\delta_i}^2 + \bignorm{ \sum_{i=0}^{n-1} \left[ b_i(\bar{X}_i) - b_i(X_i) \right] \Delta W_i }^2 + n \sum_{i=0}^{n-1} \left( \norm{E_i^a}^2 + \sum_{\ell=1}^{3} \norm{E_i^{b,\ell}}^2 \right) + \bignorm{\sum_{i=0}^{n-1} E_i^{b,4}}^2.
\end{align*}
Therefore,
\begin{align}
    \Delta_n^2 = \max_{k \in [n]} \norm{\delta_k}^2 &\lesssim_K \tau^2 n \sum_{i=0}^{n-1} \norm{\delta_i}^2 + n \sum_{i=0}^{n-1} \left( \norm{E_i^a}^2 + \sum_{\ell=1}^{3} \norm{E_i^{b,\ell}}^2 \right) \nonumber \\
    &\qquad + \max_{k \in [n]} \bignorm{ \sum_{i=0}^{k-1} \left[ b_i(\bar{X}_i) - b_i(X_i) \right] \Delta W_i }^2 + \max_{k \in [n]} \bignorm{\sum_{i=0}^{k-1} E_i^{b,4}}^2 \nonumber \\
    &\lesssim_K \tau^2 n \sum_{i=0}^{n-1} \Delta_i^2 + n \sum_{i=0}^{n-1} \left( \norm{E_i^a}^2 + \sum_{\ell=1}^{3} \norm{E_i^{b,\ell}}^2 \right) \nonumber \\
    &\qquad + \max_{k \in [n]} \bignorm{ \sum_{i=0}^{k-1} \left[ b_i(\bar{X}_i) - b_i(X_i) \right] \Delta W_i }^2 + \max_{k \in [n]} \bignorm{\sum_{i=0}^{k-1} E_i^{b,4}}^2. \label{eq:maximal_delta_unroll}
\end{align}
Now, using nearly identical arguments as in the second moment calculation of the error terms, in addition to the uniform bound on $\E\norm{X_n}^2$ for $n \in \{0, \dots, N\}$, we have:
\begin{align*}
    \E\left[n \sum_{i=0}^{n-1} \left( \norm{E_i^a}^2 + \sum_{\ell=1}^{3} \norm{E_i^{b,\ell}}^2 \right) \right] \lesssim_{K,m,T} \tau.
\end{align*}
On the other hand, 
since both $\{ [ b_i(\bar{X}_i) - b_i(X_i) ] \Delta W_i \}_i$
and $\{ E_i^{b,4} \}_i$ are both martingale difference sequences, 
using Doob's maximal inequality (\Cref{prop:doob_maximal_inequality}),
\begin{align*}
    &\E \max_{k \in [n]} \bignorm{ \sum_{i=0}^{k-1} \left[ b_i(\bar{X}_i) - b_i(X_i) \right] \Delta W_i }^2 + \E \max_{k \in [n]} \bignorm{\sum_{i=0}^{k-1} E_i^{b,4}}^2 \\
    &\lesssim \E\bignorm{ \sum_{i=0}^{n-1} \left[ b_i(\bar{X}_i) - b_i(X_i) \right] \Delta W_i }^2 + \E\bignorm{\sum_{i=0}^{n-1} E_i^{b,4}}^2 \\
    &= \sum_{i=0}^{n-1} \E\bignorm{ \left[ b_i(\bar{X}_i) - b_i(X_i) \right] \Delta W_i }^2 + \sum_{i=0}^{n-1} \E \norm{E_i^{b,4}}^2 \\
    &\lesssim_{K,m,T}  \tau \sum_{i=0}^{n-1} \E\norm{\delta_i}^2 + \tau 
    \lesssim_{K,m,T} \tau \sum_{i=0}^{n-1} \E[ \Delta_i^2 ] + \tau.
\end{align*}
Hence, taking expectation in \eqref{eq:maximal_delta_unroll} and 
combining the previous second moment estimates, we have:
\begin{align*}
    \E[ \Delta_n^2 ] \lesssim_{K,m,T} \tau \sum_{i=0}^{n-1} \E[ \Delta_i^2 ] + \tau.
\end{align*}
From the discrete Gronwall inequality (\Cref{prop:discrete_gronwall}),
\begin{align*}
    \E[ \Delta_n^2 ] \leq C_{K,m,T} \tau \exp( n \tau C'_{K,m,T} ) \lesssim_{K,m,T} \tau,
\end{align*}
which completes the proof.
\end{proof}

We can now complete the proof of \Cref{corollary:heun_strong_convergence}.
\begin{proof}[Proof of \Cref{corollary:heun_strong_convergence}]
We have:
\begin{align*}
    X_{t_n} - \hat{X}_n &\stackrel{(a)}{=} \bar{X}_{t_n} - \hat{X}_n = (\bar{X}_{t_n} - \hat{\bar{X}}_n) + (\hat{\bar{X}}_n - \hat{X}_n),
\end{align*}
where (a) holds since the
Stratonovich SDE $(X_t)$ \eqref{eq:SDE_a_b_pair} and
the It{\^{o}} SDE $(\bar{X}_t)$ \eqref{eq:ito_SDE_a_b_pair}
are identical for a.e.\ $(\omega, t)$.
Hence we have:
\begin{align*}
    \E\max_{n \in \{0, \dots, N\}} \norm{ X_{t_n} - \hat{X}_n }^2 \lesssim \E\max_{n \in \{0, \dots, N\}} \norm{ \bar{X}_{t_n} - \hat{\bar{X}}_{n} }^2 + \E\max_{n \in \{0, \dots N \}} \norm{ \hat{\bar{X}}_n - \hat{X}_n }^2.
\end{align*}
Next, since the pair $(a, b)$ is Heun-regular by assumption,
then the pair $(\bar{a}, b)$ is EM-regular by 
\Cref{prop:heun_implies_EM_regular}.
Hence by \Cref{thm:EM_strong_order_one_half},
we have the EM discretization $\{ \hat{\bar{X}}_n \}_n$ is order $1/2$ strongly convergent to
the SDE $(\bar{X}_t)_t$, i.e., 
$\E\max_{n \in \{0, \dots, N\}} \norm{ \bar{X}_{t_n} - \hat{\bar{X}}_{n} }^2 \leq C \tau$.
Furthermore, by 
\Cref{lemma:heun_strong_convergence_discrete}
we have 
$\E\max_{n \in \{0, \dots N \}} \norm{ \hat{\bar{X}}_n - \hat{X}_n }^2 \leq C' \tau$ as well. Note in both cases, $C,C'$ do not depend on $\tau$, and hence the proof is complete.
\end{proof}

\section{Experimental Details}
\label{sec:appendix:experiments}

We use each algorithm to train an $8$-layer neural network with $64$ neurons per layer and \texttt{swish} activation~\cite{ramachandran2017searching} to model the solution $u(x, t)$ of a PDE.
The boundary condition is enforced by adding a boundary condition penalty 
$\E_{x \sim \mu'}[(u_\theta(x, T) - \phi(x)^2] + \E[ \norm{ \nabla u_\theta(x, T) - \nabla \phi(x)}^2 ]$ involving both the zero-th and first-order values of $\phi$~\cite{raissi2024forward},
where the distribution $\mu'$ is taken over
each method's approximation of
the distribution of $X_T$.
Additionally, following state-of-the-art PINN architectures practices~\cite{wang2023expert},
Fourier embeddings~\cite{tancik2020fourierfeatures} with a $256$ embedding dimension and skip connections on odd layers are used. 
We use a trajectory batch size of $64$, 
translating to $64$ realizations of the underlying Brownian motions. Additionally, we utilize a sub-sampling batch size of 1024 for the batched algorithm runs.
We use the Adam optimizer with a multi-step learning rate schedule~\cite{raissi2024forward} of $10^{-3}, 10^{-4}$, and $10^{-5}$ at 50k, 75k, and 100k iterations, respectively.
All models are trained on a single NVIDIA A100 GPU node in our internal cluster, using the \texttt{jax} library~\cite{jax2018github}.

\subsection{PDE Test Cases}
\label{sec:appendix:experiments:PDEs}
\paragraph{Hamilton-Jacobi-Bellman (HJB) Equation.}
First, we consider the following Hamilton-Jacobi-Bellman (HJB) equation studied in \cite{raissi2024forward}:
\begin{align*}
    \partial_tu(x,t)=-\mathrm{Tr}[\nabla^2u(x,t)]+\|\nabla u(x,t)\|^2, \quad x \in \R^d, \,\, t \in [0, T].
\end{align*}
For the terminal condition  $u(x,T)=\phi(x)=\ln{(.5(1+\|x\|^2))}$, the analytical solution is given as
\begin{align}
    u(x,t)=-\ln{\left(\E\left[ \exp{ \left(-g(x+\sqrt{2} B_{T-t})\right) }\right]\right)}.
    \label{eq:hjbsolution}
\end{align}
The HJB PDE is related to the forward-backward stochastic differential equation of the form:
\begin{align*}
    \rmd X_t&=\sigma \rmd B_t,\quad t \in [0,T], \\
    \rmd Y_t&=\|Z_t\|^2 \rmd t+\sigma Z_t^\T \rmd B_t,\quad t\in [0,T),
\end{align*}
where $T=1$, $\sigma=\sqrt{2}$, $X_0=0$, and $Y_T = \phi(X_T)$. Additionally, the Stratonovich SDE is given as:
\begin{align*}
        \rmd X_t&=\sigma \circ\rmd B_t,\quad t \in [0, T], \\
    \rmd Y_t&=\left[\|Z_t\|^2-\frac{1}{2} \mathrm{Tr}\left[ \sigma^2\nabla^2u(X_t,t)\right] \right]\rmd t+\sigma Z_t^\T \circ \rmd B_t,
\end{align*}
In our experiments, in order to compute the analytical solution \eqref{eq:hjbsolution}, we approximate it using $10^5$ Monte-Carlo samples.

\paragraph{Black-Scholes-Barenblatt (BSB) Equation.}
Next, we consider the 100D Black-Scholes-Barenblatt (BSB) equation from \cite{raissi2024forward} of the form
\begin{align*}
    \partial_tu(x,t)=-\frac{1}{2}\mathrm{Tr}[\sigma^2 \diag(x^2) \nabla^2u(x,t)]+r\left(u(x,t)-\nabla u(x,t)^\T x\right), \quad x \in \R^{d}, \,\, t \in [0, T],
\end{align*}
where $x^2$ is understood to be coordinate-wise, and
$\diag(v)$ is a diagonal matrix with $\diag(v)_i = v_i$.
Given the terminal condition $u(x,T)=\phi(x)=\|x\|^2$,
the explicit solution to this PDE is 
\begin{align*}
    u(x,t)=\mathrm{exp}\left((r+\sigma^2)(T-t)\right)\phi(x).
\end{align*}
The BSB PDE is related to the following FBSDE
\begin{align*}
    \rmd X_t&=\sigma \diag(X_t) \rmd B_t,\quad t \in [0,T], \\
    \rmd Y_t&=r\left(Y_t-Z_t^\T X_t\right)\rmd t+\sigma Z_t^\T \diag(X_t) \rmd B_t,\quad t\in [0,T),
\end{align*}
where $T=1$, $\sigma=.4$, $r=.05$, $X_0=(1,.5,1.,5,\dots,1,.5)$, and $Y_T=\phi(X_T)$. The equivalent Stratonovich SDE is given as:
\begin{align*}
    \rmd X_t&=\frac{\sigma^2}{2}X_t \rmd t + \sigma \diag(X_t) \circ\rmd B_t, \\
    \rmd Y_t&=\left[r\left(Y_t-Z_t^\T X_t\right)-\frac{\sigma^2}{2}\left( 
 Z_t^\T X_t+\mathrm{Tr}\left[\mathrm{diag}(X_t^2)\nabla^2_x u(X_t,t)\right]\right)\right]\rmd t+\sigma Z_t^\T \diag(X_t) \circ \rmd B_t,
\end{align*}

\paragraph{Fully-Coupled FBSDE.}
Finally, we consider a FBSDE with \emph{coupled} forward and backwarwds dynamics adapted from Bender \& Zhang (BZ) \cite{bender2008coupledfbsdes}:
\begin{align*}
    \rmd X_t&=\sigma Y_t \rmd B_t,\quad t \in [0,T], \\
    \rmd Y_t&=\left[-rY_t +\frac{1}{2}e^{-3r(T-t)}\sigma^2\left(D\sum_{j=0}^d\sin(X_{j,t})\right)^3 \right] \rmd t+Z_t^\T \rmd B_t,\quad t\in [0,T),
\end{align*}
where $X_{j,t}$ denotes the $j$-th coordinate of $X_t \in \R^d$.
Due to the dependence of the forward process $(X_t)$ on $(Y_t)$, this set of coupled FBSDE does not fit into the mathematical formulation set forth in
\eqref{eq:forward_SDE} and \eqref{eq:backward_SDE}.
Nevertheless, we can still apply the BSDE methods
described at the beginning of \Cref{sec:experiments}
by initializing $Y_0 = u_\theta(x, 0)$
and jointly integrating $(X_t, Y_t)$. 
We set $T=1$, $r=.1$, $\sigma=.3$, $D=.1$, $X_0=(\pi/2,\pi/2,\dots,\pi/2)$, and $Y_T(X_T)=\phi(X_T)=D\sum_{j=1}^d\sin(X_{j,T})$. The above FBSDE is induced from the following PDE
\begin{align*}
    \partial_tu(x,t)=-\frac{1}{2}\sigma^2u(x,t)^2\nabla^2u(x,t)+ru(x,t)-\frac{1}{2}e^{-3r(T-t)}\sigma^2\left(D\sum_{j=0}^d\sin(x_j)\right)^2
\end{align*}
with the analytical solution
$u(x,t)=e^{-r(T-t)}D\sum_{j=0}^d\sin(x_j)$.
Additionally, the equivalent Stratonovich SDE is given as:
\begin{align*}
    \rmd X_t&=\frac{\sigma^2}{2}Z_tY_t + \sigma Y_t \circ \rmd B_t, \\
    \rmd Y_t&=\left[ -rY_t +\frac{1}{2}e^{-3r(T-t)}\sigma^2\left(D\sum_{j=0}^d\sin(X_{j,t})\right)^3 - \frac{\sigma^2}{2}\left(Z_t^2Y_t+\mathrm{Tr}\left[ Y_t^2\nabla_x^2u(X_t,t)\right]\right) \right] \rmd t+Z_t^\T \circ \rmd B_t,
\end{align*}

\begin{table}[!ht]
    \centering
    \begin{tabular}{c|c|c|c|c}
         &  \multicolumn{2}{c|}{100D HJB} & \multicolumn{2}{c}{100D BSB}\\
         \textbf{Method} & \textbf{\texttt{float32}} & \textbf{\texttt{float64}} & \textbf{\texttt{float32}} & \textbf{\texttt{float64}} \\ \hline\hline
         PINNs & $0.1281 \pm .0136$ & $0.1281 \pm .0171$ & $2.9648\pm.8652$ & $1.5066\pm.2349$ \\
         FS-PINNs & $0.0838 \pm .0170$ & $0.0702 \pm .0074$ & $0.0602 \pm .0150$ & $0.0497 \pm .0031$ \\
         EM-BSDE & $0.3776 \pm .0365$ & $0.3820 \pm .0219$ & $0.2451 \pm .0160$ & $0.3735 \pm .0470$ \\ 
         EM-BSDE (NR) & $0.4459 \pm .0410$ & $0.4676 \pm .0153$ & $0.1648 \pm .0143$ & $0.1855 \pm .0078$ \\
         Heun-BSDE & $0.0675 \pm .0053$ & $0.0529 \pm .0029$ & $0.4587 \pm .0261$ & $0.0535 \pm .0113$
    \end{tabular}
    \captionsetup{skip=5pt}
    \caption{\texttt{float32} vs \texttt{float64} Performance in 100D HJB/BSB}
    \label{tab:f32vsf64}
\end{table}

\subsection{Sensitivity to Floating Point Precision}
\label{sec:appendix:experiments:floating_point}

In BSDE-based losses, floating point errors can accumulate through out integration of the SDEs, leading to poor performance on the trained model. As seen in \Cref{tab:f32vsf64}, the floating point error is especially apparent in the Heun loss on the 100D BSB case where the performance of the model is improved by a factor of 10 between \texttt{float32} and \texttt{float64}. In addition, performance improvements were observed for the PINNs and FS-PINNs models as well. It is also noted that the EM-BSDE models performed slightly worse at a \texttt{float64}, which may be attributed to the bias term present in its loss. 
Overall, floating point sensitivity is more apparent in the BSB problem than the HJB problem. We attribute this to the non-trivial forward trajectory in the BSB problem.
We leave more numerically stable implementations of the Heun solver in \texttt{float32}, such as 
PDE non-dimensionalization~\cite{wang2023expert} and
the use of the reversible Heun solver from \cite{kidger2021neuralsdes}, to future work.

\subsection{Dimensionality Study}
Additionally, we demonstrate scalability of the algorithms by re-running the HJB problem at various dimensions and plotting the RL2 error for each method. \Cref{fig:dimresults} shows EM-BSDE underperforming both FS--PINNs and Heun-BSDE across all dimensions tested. Additionally, trajectory-based methods scale more effectively to high-dimensional problems with PINNs-based methods showing a slight advantage in lower dimensions. 
\begin{figure}[!ht]
    \begin{minipage}[t]{.48\linewidth}
    \centering
    \includegraphics[width=0.98\linewidth]{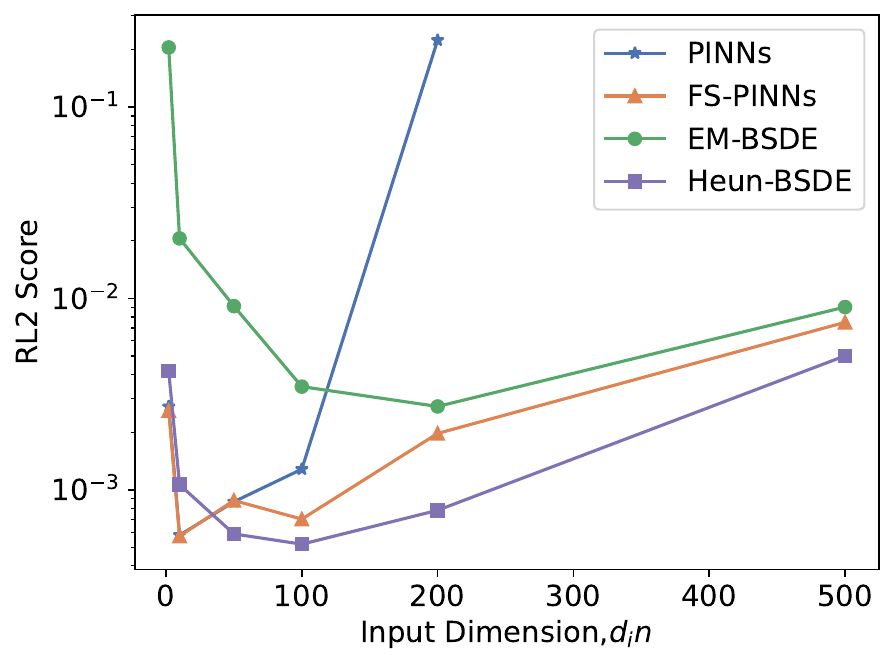}
    \caption{A plot of RL2 performance for the HJB problem at various dimensions $d_{in}=\{2,10,50,100,200,500\}$}
    \label{fig:dimresults}
    \end{minipage}
    \hfill
    \begin{minipage}[t]{.48\linewidth}
    \centering
    \includegraphics[width=\linewidth]{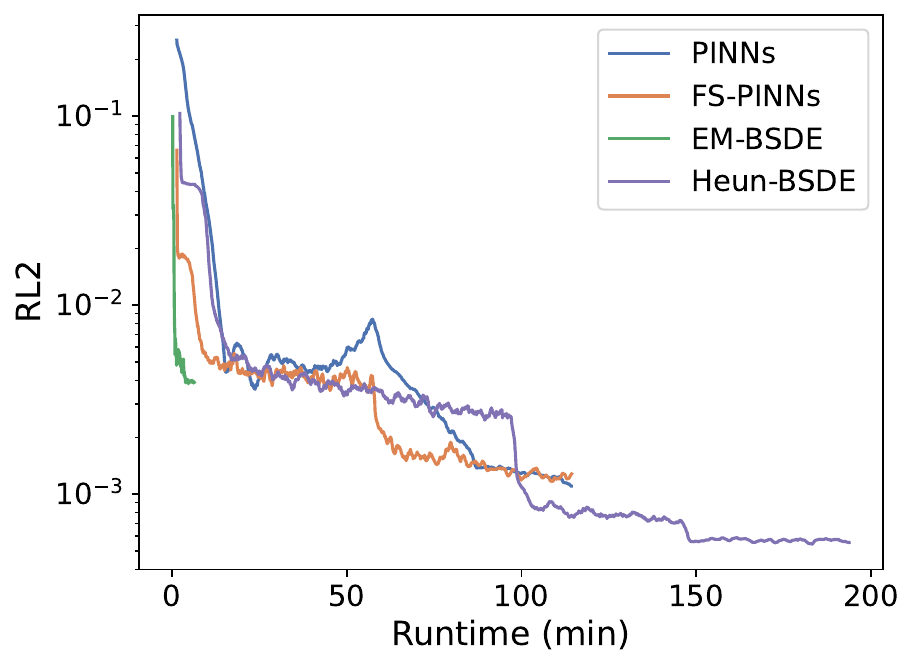}
    \caption{A plot of RL2 performance vs runtime at fixed iteration steps}
    \end{minipage}
\end{figure}

\subsection{Improved Algorithm Schemes}
\label{sec:appendix:experiments:algorithms}
We describe the algorithmic differences between the full roll-out algorithm and the batched sub-sampling variation for the general BSDE loss; a similar algorithm is used FS-PINNs, replacing self-regularization loss with the PINNs loss.

\begin{algorithm}[hbt]
    \caption{Full BSDE Loss Algorithm}
    \label{alg:bsdeoriginal}
    \begin{algorithmic}[1]
        \small
        \Input Neural network $\hat u_\theta(x,t)$, parameters $\theta$, %
        terminal function $\phi$, time step $\Delta t$, trajectory length $N$.
        \Output Self-consistency loss $\mathcal L_{\mathrm{sr}}$ and terminal loss $\mathcal L_{\phi}$
        \State Sample initial state: $(x[0],t[0])=(x_0,0)$, with $x_0 \sim \mu$
        \State Evaluate network at initial state: $(u,u_x)=(\hat u_\theta(x[0],t[0]),\nabla_x \hat u_\theta(x[0],t[0]))$
        \State Initialize self-consistency losses: $\ell_{\mathrm{step}}[0:N-1]\gets 0$
        \For{$i=0,\dots,N-1$}
            \State Sample Brownian noise: $\xi\sim\mathsf{N}(0,I_d)$
            \State \verb|/* Forward SDE rollout */|
            \State Propagate forward state: $x[i+1]=x[i]+\Delta x$ %
            \If{no resetting}
            \State \verb|/* Use either EM or Heun integration (NR) */|
            \State Propagate backward state:
            $y[i+1]=y[i]+\Delta y$ %
            \Else
            \State \verb|/* Use either EM or Heun integration */|
            \State Propagate backward state:
            $y[i+1]=u+\Delta y$ %
            \EndIf
            \State Propagate time: $t[i+1]=t[i]+\Delta t$
            \State Evaluate network at new state: $(u,u_x)=(\hat u(x[i+1],t[i+1]),\nabla_x\hat u(x[i+1],t[i+1]))$
    
            \State Record local residual loss: $\ell_{\mathrm{step}}[i]=(u-y[i+1])^2$
        \EndFor
        \State Compute self-consistency loss: $\mathcal{L}_{\mathrm{sr}}=\sum_{i=0}^{N-1}\ell_{\mathrm{step}}[i]$
        \State Compute terminal loss: $\mathcal{L}_\phi=(u-\phi)^2+\norm{u_x-\nabla_x \phi}^2$
        \State \Return $(\mathcal L_{\mathrm{sr}},\mathcal L_{\phi})$
    \end{algorithmic}
\end{algorithm}

\begin{algorithm}[hbt]
    \caption{Batched, Sub-sampling BSDE Loss Algorithm (Full description of \Cref{alg:bsdebatchedsimple})}
    \label{alg:bsdebatched}
    \begin{algorithmic}[1]
        \small
        \Input Neural etwork $\hat u_\theta(x,t)$, parameters $\theta$, %
        terminal function $\phi$, time step $\Delta t$, trajectory length $N$, 
        evaluation batch $B$.
        \Output Self-consistency loss $\mathcal L_{\mathrm{sr}}$ and terminal loss $\mathcal L_{\phi}$
        \State Sample initial state: $(x[0],t[0])=(x_0,0)$, with $x_0 \sim \mu$
        \State Sample Brownian noise: $\xi[0:N-1]\sim \mathsf{N}(0,I_d)$
        \State Evaluate network at initial state: $(u,u_x)=(\hat u_\theta(x[0],t[0]),\nabla_x\hat u_\theta(x[0],t[0]))$
        \State \verb|/* Forward SDE rollout */|
        \For{$i=0,\dots,N-1$} %
            \State \verb|/* Use either EM or Heun integration */|
            \State Propagate forward state: $x[i+1]=x[i]+\Delta x$ %
            \State Propagate time: $t[i+1]=t[i]+\Delta t$
            \If{coupled} 
            \State Evaluate network at new state: $(u,u_x)=(\hat u_\theta(x[i+1],t[i+1]),\nabla_x\hat u_\theta(x[i+1],t[i+1]))$
            \EndIf
        \EndFor
            \State Stop gradient: $x[0:N]=\mathrm{SG}(x[0:N])$
            \State Separate states: $(x_i,x_{i+1},t_i,t_{i+1})=(x[0:N-1],x[1:N],t[0:N-1],t[1:N])$  %
            \State \verb|/* Same permutations */|
            \State Random sub-sampling: $(x_i,x_{i+1},t_i,t_{i+1})=\text{perm}(x_i,x_{i+1},t_i,t_{i+1})[0:B-1]$ %
            \State Evaluate network at $i$ points: $(u_i,u_{i_x})=(\hat u_\theta(x_i,t_i),\nabla_x\hat u_\theta(x_i,t_i))$
            \State \verb|/* Use either EM or Heun Integration */|
            \State Compute backward state at batched point: $y_{i+1}=u_i+\Delta y$ %
            \State Evaluate network at $i+1$ points: $u_{i+1}=\hat u_\theta(x_{i+1},t_{i+1})$
            \State \verb|/* Use PINNs loss instead for FS-PINNs */|
            \State Compute self-consistency loss: $\mathcal{L}_{\mathrm{sr}}=\frac NB \sum_{i=0}^{B-1}\left(u_{i+1}-y_{i+1}\right)^2$ 
            \State Evaluate network at $T$: $(u,u_x)=(\hat u_\theta(x[N],t[N]),\nabla_x\hat u_\theta(x[N],t[N]))$
            \State Compute terminal loss: $\mathcal{L}_{\phi}=(u+\phi)^2+\norm{u_x-\nabla_x\phi}^2$
            \State \Return $(\mathcal L_{\mathrm{sr}},\mathcal L_{\phi})$
    \end{algorithmic}
\end{algorithm}

As shown in \Cref{alg:bsdeoriginal,alg:bsdebatched}, the new variation of the loss separates the forward and backward propagation which enables random sub-sampling in the loss evaluation. At full sampling ($B=N$), the batched algorithm recovers loss and gradient values consistent with the original algorithm with proper scaling. Additionally, the stop gradient on the forward SDE has negligible effects on model performance but can improve convergence as it fixes optimization to only the backward SDE or PDE.

Note that \Cref{alg:bsdeoriginal,alg:bsdebatched} are simplified for readability and excludes some algorithmic details such as trajectory batching and loss weighting. 

\subsection{Behavior Policy Rollouts for HJB Optimal Control}
\label{sec:appendix:behavior_policy_HJB}

Suppose our control system is a deterministic control-affine system:
$\dot{x} = f(x) + g(x) u$.
For a positive definite $R$, the HJB equation for stagewise cost $c(x) + \frac{1}{2}\norm{u}^2_{R}$
and terminal cost $c_T$ 
is:
\begin{align*}
    \partial_t V + \ip{\nabla_x V}{f} + c - \frac{1}{2} \norm{g^\T \nabla V}_{R^{-1}}^2 = 0, \quad V(x, T) = c_T(x),
\end{align*}
and the optimal control induced by $V$ is $\pi_V(x, t) := - R^{-1} g^\T(x) \nabla V(x, t)$.

Now, suppose we have any \emph{rollout} policy $\pi(x, t)$,
and we consider It{\^{o}} stochastic rollouts of the form:
\begin{align*}
    \rmd X^\pi_t = [ f(X^\pi_t) + g(X^\pi_t) \pi(X^\pi_t, t) ] \rmd t + \sigma \rmd B_t.
\end{align*}
Now, the \emph{optimal} value function $V^\star(x, t)$ satisfies the following SDE:
\begin{align*}
    \rmd V^\star(X^\pi_t, t) &= \left[ \partial_t V^\star(X^\pi_t, t) + \ip{ f(X^\pi_t) + g(X^\pi_t) \pi(X^\pi_t, t)}{\nabla V^\star(X^\pi, t)} + \frac{\sigma^2}{2} \Tr( \nabla^2 V^\star(X^\pi_t, t) ) \right] \rmd t \\
    &\qquad+ \sigma \ip{\nabla V^\star(X_t^\pi, t)}{\rmd B_t} \\
    &= \left[ \frac{1}{2} \norm{g^\T \nabla V^\star }^2_{R^{-1}} - c + \ip{g \pi}{\nabla V^\star} + \frac{\sigma^2}{2} \Tr( \nabla^2 V^\star)  \right] \rmd t + \sigma \ip{\nabla V^\star}{\rmd B_t},
\end{align*}
noting that the last expression is evaluated at $(X^\pi_t, t)$.
Hence, the forward/backward It{\^{o}} SDEs for a given value function $V$
and behavior policy $\pi$ are:
\begin{subequations}
\label{eq:ito_FBSDE_behavior_HJB}
\begin{align}
    \rmd X^\pi_t &= [ f(X^\pi_t) + g(X^\pi_t) \pi(X^\pi_t, t) ] \rmd t + \sigma \rmd B_t, \\
    \rmd Y^V_t &= \left[ \frac{1}{2} \norm{g^\T \nabla V }^2_{R^{-1}} - c + \ip{g \pi}{\nabla V} + \frac{\sigma^2}{2} \Tr( \nabla^2 V)  \right] \rmd t + \sigma \ip{\nabla V}{\rmd B_t}, \label{eq:ito_BSDE_behavior_HJB}
\end{align}
\end{subequations}
noting again that the expressions in \eqref{eq:ito_BSDE_behavior_HJB} are evaluated at
$(X_t^\pi, t)$.

Similarly, we can define the forward/backward Stratonovich
SDEs as:
\begin{subequations}
\label{eq:strat_FBSDE_behavior_HJB}
\begin{align}
    \rmd X^\pi_t &= [ f(X^\pi_t) + g(X^\pi_t) \pi(X^\pi_t, t) ] \rmd t + \sigma \rmd B_t, \\
    \rmd Y^V_t &= \left[ \frac{1}{2} \norm{g^\T \nabla V }^2_{R^{-1}} - c + \ip{g \pi}{\nabla V}  \right] \rmd t + \sigma \nabla V^\T \circ \rmd B_t. \label{eq:strat_BSDE_behavior_HJB}
\end{align}
\end{subequations}
Note that the It{\^{o}} BSDE \eqref{eq:ito_BSDE_behavior_HJB}
requires explicit Hessian computation $\nabla^2 V$ while the Stratonovich BSDE \eqref{eq:strat_BSDE_behavior_HJB} does not. This holds for all first-order PDEs, such as in deterministic HJB problems.

These forward/backward SDEs can be used in conjunction
with the induced policy $\pi_V$ from the current value function 
$V$. Some care must be taken though when setting up the BSDE losses.
In particular, since both the forward SDE trajectory $(X_t^{\pi_V})_t$
and the $\pi_V(X_t^{\pi_V}, t)$ terms which appear the backward SDEs
depend implicitly on $V$, stop-gradient operators should be placed so that gradients are not back-propagated through
these values, which can destabilize training.

\subsection{Pendulum Swing Up Experiment}
\label{sec:appendix:experiments:pendulum}
In addition to the results above, we include a simple pendulum swing-up optimal control experiment inspired by~\cite{han2024nonsmooth}. Given the pendulum equations of motion,
\begin{align*}
    x=\mkmat{\theta\\ \dot\theta},\quad f(x,u)=\dot x=\mkmat{\dot\theta \\ -\frac{1}{ml^2}\left(b\dot\theta-mgl\sin\theta-u\right)},
\end{align*}
we define a optimal control objective:
\begin{align*}
    J^\star(x_0)=\min_{u(t)}\int_0^Tc(x(t),u(t))\rmd t+\Phi(x(T)),
\end{align*}
where:
\begin{align*}
    c(x,u)=\Phi(x) + ru^2, \quad
    \Phi(x)=q_1\sin^2\theta+q_1(\cos\theta-1)^2+q_2\dot\theta^2,
\end{align*}
with $q_1,q_2,r>0$.
Observe that this problem setup exactly fits the
setup in \Cref{sec:appendix:behavior_policy_HJB}, and hence
both the forward/backward It{\^{o}} and Stratonovich SDEs
in \eqref{eq:ito_FBSDE_behavior_HJB} 
and \eqref{eq:strat_FBSDE_behavior_HJB} directly apply.

\subsubsection{Pendulum Results}

\begin{table}[hbt]
    \centering
    \begin{tabular}{c|c|c|c|c}
        Metric & PINNs & FS-PINNs & EM-BSDE & Heun-BSDE \\\hline \hline
        Cost & 53.17 & 46.59 & 46.42 & 46.43 \\
        PDE Error & 2.77 & 3.38 & 78.94 & 18.6
    \end{tabular}
    \caption{Accumulated cost and average PDE error for the pendulum swing-up problem.}
    \label{tab:pendulum}
\end{table}

The results of the pendulum swing-up case are outlined in~\Cref{tab:pendulum}. 
We use the specific constants $m=1,b=0.1,l=1,g=9.8,q_1=10,q_2=1,r=1$ 
in our experiment.
It is observed that while the accumulated cost between the three trajectory-based methods remain similar, the lower PDE error on FS-PINNs and Heun-BSDE signify better learned solutions. Furthermore, in~\Cref{fig:pendulumerror}, we observe that Heun-BSDE generally has the lowest PDE error with high errors only at the discontinuities.

\begin{figure}[hbt]
    \centering
    \includegraphics[width=.75\linewidth]{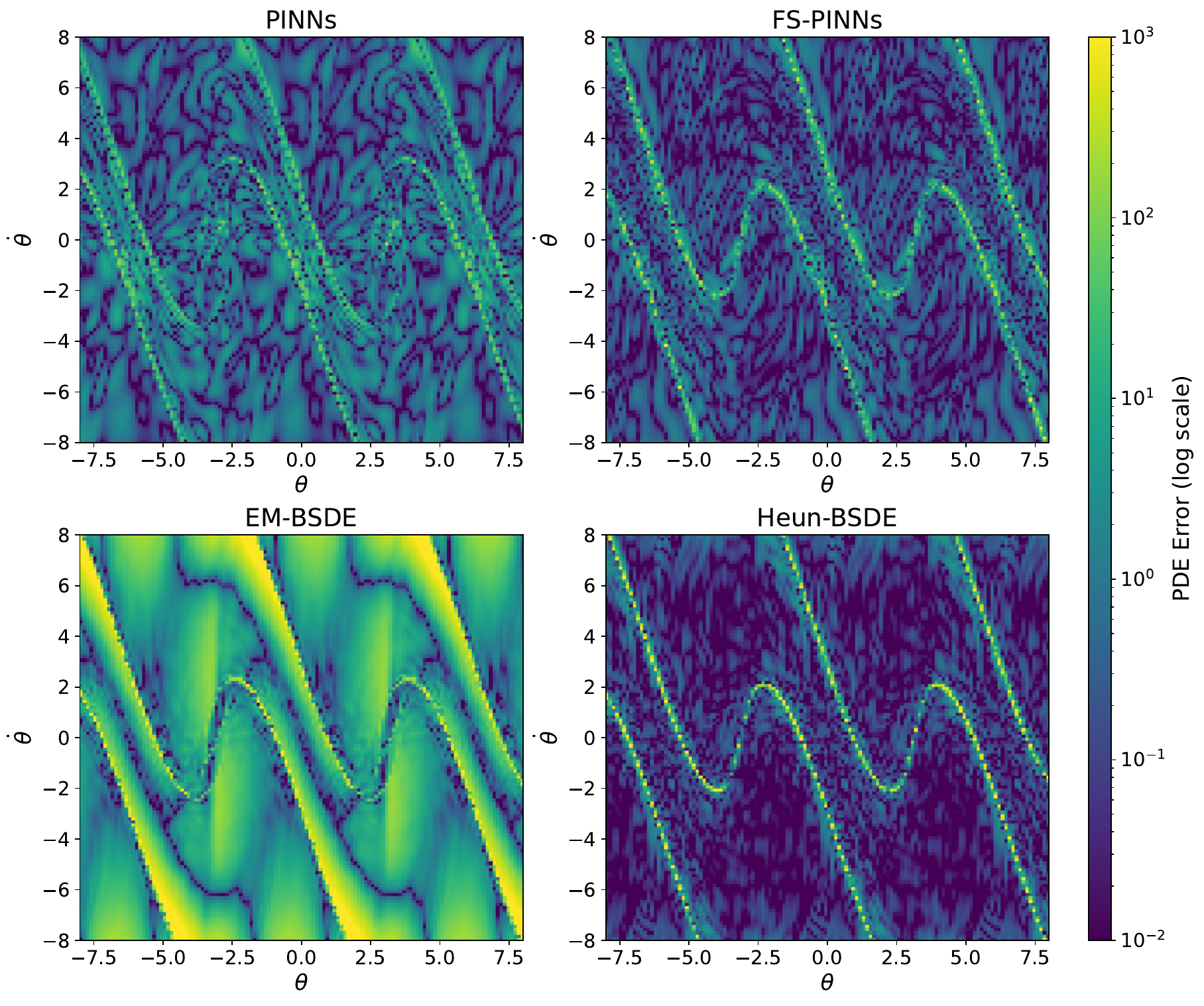}
    \caption{PDE error at $t=0$ for the pendulum swing up case.}
    \label{fig:pendulumerror}
\end{figure}

\end{document}